\newcommand{\ltwopxs}[1]{\norms{#1}_{L^2(P_X)}}
\newcommand{\ltwop}[1]{\norm{#1}_{L^2(P)}}
\title{Knowing what you know: valid and validated confidence sets
  \\ in multiclass and multilabel prediction\footnote{Research supported
  by NSF CAREER award CCF-1553086,
  ONR Young Investigator Program award N00014-19-2288, and
  NSF award HDR-1934578 (the Stanford Data Science Collaboratory)}}
\author[1]{Maxime Cauchois\footnote{Equal contribution.}}
\newcommand\CoAuthorMark{\footnotemark[\arabic{footnote}]} 
\author[1]{Suyash Gupta\protect\CoAuthorMark}
\author[1, 2]{John C. Duchi}
\affil[1]{Department of Statistics, Stanford University}
\affil[2]{Department of Electrical Engineering, Stanford University}
\affil[ ]{\texttt{\{maxcauch, suyash28, jduchi\}@stanford.edu}}
\date{}
\begin{document}

\newcommand{\paperVersion}{arxiv}  

\maketitle

\begin{abstract}

We develop conformal prediction methods for constructing valid predictive 
confidence sets in multiclass and multilabel problems without assumptions on
the data generating distribution.  A challenge here is that
typical conformal prediction methods---which give marginal validity
(coverage) guarantees---provide uneven coverage, in that they address easy
examples at the expense of essentially ignoring difficult examples.  By
leveraging ideas from quantile regression, we build methods that always
guarantee correct coverage but additionally provide (asymptotically optimal)
conditional coverage for both multiclass and multilabel prediction problems.
To address the potential challenge of exponentially large confidence sets in
multilabel prediction, we build tree-structured classifiers that efficiently
account for interactions between labels.  Our methods can be
bolted on top of any classification model---neural network, random forest,
boosted tree---to guarantee its validity.  We also provide an empirical
evaluation, simultaneously providing new validation methods,
that suggests the more robust coverage of our confidence sets.

\end{abstract}


\section{Introduction}
\label{sec:introduction}

The average accuracy of a machine-learned model by itself is insufficient to
trust the model's application; instead, we should ask for valid confidence
in its predictions.  Valid here does not mean ``valid under modeling
assumptions,'' or ``trained to predict confidence,'' but honest validity,
independent of the underlying distribution. In particular, for a supervised
learning task with inputs $x \in \mc{X}$, targets $y \in \mc{Y}$, and a
given confidence level $\alpha \in (0, 1)$, we seek confidence sets $C(x)$
such that $P(Y \in C(X)) \ge 1 - \alpha$; that is, we \emph{cover} the true
target $Y$ with a given probability $1-\alpha$. Given the growing importance
of statistical learning in real-world applications---autonomous
vehicles~\cite{KalraPa16}, skin lesion
identification~\cite{EstevaKuNoKoSwBlTh17,OakdenDuCaRe20}, loan repayment
prediction~\cite{HardtPrSr16}---such validity is essential.

The typical approach in supervised learning is to learn
a scoring function $s : \mc{X} \times \mc{Y} \to \R$ where
high scores $s(x, y)$ mean that $y$ is more likely for a given $x$.
Given such a score, a natural goal for prediction with confidence is
to compute a quantile function $q_\alpha$ satisfying
\begin{align}
  \label{eq:intro-scorequantile}
  P\left( s(x,Y) \geq q_\alpha(x) \mid X=x \right) \geq 1-\alpha,
\end{align}
where $\alpha > 0$ is some \emph{a-priori} acceptable error level.  We could
then output conditionally valid confidence sets for each
$x \in \mc{X}$ at level $1-\alpha$ by
returning
\[
\left\{ y \in \mathcal{Y} \mid s(x,y) \geq q_\alpha(x) \right\}.
\]
Unfortunately, such conditional coverage is impossible without
either vacuously loose thresholds~\cite{Vovk12} or strong modeling
assumptions~\citep{Vovk12,BarberCaRaTi19a}, but this idea forms the basis
for our approach.

To address this impossibility, \emph{conformal inference} \cite{VovkGaSh05}
resolves instead to a marginal coverage guarantee: given $n$ observations
and a desired confidence level $1 - \alpha$, conformal  methods
construct confidence sets $C(x)$ such that for a new pair $(X_{n+1},
Y_{n+1})$ from the same distribution, $Y_{n + 1} \in C(X_{n+1})$ with
probability at least $1 - \alpha$, where the probability is \emph{jointly}
over $X$ and $Y$. Conformal inference algorithms can build upon arbitrary
predictors, neural networks, random forests, kernel methods and treat them
as black boxes, ``conformalizing'' them post-hoc.

This distribution-free coverage is only achievable marginally, and standard
conformal predictions for classification achieve it (as we see later) by
providing good coverage on easy examples at the expense of miscoverage on
harder instances.  We wish to provide
more uniform coverage, and we address
this in both multiclass---where each example belongs to a single
class---and multilabel---where each example may belong to several
classes---classification problems. We combine the ideas of
conformal prediction~\cite{VovkGaSh05} with an approach to fit a quantile
function $q$ on the scores $s(x,y)$ of the prediction model, which
\citet{RomanoPaCa19} originate for regression problems, and build
feature-adaptive quantile predictors that output sets of labels, allowing
us to guarantee valid marginal coverage (independent of the data generating
distribution) while better approximating the conditional
coverage~\eqref{eq:intro-scorequantile}.
A challenge is to evaluate whether we indeed do provide better than
marginal coverage, so we provide new validation methodology to test this as
well.

\paragraph{Conformal inference in classification}
For multiclass problems, we propose a method that fits a quantile function on
the scores, conformalizing it on held-out data.  While this immediately
provides valid marginal coverage, the accuracy of the quantile
function---how well it approximates the conditional quantiles---determines
conditional coverage performance.  Under certain consistency assumptions on
the learned scoring functions and quantiles as sample size increases, we
show in Section~\ref{sec:multiclass} that we recover the conditional
coverage~\eqref{eq:intro-scorequantile} asymptotically.

The multilabel case poses new statistical and computational challenges, as a
$K$-class problem entails $2^K$ potential responses.  In this case, we seek
efficiently representable inner and outer sets $C_\text{in}(x)$ and
$C_\text{out}(x) \subset \{1, \ldots, K\}$ such that
\begin{equation}
  \label{eqn:io-intro}
  \P ( C_{\textup{in}}(X) \subset Y \subset C_{\textup{out}}(X) ) \geq 1 - \alpha.
\end{equation}
We propose two approaches to guarantee the
containments~\eqref{eqn:io-intro}.
The first directly fits inner and outer sets by solving two separate
quantile regression problems.  The second begins by observing that labels
are frequently correlated---think, for example, of chairs, which frequently
co-occur with a table---and learns a tree-structured graphical
model~\cite{KollerFr09} to efficiently address such correlation.  We show
how to build these on top of any predictive model. In an extension
when the sets~\eqref{eqn:io-intro} provide too imprecise confidence sets,
we show how to also construct a small number of
sets $C^{(i)}_{\textup{in/out}}$ that similarly
satisfy $\P(\cup_i \{C^{(i)}_{\textup{in}}(X) \subset Y
\subset C^{(i)}_{\textup{out}}(X)) \ge 1 - \alpha$ while
guaranteeing $C_{\textup{in}}(x) \subset \cup_i C^{(i)}_{\textup{in}}(x)$
and $C_{\textup{out}}(x) \supset \cup_i C^{(i)}_{\textup{out}}(x)$.


\paragraph{Related work and background}
\citet{VovkGaSh05} introduce (split-)conformal inference,
which splits the first $n$ samples of the exchangeable pairs $\{(X_i, Y_i)\}_{i=1}^{n+1}$ into two sets (say, each of size $n_1$ and $n_2$ respectively) where the first training set ($\mathcal{I}_1$) is used to learn a scoring function $s : \mc{X} \times \mc{Y} \to \R$ and the second validation set ($\mathcal{I}_2$) to ``conformalize'' the scoring function
and construct a confidence set over potential
labels (or targets) $\mc{Y}$ of the form
\begin{equation*}
  C(x) \defeq \{y \in \mc{Y} \mid s(x, y) \ge t\}
\end{equation*}
for some threshold $t$.  The basic split-conformal method chooses the
$(1+1/n_2)(1-\alpha)$-empirical quantile $\quantmarg$ of the negative scores
$\{-s(X_i, Y_i)\}_{i \in \mathcal{I}_2}$ (on the validation set) and defines
\begin{equation}
  \label{eqn:marginal-method}
  C(x) \defeq \{ y \in \mathcal{Y} \mid s(x,y) \geq
  -\quantmarg \}.
\end{equation}
The argument that these sets $C$ provide valid coverage is beautifully
simple and is extensible given any scoring function $s$: letting $\score_i =
-s(X_i, Y_i)$, if $\quantmarg$ is the
$\left(1+1/n_2\right)(1-\alpha)$-quantile of $\{\score_i\}_{i \in
  \mathcal{I}_2}$ and the pairs $\{(X_i, Y_i)\}_{i=1}^{n+1}$ are
exchangeable, we have
\begin{align*}
 \P(Y_{n + 1} \in C(X_{n+1}))
 & = \P(s(X_{n+1}, Y_{n+1}) \ge -\quantmarg) \\
 & = \P\left( \mbox{Rank~of}~ S_{n+1} ~ \mbox{in} ~ \{S_i\}_{i \in \mc{I}_2 \cup \{n+1\}} \le \ceil{(n_2+1)(1-\alpha)}  \right)  \ge 1-\alpha.
\end{align*}
We refer to the procedure~\eqref{eqn:marginal-method} as the \emph{Marginal}
conformal prediction method. Such a ``conformalization'' scheme typically
outputs a confidence set by listing all the labels that it contains, which
is feasible in a $K$-class multiclass problem, but more challenging in
a $K$-class multilabel one, as the number of configurations ($2^K$)
grows exponentially. This, to the best of our knowledge,
has completely precluded efficient conformal methods for
multilabel problems.

While conformal inference guarantees marginal
coverage without assumption on the distribution generateing the data,
\citet{Vovk12} shows it is virtually impossible to attain distribution-free
conditional coverage, and \citet{BarberCaRaTi19a} prove that in
regression, one can only achieve a weaker form of approximately-conditional
coverage without conditions on the underlying distribution.
Because of this theoretical limitation, work in conformal inference often
focuses on minimizing confidence set sizes or guaranteeing different types
of coverage.  For instance, \citet{SadinleLeWa19} propose conformal
prediction algorithms for multiclass problems that minimize the expected
size of the confidence set and conformalize the scores separately for each
class, providing class-wise coverage.  In the same vein,
\citet{HechtlingerPoWa19} use density estimates of $p(x \mid y)$ as
conformal scores to build a ``cautious'' predictor, the idea being that it
should output an empty set when the new sample differs too much from the
original distribution.  In work building off of the initial post of this
paper to the \texttt{arXiv}, \citet{RomanoSeCa20} build conformal confidence
sets for multi-class problems by leveraging pivotal $p$-value-like
quantities, which provide conditional coverage when models
are well-specified.  In regression problems, \citet{RomanoPaCa19}
conformalize a quantile predictor, which allows them to build marginally
valid confidence sets that are adaptive to feature heterogeneity and
empirically smaller on average than purely marginal confidence sets.  We
adapt this regression approach to classification tasks, learning quantile
functions to construct valid---potentially conditionally valid---confidence
predictions.

\paragraph{Notation}
$\mathcal{P}$ is a set of distributions on $\mathcal{X} \times \mathcal{Y}$,
where $\mathcal{Y} = \{ 0, 1 \}^K$ is a discrete set of labels, $K \geq 2$
is the number of classes, and $\mathcal{X} \subset \R^d$.  We assume that we
observe a finite sequence $(X_i, Y_i)_{1 \leq i \leq n} \simiid P$ from some
distribution $P = P_X \times P_{Y \mid X} \in \mathcal{P}$ and wish to
predict a confidence set for a new example $X_{n+1} \sim P_X$.  $\mathbb{P}$
stands over the randomness of both the new sample $(X_{n+1},Y_{n+1})$ and
the full procedure. We tacitly identify the vector $Y \in
\mathcal{Y} = \{0,1\}^K$ with the subset of $[K] = \{ 1, \ldots, K \}$ it
represents, and the notation $\mathcal{X} \rightrightarrows [K]$ indicates a
set-valued mapping between $\mathcal{X}$ and $[K]$. We define
$\ltwop{f}^2 \defeq \int f^2 dP$.


\section{Conformal multiclass classification}
\label{sec:multiclass}
\label{subsec:CQC}

We begin with multiclass classification problems, developing
\emph{Conformalized Quantile Classification} (CQC) to construct finite
sample marginally valid confidence sets.  CQC is similar to
\citeauthor{RomanoPaCa19}'s Conformalized Quantile Regression
(CQR)~\cite{RomanoPaCa19}: we estimate a quantile function of the scores,
which we use to construct valid confidence sets after conformalization.
We split the data into subsets $\mathcal{I}_1$, $\mathcal{I}_2$,
and $\mathcal{I}_3$ with sample sizes $n_1, n_2$ and $n_3$,
where $\mc{I}_3$ is disjoint from $\mc{I}_1 \cup \mc{I}_2$.
Algorithm~\ref{alg:cqc} outlines the basic idea: we use the set $\mc{I}_1$ for
fitting the scoring function with an (arbitrary) learning algorithm
$\mc{A}$, use $\mc{I}_2$ for fitting a quantile function from a family
$\mc{Q} \subset \mc{X} \to \R$ of possible quantile functions to the
resulting scores, and use $\mc{I}_3$ for calibration.  In the algorithm, we
recall the ``pinball loss''~\cite{KoenkerBa78} $\pinball_\alpha(t) = (1 -
\alpha) \hinge{-t} + \alpha \hinge{t}$, which satisfies $\argmin_{q \in \R}
\E[\pinball_\alpha(Z - q)] = \inf \{q \mid \P(Z \le q) \ge \alpha\}$ for any
random variable $Z$.

\algbox{\label{alg:cqc}
  Split Conformalized Quantile Classification
  (CQC).
}{
  \textbf{Input:} Sample $\{(X_i, Y_i)\}_{i=1}^n$, index sets
  $\mc{I}_1, \mc{I}_2, \mc{I}_3 \subset [n]$, fitting algorithm
  $\mc{A}$, quantile functions $\mc{Q}$, and desired confidence level
  $\alpha$
  \begin{enumerate}[1.]
  \item Fit scoring function via
    \begin{equation}
      \what{s} \defeq \mc{A}\left((X_i, Y_i)_{i \in \mc{I}_1}\right).
      \label{eqn:fit-scores}
    \end{equation}
  \item Fit quantile function via
    \begin{equation}
      \label{eqn:quantile-regression}
      \what{q}_\alpha \in
      \argmin_{q \in \mc{Q}} \bigg\{\frac{1}{|\mc{I}_2|}
      \sum_{i \in \mc{I}_2}
      \pinball_\alpha\left(\what{s}(X_i, Y_i) - q(X_i)\right) \bigg\}
    \end{equation}
  \item Calibrate by computing conformity scores
    $\score_i = \what{q}_\alpha(X_i) - \what{s}(X_i, Y_i)$, defining
    \begin{equation*}
      \quant_{1 - \alpha}(\scores, \mc{I}_3)
      \defeq (1 + 1/n_3) (1 - \alpha) ~ \mbox{empirical~quantile~of~}
      \{\score_i\}_{i \in \mc{I}_3}
    \end{equation*}
    and return prediction set function
    \begin{equation}
      \label{eqn:cqc-prediction-set}
      \what{C}_{1 - \alpha}(x) \defeq
      \left\{k \in [K] \mid \what{s}(x, k) \ge \what{q}_\alpha(x)
      - \quant_{1-\alpha}(\scores, \mc{I}_3)\right\}.
    \end{equation}
  \end{enumerate}
}

\subsection{Finite sample validity of CQC}

We begin by showing that Alg.~\ref{alg:cqc} enjoys the coverage guarantees
we expect, which is more or less immediate by the
marginal guarantees for the method~\eqref{eqn:marginal-method}. 
We include the proof for completeness and because its cleanness highlights
the ease of achieving validity.

None of our guarantees for Alg.~\ref{alg:cqc} explicitly requires that we
fit the scoring function and the quantile function on disjoint subsets
$\mc{I}_1$ and $\mc{I}_2$.
%
%
We assume for simplicity that the full sample $\{(X_i,Y_i)\}_{i=1}^{n+1}$ is
exchangeable, though we only require the sample $\{(X_i,Y_i)\}_{i \in
  \mc{I}_3 \cup \{n+1\}}$ be exchangeable conditionally on
$\{(X_i,Y_i)\}_{i \in \mc{I}_1 \cup \mc{I}_2}$. In general, so long as the
score $\what{s}$ and quantile $\what{q}_{1-\alpha}$ functions are measurable
with respect to $\{(X_i, Y_i)\}_{i \in \mc{I}_1 \cup \mc{I}_2}$
and a $\sigma$-field independent
of $\{(X_i, Y_i)\}_{i \in \mc{I}_3 \cup \{n+1\}}$,
then Theorem~\ref{thm:marginal-coverage}
remains valid if the exchangeability
assumption holds for the instances in $\mc{I}_3 \cup \{ n+1\}$.

\begin{theorem}
  \label{thm:marginal-coverage}
  Assume that $\{(X_i,Y_i)\}_{i=1}^{n+1} \sim P$ are exchangeable, where
  $P$ is an arbitrary distribution. Then the
  prediction set $\what{C}_{1-\alpha}$ of the split CQC
  Algorithm~\ref{alg:cqc} satisfies
  \begin{equation*}
    \P\{Y_{n+1}\in \what{C}_{1-\alpha}(X_{n+1})\} \geq 1-\alpha.
  \end{equation*}
\end{theorem}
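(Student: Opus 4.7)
The plan is to reduce the statement to the standard split-conformal marginal coverage argument by observing that, once we condition on the fitting sets, the prediction set $\what{C}_{1-\alpha}$ has exactly the form of a standard conformal set with a particular choice of conformity score.

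First, I would condition on $\mc{F} \defeq \sigma\bigl(\{(X_i,Y_i)\}_{i \in \mc{I}_1 \cup \mc{I}_2}\bigr)$. Conditional on $\mc{F}$, both $\what{s}$ and $\what{q}_\alpha$ are deterministic measurable functions, and the remaining variables $\{(X_i,Y_i)\}_{i \in \mc{I}_3 \cup \{n+1\}}$ are still exchangeable (this is immediate from the full exchangeability of $\{(X_i,Y_i)\}_{i=1}^{n+1}$). Then I would define the conformity scores $\score_i = \what{q}_\alpha(X_i) - \what{s}(X_i, Y_i)$ for all $i \in \mc{I}_3 \cup \{n+1\}$ and observe the chain of equivalences
\begin{equation*}
Y_{n+1} \in \what{C}_{1-\alpha}(X_{n+1})
\iff \what{s}(X_{n+1},Y_{n+1}) \ge \what{q}_\alpha(X_{n+1}) - \quant_{1-\alpha}(\scores,\mc{I}_3)
\iff \score_{n+1} \le \quant_{1-\alpha}(\scores,\mc{I}_3).
\end{equation*}

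Next, since $\score_i$ is a fixed measurable function of $(X_i,Y_i)$ (given $\mc{F}$), the family $\{\score_i\}_{i \in \mc{I}_3 \cup \{n+1\}}$ inherits exchangeability from $\{(X_i,Y_i)\}_{i \in \mc{I}_3 \cup \{n+1\}}$. The standard rank argument then gives that the rank of $\score_{n+1}$ among $\{\score_i\}_{i \in \mc{I}_3 \cup \{n+1\}}$ is stochastically no larger than a uniform on $\{1,\ldots,n_3+1\}$ (with equality when ties have probability zero). Since $\quant_{1-\alpha}(\scores,\mc{I}_3)$ is the $(1+1/n_3)(1-\alpha)$-empirical quantile of the $n_3$ values in $\mc{I}_3$, the event $\{\score_{n+1} \le \quant_{1-\alpha}(\scores,\mc{I}_3)\}$ corresponds to $\score_{n+1}$ having rank at most $\lceil (n_3+1)(1-\alpha)\rceil$, which occurs with probability at least $1-\alpha$ conditionally on $\mc{F}$.

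Taking expectation over $\mc{F}$ removes the conditioning and yields the marginal guarantee $\P\{Y_{n+1} \in \what{C}_{1-\alpha}(X_{n+1})\} \ge 1-\alpha$. I do not anticipate any real obstacle: the only mildly delicate step is verifying that the data splitting preserves the exchangeability used in the final quantile-rank argument, and this follows directly from the assumed exchangeability of the full sample together with measurability of $\what{s}$ and $\what{q}_\alpha$ with respect to $\mc{F}$.
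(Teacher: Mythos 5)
Your proposal is correct and follows essentially the same route as the paper's proof: condition on the $\sigma$-field generated by $\mc{I}_1 \cup \mc{I}_2$, reduce coverage to the event $\score_{n+1} \le \quant_{1-\alpha}(\scores,\mc{I}_3)$, and apply the exchangeability/rank argument (the paper invokes Lemma 2 of \citet{RomanoPaCa19}, which is exactly your quantile-rank step) before integrating out the conditioning. No gaps.
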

\begin{proof}
  The argument is due to \citet[Thm.~1]{RomanoPaCa19}. Observe that
  $Y_{n+1} \in \what{C}_{1-\alpha}(X_{n+1})$ if and only if
  $\score_{n+1} \leq \quant_{1-\alpha}(\scores,\mathcal{I}_3)$.
  Define the $\sigma$-field $\mc{F}_{12} = \sigma \left\{ (X_i,Y_i) \mid i \in
  \mc{I}_1 \cup \mc{I}_2 \right\}$. Then
  \begin{align*}
    \P(Y_{n+1} \in \what{C}_{1-\alpha}(X_{n+1})\mid \mc{F}_{12} ) =
    \P( \score_{n+1} \leq Q_{1-\alpha}(\scores,\mathcal{I}_3)\mid \mc{F}_{12}).
  \end{align*}
  We use the following lemma.
  \begin{lemma}[Lemma 2, \citet{RomanoPaCa19}]
    \label{lem:inflated_quantiles}
    Let $Z_1, \ldots, Z_{n+1}$ be exchangable random variables and
    $\what{\quant}_n(\cdot)$ be the empirical quantile function of $Z_1, \ldots
    Z_n$. Then for any $\alpha \in (0,1)$,
    \begin{align*}
      \P\big(Z_{n+1} \leq \what{\quant}_n((1+n^{-1})\alpha)\big) \geq  \alpha.
    \end{align*}
    If $Z_1, \ldots, Z_n$ are almost surely distinct,
    then
    \begin{align*}
      \P\big(Z_{n+1} \leq \what{\quant}_n((1+n^{-1})\alpha)\big)
      \leq \alpha +\frac{1}{n}.
    \end{align*}
  \end{lemma}
  
  As the original sample is exchangeable, so are the conformity scores
  $\score_i$ for $i \in \mathcal{I}_3$, conditionally on $\mc{F}_{12}$.
  Lemma~\ref{lem:inflated_quantiles} implies $\P(\score_{n+1} \leq
  Q_{1-\alpha}(\scores,\mathcal{I}_3) \mid \mc{F}_{12}) \geq 1-\alpha$, and taking
  expectations over $\mc{F}_{12}$ yields the theorem.
\end{proof}

The conditional distribution of scores given $X$ is discrete, so the
confidence set may be conservative: it is possible
that for any $q$ such that $P(s(X,Y)\geq q\mid X) \geq 1-\alpha$, we have
$P(s(X,Y)\geq q\mid X) \geq 1-\epsilon$ for some $\epsilon \ll \alpha.$
Conversely, as the CQC procedure~\ref{alg:cqc} seeks $1-\alpha$ marginal
coverage, it may sacrifice a few examples to bring the coverage down to
$1-\alpha$.  Moreover, there may be no unique quantile function for the
scores. One way to address these issues
is to estimate a quantile function on $\mc{I}_2$ (recall
step~\eqref{eqn:quantile-regression} of the CQC method) so that we can
guarantee higher coverage (which is more conservative, but is free in the
$\epsilon \ll \alpha$ case).

An alternative to achieve exact $1-\alpha$ asymptotic coverage and a unique
quantile, which we outline here, is to randomize scores without changing the
relative order.
Let $Z_{i} \simiid \pi$ for some distribution $\pi$ with
continuous density  supported on the entire real line, and let $\sigma > 0$ be a noise parameter. Then
for any scoring function $s : \mathcal{X} \times \mathcal{Y} \to
\R$, we define
\begin{align*}
  s^{\sigma}(x, y, z) \defeq s(x, y, z) + \sigma z.
\end{align*}
As $s^\sigma(x, y, z) - s^\sigma(x, y', z) = s(x, y) - s(x, y')$,
this maintains the ordering of label scores, only giving
the score function a conditional density.
Now, consider replacing the quantile
estimator~\eqref{eqn:quantile-regression} with the randomized estimator
\begin{align}
  \label{eqn:quantile-regression-randomized}
  \what{q}^{\sigma}_{\alpha} \in \argmin_{q \in \mathcal{Q}}
  \bigg\{ \frac{1}{n_2} \sum_{i\in \mathcal{I}_2}
  \pinball_\alpha(\what{s}^{\sigma}(X_i,Y_i, Z_i) - q(X_i)) \bigg\},
\end{align}
and let $\score_{i}^{\sigma} \defeq \what{q}_{\alpha}^{\sigma}(X_i)
-\what{s}^\sigma(X_i,Y_i, Z_i)$ be the corresponding conformity
scores. Similarly, replace the prediction set~\eqref{eqn:cqc-prediction-set}
with
\begin{equation}
  \label{eqn:cqc-prediction-set-randomized}
  \what{C}_{1-\alpha}^\sigma(x, z) \defeq
  \left\{k \in [K] \mid \what{s}^{\sigma}(x, k,z) \ge \what{q}^\sigma_\alpha(x)
  - \quant_{1-\alpha}(\scores^\sigma, \mc{I}_3)\right\}.
\end{equation}
where $\quant_{1-\alpha}(\scores^{\sigma},\mathcal{I}_3)$ is the
$(1-\alpha)(1+1/n_3)$-th empirical quantile of $\{\score_i^{\sigma} \}_{i \in
  \mathcal{I}_3}$.  Then for a new input $X_{n+1} \in \mc{X}$, we simulate a
new independent variable $Z_{n+1} \sim \pi$, and give the confidence
set $\what{C}_{1-\alpha}^\sigma(X_{n+1}, Z_{n+1})$.
As the next result shows, this gives nearly perfectly calibrated
coverage.
\begin{corollary}
  \label{corollary:marginal-coverage_rand}
  Assume that $\{(X_i,Y_i)\}_{i=1}^{n+1} \sim P$ are exchangeable, where
  $P$ is an arbitrary distribution.  Let the
  estimators~\eqref{eqn:quantile-regression-randomized} and
  \eqref{eqn:cqc-prediction-set-randomized} replace the
  estimators~\eqref{eqn:quantile-regression}
  and~\eqref{eqn:cqc-prediction-set} in the CQC Algorithm~\ref{alg:cqc},
  respectively. Then the prediction set $\what{C}_{1-\alpha}^{\sigma}$
  satisfies
  \begin{equation*}
    1-\alpha \leq \P\{Y_{n+1}\in \what{C}_{1-\alpha}^{\sigma}(X_{n+1}, Z_{n+1})\}\leq 1-\alpha+\frac{1}{1+n_3}.
  \end{equation*}
\end{corollary}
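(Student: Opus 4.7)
The plan is to mirror the argument for Theorem~\ref{thm:marginal-coverage}, simply enlarging the conditioning $\sigma$-field to include the randomization variables used to fit $\what{s}^\sigma$ and $\what{q}_\alpha^\sigma$, and then to upgrade the conclusion from the one-sided bound to the two-sided bound of Lemma~\ref{lem:inflated_quantiles} by verifying that the randomized conformity scores are almost surely distinct.

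First, I would define the enlarged $\sigma$-field $\mc{F}_{12}^\sigma = \sigma\{(X_i, Y_i, Z_i) : i \in \mc{I}_1 \cup \mc{I}_2\}$, so that both $\what{s}^\sigma$ (which depends on $\sigma$ only through the fixed noise level, not on $\mc{I}_1$-specific noise) and $\what{q}_\alpha^\sigma$ from~\eqref{eqn:quantile-regression-randomized} are $\mc{F}_{12}^\sigma$-measurable. Because $(X_i, Y_i)_{i=1}^{n+1}$ are exchangeable and the $Z_i$ are i.i.d.\ and independent of everything else, the triples $(X_i, Y_i, Z_i)_{i \in \mc{I}_3 \cup \{n+1\}}$ are exchangeable conditional on $\mc{F}_{12}^\sigma$. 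Consequently the conformity scores $\{S_i^\sigma\}_{i \in \mc{I}_3 \cup \{n+1\}}$ are exchangeable given $\mc{F}_{12}^\sigma$. Exactly as in the proof of Theorem~\ref{thm:marginal-coverage}, the event $Y_{n+1} \in \what{C}_{1-\alpha}^\sigma(X_{n+1}, Z_{n+1})$ coincides with $\{S_{n+1}^\sigma \le \quant_{1-\alpha}(\scores^\sigma, \mc{I}_3)\}$, so the first half of Lemma~\ref{lem:inflated_quantiles} applied conditionally on $\mc{F}_{12}^\sigma$ and then integrated yields the lower bound $1-\alpha$.

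For the upper bound, I would verify the distinctness hypothesis in the second half of Lemma~\ref{lem:inflated_quantiles}. For any $i \ne j$ in $\mc{I}_3 \cup \{n+1\}$,
\begin{equation*}
  S_i^\sigma - S_j^\sigma
  = \bigl[\what{q}_\alpha^\sigma(X_i) - \what{q}_\alpha^\sigma(X_j)\bigr]
  - \bigl[\what{s}(X_i, Y_i) - \what{s}(X_j, Y_j)\bigr]
  - \sigma(Z_i - Z_j).
\end{equation*}
Conditional on $\mc{F}_{12}^\sigma$ and on $(X_i, Y_i, X_j, Y_j)$, the first two bracketed terms are deterministic constants, while $\sigma(Z_i - Z_j)$ is the difference of two independent draws from a distribution with density on $\R$, hence itself absolutely continuous. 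Therefore $\P(S_i^\sigma = S_j^\sigma \mid \mc{F}_{12}^\sigma) = 0$, and by a union bound over the finitely many pairs the scores are almost surely distinct. Applying the second inequality of Lemma~\ref{lem:inflated_quantiles} conditionally and integrating gives the upper bound $1 - \alpha + 1/(n_3 + 1)$.

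The main obstacle is the distinctness verification: one must keep careful track of which variables are in $\mc{F}_{12}^\sigma$ and which are not, in order to argue that $Z_i - Z_j$ remains continuously distributed after the conditioning. Once this is clear, both bounds follow directly from Lemma~\ref{lem:inflated_quantiles}, so the corollary is essentially a randomized-scores refinement of Theorem~\ref{thm:marginal-coverage}.
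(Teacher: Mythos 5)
Your proposal is correct and follows the same route as the paper: the paper's proof simply notes that the argument is identical to that of Theorem~\ref{thm:marginal-coverage} with the second part of Lemma~\ref{lem:inflated_quantiles} supplying the upper bound because the randomized scores are almost surely distinct. Your explicit verification of the distinctness via the continuous distribution of $\sigma(Z_i - Z_j)$ conditional on the enlarged $\sigma$-field is exactly the detail the paper leaves implicit.
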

\begin{proof}
  The argument is identical to that for Theorem~\ref{thm:marginal-coverage},
  except that we apply the second part of Lemma~\ref{lem:inflated_quantiles}
  to achieve the upper bound, as the scores are a.s.\ distinct.
\end{proof}

\subsection{Asymptotic optimality of CQC method}

Under appropriate assumptions typical in proving the consistency of
prediction methods, Conformalized Quantile Classification (CQC) guarantees
conditional coverage asymptotically. To set the stage, assume that as $n
\uparrow \infty$, the fit score functions $\what{s}_n$ in
Eq.~\eqref{eqn:fit-scores} converge to a fixed $s : \mc{X} \times \mc{Y} \to
\R$ (cf.\ Assumption~\ref{assumption:consistency-scores}).
Let
\begin{equation*}
q_{\alpha}(x) \defeq 
\inf \left\{z \in \R \mid \alpha \leq P\left( s(x,Y)\leq z \mid X=x \right) \right\}
\end{equation*}
be the $\alpha$-quantile function of the limiting scores $s(x, Y)$ and
for $\sigma > 0$ define
\begin{equation*}
  q^{\sigma}_{\alpha}(x) \defeq \inf \left\{z \in \R \mid \alpha \leq
  P\left(s^{\sigma}(x,Y, Z)\leq z \mid X=x\right) \right\},
\end{equation*}
where $Z \sim \pi$ (for a continuous distribution $\pi$ as in the preceding
section) is independent of $x, y$, to be the $\alpha$-quantile function of
the noisy scores $s^\sigma(x, Y, Z) = s(x, Y) + \sigma Z$.
With these, we can make the following natural definitions of
our desired asymptotic confidence sets.
\begin{definition}
  \label{definition:oracle-confidence-sets}
  The \emph{randomized-oracle} and \emph{super-oracle} confidence sets are
  $C^{\sigma}_{1-\alpha}(X,Z) \defeq \{k \in [K] \mid s^{\sigma}(X,k,Z) \geq
  q^{\sigma}_{\alpha}(X)\}$ and $C_{1-\alpha}(X)=\{k \in [K] \mid s(X,k)
  \geq q_{\alpha}(X\}$, respectively.
\end{definition}
\noindent
Our aim will be to show that the confidence sets of the
split CQC method~\ref{alg:cqc} (or its randomized variant) converge
to these confidence sets under appropriate consistency
conditions.

To that end, we consider the following consistency assumption.
\begin{assumption}[Consistency of scores and quantile functions]
  \label{assumption:consistency-scores}
  The score functions $\what{s}$ and quantile estimator
  $\what{q}_\alpha^\sigma$ are mean-square consistent, so that
  as $n_1, n_2, n_3 \to \infty$,  
  \begin{equation*}
    \ltwopxs{\what{s} - s}^2
    \defeq \int_{\mc{X}} \linfs{\what{s}(x, \cdot) - s(x, \cdot)}^2
    dP_X(x) \cp 0
  \end{equation*}
  and
  \begin{equation*}
    \ltwopxs{\what{q}_\alpha^\sigma - q^\sigma_\alpha}^2
    \defeq \int_{\mc{X}} (\what{q}_\alpha^\sigma(x) -
    q^\sigma_\alpha(x))^2 dP_X(x) \cp 0.
  \end{equation*}
\end{assumption}
\noindent
With this assumption, we have the following theorem, whose
proof we provide in Appendix~\ref{sec:proof-oracle}.
\begin{theorem}\label{thm:oracle}
  Let Assumption \ref{assumption:consistency-scores} hold. Then the confidence sets $\what{C}^{\sigma}_{1-\alpha}$ satisfy
  \begin{equation*}
    \lim_{n \to \infty}\P(\what{C}^{\sigma}_{1-\alpha}(X_{n+1}, Z_{n+1}) \neq C^{\sigma}_{1-\alpha}(X_{n+1}, Z_{n+1}))= 0.
  \end{equation*}
\end{theorem}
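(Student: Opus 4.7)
The plan is to reduce the set equality $\what{C}^\sigma_{1-\alpha}(X_{n+1}, Z_{n+1}) = C^\sigma_{1-\alpha}(X_{n+1}, Z_{n+1})$ to a per-label comparison, so that the finite cardinality of $[K]$ permits a union bound. Label $k$ lies in the estimated set iff $\what{s}^\sigma(X_{n+1}, k, Z_{n+1}) - \what{q}^\sigma_\alpha(X_{n+1}) + \quant_{1-\alpha}(\scores^\sigma, \mc{I}_3) \geq 0$, and in the oracle set iff $M_k \defeq s^\sigma(X_{n+1}, k, Z_{n+1}) - q^\sigma_\alpha(X_{n+1}) \geq 0$. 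Since the estimated and oracle quantities differ by a bounded perturbation, these two indicators can disagree only when $|M_k| \leq \eta_n$, where
\[
\eta_n \defeq \linfs{(\what{s}^\sigma - s^\sigma)(X_{n+1}, \cdot, Z_{n+1})} + |(\what{q}^\sigma_\alpha - q^\sigma_\alpha)(X_{n+1})| + |\quant_{1-\alpha}(\scores^\sigma, \mc{I}_3)|.
\]

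The argument then needs two ingredients: (a) $\eta_n \cp 0$, and (b) distributional continuity of each $M_k$ at $0$. For (b), since $Z_{n+1}$ is independent of $(X_{n+1}, Y_{n+1})$ with a density on all of $\R$, $M_k = s(X_{n+1}, k) + \sigma Z_{n+1} - q^\sigma_\alpha(X_{n+1})$ has a Lebesgue density, so $\P(|M_k| \leq \epsilon) \to 0$ as $\epsilon \downarrow 0$. The first two summands of $\eta_n$ are $o_\P(1)$ because $X_{n+1}$ is independent of $\mc{F}_{12}$, so their conditional second moments equal $\ltwopxs{\what{s}^\sigma - s^\sigma}^2$ and $\ltwopxs{\what{q}^\sigma_\alpha - q^\sigma_\alpha}^2$, both vanishing in probability by Assumption~\ref{assumption:consistency-scores}. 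A standard split $\P(|M_k| \leq \eta_n) \leq \P(|M_k| \leq \epsilon) + \P(\eta_n > \epsilon)$, together with (a) and (b), then forces $\P(|M_k| \leq \eta_n) \to 0$ for each $k$, and a union bound over $k \in [K]$ concludes.

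It remains to establish $\quant_{1-\alpha}(\scores^\sigma, \mc{I}_3) \cp 0$. Let $W_i \defeq q^\sigma_\alpha(X_i) - s^\sigma(X_i, Y_i, Z_i)$; the additive $\sigma Z_i$ gives $W$ a strictly positive density everywhere, so its CDF $F_W$ is continuous and strictly increasing, and the definition of $q^\sigma_\alpha$ combined with Fubini yields $\P(W \leq 0) = 1 - \alpha$. Hence $F_W^{-1}(1 - \alpha) = 0$ uniquely. The actual conformity scores satisfy $\score_i^\sigma = W_i + \Delta_i$ with $\E[\Delta_i^2 \mid \mc{F}_{12}] \cp 0$ by Assumption~\ref{assumption:consistency-scores} and independence of $\{(X_i,Y_i,Z_i)\}_{i\in\mc{I}_3}$ from $\mc{F}_{12}$. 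For any $\epsilon > 0$ and $t \in \R$,
\[
\mathbf{1}\{W_i \leq t - \epsilon\} - \mathbf{1}\{|\Delta_i| > \epsilon\} \leq \mathbf{1}\{\score_i^\sigma \leq t\} \leq \mathbf{1}\{W_i \leq t + \epsilon\} + \mathbf{1}\{|\Delta_i| > \epsilon\}.
\]
Averaging over $i \in \mc{I}_3$, Glivenko--Cantelli on the i.i.d.\ $W_i$ and Markov's inequality on the bad-event frequency yield pointwise convergence in probability of the empirical CDF of $\{\score_i^\sigma\}_{i \in \mc{I}_3}$ to $F_W$; strict monotonicity and continuity of $F_W$ then promote this to convergence of the $(1 - \alpha)(1+1/n_3)$-empirical quantile to $0$.

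The main obstacle is this last step. Assumption~\ref{assumption:consistency-scores} supplies only integrated ($L^2(P_X)$) consistency, which cannot be substituted directly inside indicators or quantile functionals; the sandwich-plus-Glivenko--Cantelli route is the non-mechanical ingredient that bridges mean-square consistency to the pointwise empirical-CDF control needed for the quantile limit. Everything else---the per-label reduction, the density argument for $M_k$, and the Slutsky-style split---is routine once the threshold is shown to vanish.
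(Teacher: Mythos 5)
Your proof is correct, and its core mechanism is the same as the paper's: conformalization shifts the decision threshold by a quantity that vanishes in probability (the two $L^2(P_X)$ errors from Assumption~\ref{assumption:consistency-scores} plus the conformal correction $\what{Q} = \quant_{1-\alpha}(\scores^\sigma,\mc{I}_3)$), and the randomization $\sigma Z$ gives the score a density so that the ``boundary shell'' $\{|M_k|\le \eta_n\}$ where the estimated and oracle indicators can disagree has vanishing probability; a union bound over $k\in[K]$ finishes. The paper packages this slightly differently---it localizes to a bad set $B_n$ of $x$'s via Markov's inequality and bounds $\sup_a P_Z(a\le \sigma Z\le a+3\epsilon)$ using boundedness of the density of $Z$, whereas you absorb everything into a single random radius $\eta_n$ and use only that $M_k$ has a Lebesgue density---but these are cosmetic variants of the same argument. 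The one genuine difference is that the paper does not prove $\what{Q}\cp 0$ at all: it cites item~(ii) of the proof of Theorem~1 of Sesia and Cand\`es, while you supply a self-contained derivation (the $W_i+\Delta_i$ decomposition, the identity $F_W(0)=1-\alpha$ from the continuity of the randomized score distribution, the indicator sandwich plus Glivenko--Cantelli, and strict monotonicity of $F_W$ to pass from CDF convergence to quantile convergence). That argument is sound and is essentially the content of the cited result, so your write-up is, if anything, more complete than the paper's.
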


Unlike other work~\cite{SadinleLeWa19, RomanoSeCa20} in conformal
inference, validity of Theorem~\ref{thm:oracle} does not rely on the scores
being consistent for the log-conditional probabilities. Instead, we
require the $(1-\alpha)$-quantile function to be consistent for the scoring
function at hand, which is weaker (though still a strong
assumption).  Of course, the ideal scenario occurs when the limiting score
function $s$ is optimal (cf.~\cite{BartlettJoMc06, Zhang04a, TewariBa07}),
so that $s(x, y) > s(x, y')$ whenever $P(Y = y \mid X = x) > P(Y = y' \mid X
= x)$. Under this additional condition, the super-oracle confidence set
$C_{1-\alpha}(X)$ in Def.~\ref{definition:oracle-confidence-sets} is the
smallest conditionally valid confidence set at level
$1-\alpha$. Conveniently, our randomization is consistent as $\sigma
\downarrow 0$: the super oracle confidence set $C_{1-\alpha}$ contains
${C}^{\sigma}_{1-\alpha}$ (with high probability). We provide the proof of
the following result in Appendix~\ref{sec:proof-super-oracle}.
\begin{proposition}
  \label{proposition:super-oracle}
  The confidence sets ${C}^{\sigma}_{1-\alpha}$ satisfy 
  \begin{equation*}
    \lim_{\sigma \to 0}\P({C}^{\sigma}_{1-\alpha}(X_{n+1},Z_{n+1}) \subseteq C_{1-\alpha}(X_{n+1}))= 1.
  \end{equation*}
\end{proposition}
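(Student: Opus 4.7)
The plan is to reduce the containment to a one-dimensional inequality and then exploit the discreteness of $s(X,Y)$ to control the smoothed $\alpha$-quantile $q^\sigma_\alpha$. First I would characterize the failure event. Define
\[
v^{--}(x) \defeq \max\{ s(x,k) : k \in [K],\, s(x,k) < q_\alpha(x) \},
\]
with $\max \emptyset \defeq -\infty$; when $v^{--}(x) = -\infty$ one has $C_{1-\alpha}(x) = [K]$ so $C^{\sigma}_{1-\alpha} \subseteq C_{1-\alpha}$ trivially, and otherwise the gap $\Delta(x) \defeq q_\alpha(x) - v^{--}(x)$ is strictly positive. Since $s(x,\cdot)$ takes only finitely many values, ``$s(X_{n+1},k) < q_\alpha(X_{n+1})$'' is equivalent to ``$s(X_{n+1},k) \le v^{--}(X_{n+1})$'', so on $\{v^{--}(X_{n+1}) > -\infty\}$ the failure event collapses to
\[
\{q^\sigma_\alpha(X_{n+1}) - \sigma Z_{n+1} \le v^{--}(X_{n+1})\}.
\]

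The heart of the argument is a pointwise lower bound: for $P_X$-almost every $x$ with $v^{--}(x) > -\infty$, $q^\sigma_\alpha(x) > v^{--}(x) + \Delta(x)/2$ for all sufficiently small $\sigma > 0$. I would prove this by evaluating the smoothed CDF at the midpoint $t(x) \defeq v^{--}(x) + \Delta(x)/2$,
\[
F^\sigma_x(t(x)) = \E\!\left[F_Z\!\left((t(x) - s(x,Y))/\sigma\right) \mid X = x\right],
\]
where $F_Z$ denotes the CDF of $\pi$. By construction $t(x)$ is at distance at least $\Delta(x)/2$ from every element of $\{s(x,k)\}_{k}$, so as $\sigma \downarrow 0$ the argument of $F_Z$ diverges to $+\infty$ on $\{s(x,Y) \le v^{--}(x)\}$ and to $-\infty$ on $\{s(x,Y) \ge q_\alpha(x)\}$. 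Dominated convergence then gives
\[
\lim_{\sigma \to 0} F^\sigma_x(t(x)) = P(s(x,Y) \le v^{--}(x) \mid X = x) < \alpha,
\]
with the strict inequality being exactly the definition of $q_\alpha(x)$ (since $v^{--}(x) < q_\alpha(x)$). Hence $F^\sigma_x(t(x)) < \alpha$ eventually, forcing $q^\sigma_\alpha(x) > t(x)$ for all small enough $\sigma$.

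To finish, I would use the deterministic inclusion
\[
\{q^\sigma_\alpha(X_{n+1}) - \sigma Z_{n+1} \le v^{--}(X_{n+1})\} \subseteq \{q^\sigma_\alpha(X_{n+1}) \le t(X_{n+1})\} \cup \{\sigma |Z_{n+1}| \ge \Delta(X_{n+1})/2\}.
\]
Conditional on $X_{n+1}$, the first set is eventually empty by the key estimate, and the second has probability vanishing with $\sigma$ because $\Delta(X_{n+1}) > 0$ almost surely and $\sigma Z_{n+1} \to 0$ in probability. Dominated convergence over $X_{n+1}$ then gives $\P(C^\sigma_{1-\alpha} \not\subseteq C_{1-\alpha}) \to 0$. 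The main obstacle I expect is that $q^\sigma_\alpha(x)$ need not converge to $q_\alpha(x)$ at the jumps of the discrete CDF (the limit involves both the jump mass and $F_Z(0)$), so testing $F^\sigma_x$ at $q_\alpha(x)$ directly is inconclusive; choosing the interior midpoint $t(x)$, whose existence is guaranteed by finiteness of $\{s(x,k)\}_k$, sidesteps this pathology and yields the clean limit.
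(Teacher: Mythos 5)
Your proof is correct, but it reaches the key estimate by a different route than the paper. The paper's proof proceeds by a union bound over labels $k$, introduces the events $A = \{s(X,k) < q_\alpha(X)\}$ and $A_\delta = \{s(X,k) \le q_\alpha(X) - \delta\}$ with a uniform margin $\delta$ chosen so that $\P(A \setminus A_\delta)$ is small, and then invokes a general lemma stating that $\liminf_{\sigma \to 0} q^\sigma_\alpha(x) \ge q_\alpha(x)$, which it derives from weak convergence of the smoothed score distribution together with a standard Portmanteau-type result on convergence of inverse CDFs. You instead collapse the union over $k$ into the single event involving $v^{--}(X)$ (the largest score strictly below $q_\alpha(X)$), replace the uniform margin $\delta$ by the pointwise random gap $\Delta(X) = q_\alpha(X) - v^{--}(X) > 0$, and prove the needed lower bound on $q^\sigma_\alpha(x)$ by directly evaluating the smoothed CDF at the midpoint $t(x)$ and using finiteness of the score support --- obtaining $q^\sigma_\alpha(x) > q_\alpha(x) - \Delta(x)/2$ eventually, which is weaker than the paper's $\liminf q^\sigma_\alpha(x) \ge q_\alpha(x)$ but still clears $v^{--}(x)$ with a margin that absorbs $\sigma Z$. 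What your approach buys is self-containedness and an explicit use of discreteness: no inverse-CDF convergence lemma is needed, and the margin argument is pointwise rather than via the approximation $A_\delta \uparrow A$. What the paper's approach buys is a reusable, distribution-agnostic lemma (its quantile lower bound holds without exploiting the finite support of $s(x,Y)$), at the cost of the extra $\epsilon$-$\delta$ bookkeeping. Your closing remark about why one cannot test $F^\sigma_x$ at $q_\alpha(x)$ itself is exactly the right caution, and choosing the interior midpoint is a clean way around it.
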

\noindent
Because we always have $\P(Y \in C^\sigma_{1-\alpha}(X, Z)) \ge 1 - \alpha$,
Proposition~\ref{proposition:super-oracle} shows that we maintain validity while
potentially shrinking the confidence sets.




\newcommand{\Cinner}{\what{C}_{\textup{in}}}
\newcommand{\Couter}{\what{C}_{\textup{out}}}

\section{The multilabel setting}
\label{sec:multilabel}

In multilabel classification, we observe a single feature vector
$x \in \mathcal{X}$ and wish to predict a vector $y \in \{-1, 1\}^K$ where $y_k
= 1$ indicates that label $k$ is present and $y_k = -1$ indicates its
absence. For example, in object detection problems~\cite{RedmonDiGiFa16,
TanPaLe19, LinMaBeHaPeRaDoZi14}, we wish to detect several entities in an
image, while in text classification tasks~\cite{LodhiSTCrWa02, McCallumNi98,
JoulinGrBoMi16}, a single document can potentially share multiple topics.
To conformalize such predictions, we wish to output an aggregated
set of predictions $\{\what{y}(x)\} \subset \{-1, 1\}^K$---a collection
of $-1$-$1$-valued vectors---that contains the true configuration
$Y = (Y_1, \ldots, Y_K)$ with probability at least $1 - \alpha$.

The multilabel setting poses statistical and computational challenges.
On the statistical side, multilabel prediction engenders a multiple testing
challenge: even if each task has an individual confidence set $C_k(x)$ such
that $P(Y_k \in C_k(X)) \geq 1-\alpha$, in general we can only conclude that
$P\left( Y \in C_1(X) \times \dots \times C_K(X)\right) \geq 1 - K\alpha$;
as all predictions share the same features $x$, we wish to leverage
correlation through $x$.  Additionally, as we discuss in the introduction
(recall Eq.~\eqref{eqn:marginal-method}), we require a scoring function $s:
\mathcal{X} \times \mathcal{Y} \to \R$.  Given a
predictor $\what{y} : \mc{X} \to \{-1, 1\}^k$, a naive scoring function
for multilabel problems is to
use $s(x, y) = \indic{\what{y}(x) = y}$, but this fails, as the
confidence sets contain either all configurations or a single configuration.
On the computational side, the total number of label configurations ($2^K$)
grows exponentially, so a ``standard'' multiclass-like approach outputting
confidence sets $\what{C}(X) \subset \mathcal{Y} = \{-1, 1\}^K$, although
feasible for small values of $K$, is computationally impractical even for
moderate $K$.

We instead propose using inner and outer set functions
$\what{C}_{\text{in}}, \what{C}_{\text{out}} : \mathcal{X} \rightrightarrows
[K]$ to efficiently describe a confidence set on $\mathcal{Y}$, where we
require they satisfy the coverage guarantee
\begin{subequations}
  \label{eqn:inner-outer-coverage}
  \begin{align}
    \P \left( \Cinner(X) \subset Y \subset
    \Couter(X) \right) \geq 1 - \alpha,
  \end{align}
  or equivalently, we learn two functions
  $\what{y}_{\text{in}}, \what{y}_{\text{out}} : \mathcal{X} \rightarrow
  \mathcal{Y} = \{-1,1\}^K$ such that
  \begin{equation}
    \P \left( \what{y}_{\text{in}}(X) \preceq Y   \preceq
    \what{y}_{\text{out}}(X) \right) \geq 1 - \alpha.
  \end{equation}
\end{subequations}
We thus say that coverage is \emph{valid} if the inner set exclusively contains
positive labels and the outer set contains at least all positive labels.
For example, $\what{y}_{\text{in}}(x) = \zeros$ and
$\what{y}_{\text{out}}(x) = \ones$ are always valid, though uninformative, while
the smaller the set difference between
$\Cinner(X)$ and $\Couter(X)$ the more confident we may be in a single
prediction. As we mention in the introduction, we extend
the inner/outer coverage guarantees~\eqref{eqn:inner-outer-coverage}
to construct to unions of such rectangles to allow more nuanced coverage;
see Sec.~\ref{sec:union-inner-outer-sets}.


In the remainder of this section, we propose methods to conformalize
multilabel predictors in varying generality, using tree-structured graphical
models to both address correlations among labels and computational
efficiency. We begin in Section~\ref{sec:generic-multilabel} with a general
method for conformalizing an arbitrary scoring function $s : \mc{X} \times
\mc{Y} \to \R$ on multilable vectors, which guarantees validity no matter
the score. We then show different strategies to efficiently implement the
method, depending on the structure of available scores, in
Section~\ref{sec:efficient-inner-outer-sets}, showing how
``tree-structured'' scores allow computational efficiency while modeling
correlations among the task labels $y$. Finally, in
Section~\ref{sec:multilabel-arbitrary}, we show how to build such a
tree-structured score function $s$ from both arbitrary predictive models
(e.g.~\cite{BoutellLuShBr04, ZhangZh06}) and those more common multilabel
predictors---in particular, those based on neural networks---that learn and
output per-task scores $s_k : \mc{X} \to \R$ for each
task~\cite{HerreraChRiDe16, ReadPfHoFr11}.





\subsection{A generic split-conformal method for multilabel sets}
\label{sec:generic-multilabel}

We begin by assuming we have a general score function $s : \mc{X} \times
\mc{Y} \to \R$ for $\mc{Y} = \{-1, 1\}^K$ that evaluates the quality of a
given set of labels $y \in \mc{Y}$ for an instance $x$; in the next
subsection, we describe how to construct such scores from multilabel
prediction methods, presenting our general method first.  We consider the
variant of the CQC method~\ref{alg:cqc} in Alg.~\ref{alg:tree-cqc}.

There are two considerations for Algorithm~\ref{alg:tree-cqc}: its
computational efficiency and its validity guarantees. Deferring the
efficiency questions to the coming subsections, we begin with the latter.
The naive approach is to simply use the ``standard'' or
implicit conformalization
approach, used for regression or classification, by defining
\begin{equation}
  \label{eqn:standard-classification-set}
  \Cimplicit(x) \defeq \left\{y \in \mc{Y}
  \mid s(x, y) \ge \what{q}_\alpha(x) - \what{\quant}_{1-\alpha}\right\},
\end{equation}
where $\what{q}_\alpha$ and $\what{\quant}_{1-\alpha}$ are as in the CQioC
method~\ref{alg:tree-cqc}. This does guarantee validity, as we have the
following corollary of Theorem~\ref{thm:marginal-coverage}.
\algbox{\label{alg:tree-cqc} Split Conformalized Inner/Outer
  method for classification (CQioC)}{
  \textbf{Input:} Sample $\{(X_i, Y_i)\}_{i=1}^n$, disjoint index
  sets $\mc{I}_2, \mc{I}_3 \subset [n]$, quantile functions $\mc{Q}$,
  desired confidence level $\alpha$, and score function
  $s : \mc{X} \times \mc{Y} \to \R$.
  \begin{enumerate}[1.]
  \item Fit quantile function $\what{q}_\alpha \in \argmin_{q \in
    \mc{Q}} \{ \sum_{i \in \mc{I}_2} \pinball_\alpha(s(X_i, Y_i) -
    q(X_i))\}$, as in~\eqref{eqn:quantile-regression}.
  \item Compute conformity scores
    $\score_i = \what{q}_\alpha(X_i) - s(X_i, Y_i)$, define
    $\what{\quant}_{1-\alpha}$ as
    $(1 + 1/|\mc{I}_3|) \cdot (1 - \alpha)$ empirical quantile of
    $\scores = \{\score_i\}_{i \in \mc{I}_3}$, and
    return prediction set function
    \begin{equation}
      \label{eqn:multilabel-io-set}
      \Cinout(x)
      \defeq 
      \left\{ y \in \mc{Y}  \mid
      \yin(x) \preceq y \preceq \yout(x) \right\},
    \end{equation}
    where $\yin$ and $\yout$ satisfy
    \begin{equation}
      \label{eqn:efficient-inner-outer-vecs}
      \begin{split}
        \yin(x)_k = 1 & ~ \mbox{if~and~only~if~}
        \max_{y \in \mc{Y} : y_k = -1} s(x, y) < \what{q}_\alpha(x)
        - \what{\quant}_{1-\alpha} \\
        \yout(x)_k = -1 & ~\mbox{if~and~only~if~}
        \max_{y \in \mc{Y} : y_k = 1} s(x, y) < \what{q}_{\alpha}(x)
        - \what{\quant}_{1-\alpha}.
      \end{split}
    \end{equation}
  \end{enumerate}
}

\begin{corollary}
  \label{corollary:smallest-set-valued}
  Assume
  that $\{(X_i, Y_i)\}_{i=1}^{n+1} \sim P$ are exchangeable, where
  $P$ is an arbitrary distribution.
  Then for any confidence set $\what{C} : \rightrightarrows \mc{Y}$
  satisfying $\what{C}(x) \supset \Cimplicit(x)$ for all $x$,
  \begin{equation*}
    \P(Y_{n+1} \in \what{C}(X_{n+1})) \ge 1 - \alpha.
  \end{equation*}
\end{corollary}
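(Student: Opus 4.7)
The proof is a direct reduction to Theorem~\ref{thm:marginal-coverage}. The plan is to first establish that the implicit set $\Cimplicit$ itself enjoys marginal coverage at least $1-\alpha$, and then invoke monotonicity of probability under set inclusion.

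For the first step, I would observe that, unwinding the definition in~\eqref{eqn:standard-classification-set}, $Y_{n+1} \in \Cimplicit(X_{n+1})$ is equivalent to $s(X_{n+1}, Y_{n+1}) \ge \what{q}_\alpha(X_{n+1}) - \what{\quant}_{1-\alpha}$, i.e., to $\score_{n+1} = \what{q}_\alpha(X_{n+1}) - s(X_{n+1}, Y_{n+1}) \le \what{\quant}_{1-\alpha}$. Conditioning on the $\sigma$-field $\mc{F}_2 = \sigma\{(X_i, Y_i) : i \in \mc{I}_2\}$ used to fit $\what{q}_\alpha$, the scores $\{\score_i\}_{i \in \mc{I}_3 \cup \{n+1\}}$ remain exchangeable (they are a fixed measurable transformation of an exchangeable collection that is independent of $\mc{F}_2$), so Lemma~\ref{lem:inflated_quantiles} yields $\P(\score_{n+1} \le \what{\quant}_{1-\alpha} \mid \mc{F}_2) \ge 1-\alpha$. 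Taking expectations gives $\P(Y_{n+1} \in \Cimplicit(X_{n+1})) \ge 1-\alpha$. This is the exact same argument as in Theorem~\ref{thm:marginal-coverage}, only with the generic score $s$ replacing $\what{s}$; since neither that proof nor Lemma~\ref{lem:inflated_quantiles} uses any property of the score beyond measurability, it transfers verbatim.

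For the second step, set inclusion $\what{C}(x) \supset \Cimplicit(x)$ for every $x$ implies the event inclusion $\{Y_{n+1} \in \Cimplicit(X_{n+1})\} \subset \{Y_{n+1} \in \what{C}(X_{n+1})\}$, so $\P(Y_{n+1} \in \what{C}(X_{n+1})) \ge \P(Y_{n+1} \in \Cimplicit(X_{n+1})) \ge 1-\alpha$.

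There is no real obstacle here; the content of the corollary is just the observation that marginal coverage is preserved under enlargement of the confidence set, and that the argument of Theorem~\ref{thm:marginal-coverage} never used any specific structure of the scoring function. The only thing one must be careful about is the conditioning argument: one should note that $\what{q}_\alpha$ depends on $\mc{I}_2$ but not on $\mc{I}_3 \cup \{n+1\}$, so conditionally on $\mc{F}_2$ the conformity scores on $\mc{I}_3 \cup \{n+1\}$ remain exchangeable, which is what licenses the application of Lemma~\ref{lem:inflated_quantiles}.
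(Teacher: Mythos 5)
Your proof is correct and matches the paper's intended argument: the paper presents this as an immediate corollary of Theorem~\ref{thm:marginal-coverage}, since $\Cimplicit$ is exactly the conformalized set for the given score $s$ and quantile $\what{q}_\alpha$, and any superset inherits its coverage by monotonicity. Your explicit handling of the conditioning on the fold used to fit $\what{q}_\alpha$ is the right care to take and is precisely what the theorem's proof does.
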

\noindent
Instead of the inner/outer set $\Cinout$ in
Eq.~\eqref{eqn:multilabel-io-set}, we could use any confidence set
$\what{C}(x) \supset \Cimplicit(x)$ and maintain
validity. Unfortunately, as we note
above, the set $\Cimplicit$ may be exponentially complex to
represent and compute, necessitating a simplifying construction, such as our
inner/outer approach. Conveniently, the set~\eqref{eqn:multilabel-io-set} we
construct via $\yin$ and $\yout$ satisfying the
conditions~\eqref{eqn:efficient-inner-outer-vecs} satisfies
Corollary~\ref{corollary:smallest-set-valued}. Indeed, we have the
following.
\begin{proposition}
  \label{proposition:inout-fine}
  Let $\yin$ and $\yout$ satisfy the
  conditions~\eqref{eqn:efficient-inner-outer-vecs}.  Then the confidence
  set $\Cinout(x)$ in Algorithm~\ref{alg:tree-cqc} is the smallest set
  containing $\Cimplicit(x)$ and admitting the
  form~\eqref{eqn:multilabel-io-set}.
\end{proposition}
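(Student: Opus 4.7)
The plan is to establish two inclusions that together characterize $\Cinout(x)$ as the smallest set of the form $\{y : a \preceq y \preceq b\}$ containing $\Cimplicit(x)$.

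First, I would show $\Cinout(x) \supseteq \Cimplicit(x)$. Fix any $y \in \Cimplicit(x)$, so that $s(x,y) \ge \what{q}_\alpha(x) - \what{\quant}_{1-\alpha}$. For each coordinate $k$ with $\yin(x)_k = +1$, the condition~\eqref{eqn:efficient-inner-outer-vecs} gives $\max_{y' : y'_k = -1} s(x, y') < \what{q}_\alpha(x) - \what{\quant}_{1-\alpha}$, which forces $y_k = +1$. Coordinates with $\yin(x)_k = -1$ are dominated automatically since $y_k \in \{-1, +1\}$. An entirely symmetric argument on $\yout$ gives $y \preceq \yout(x)$, hence $y \in \Cinout(x)$.

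Second, for minimality, I would take an arbitrary box $B = \{y \in \mc{Y} : a \preceq y \preceq b\}$ with $B \supseteq \Cimplicit(x)$ and show $a \preceq \yin(x)$ and $\yout(x) \preceq b$; by monotonicity of boxes, this gives $\Cinout(x) \subseteq B$. Consider any $k$ with $a_k = +1$: since every $y \in \Cimplicit(x)$ obeys $a \preceq y$, no $y$ with $y_k = -1$ can belong to $\Cimplicit(x)$, so $\max_{y' : y'_k = -1} s(x, y') < \what{q}_\alpha(x) - \what{\quant}_{1-\alpha}$. By~\eqref{eqn:efficient-inner-outer-vecs}, $\yin(x)_k = +1 \ge a_k$. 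Coordinates with $a_k = -1$ are trivial. Reversing the roles of $\pm 1$ handles $\yout(x) \preceq b$.

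The proof is essentially a coordinate-by-coordinate extremal characterization---the conditions~\eqref{eqn:efficient-inner-outer-vecs} define $\yin$ to have as many $+1$ entries as is consistent with containing $\Cimplicit(x)$, and $\yout$ as few. I do not anticipate any real obstacle. The only mildly subtle point is the edge case $\Cimplicit(x) = \emptyset$: both maxima in~\eqref{eqn:efficient-inner-outer-vecs} then fall below the threshold, yielding $\yin(x) = \mathbf{1}$ and $\yout(x) = -\mathbf{1}$, and hence $\Cinout(x) = \emptyset$, which is vacuously consistent with the claim.
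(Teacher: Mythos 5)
Your proof is correct and follows essentially the same route as the paper's: first containment of $\Cimplicit(x)$ via the coordinatewise extremal characterization of $\yin$ and $\yout$ implied by the conditions~\eqref{eqn:efficient-inner-outer-vecs}, then minimality by comparing coordinatewise to an arbitrary competitor box containing $\Cimplicit(x)$. The remark on the edge case $\Cimplicit(x) = \emptyset$ is a harmless (and correct) addition.
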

\begin{proof}
  The conditions~\eqref{eqn:efficient-inner-outer-vecs} immediately
  imply
  \begin{equation*}
    \yin(x)_k = \min_{y \in \Cimplicit(x)} y_k
    ~~ \mbox{and} ~~
    \yout(x)_k = \max_{y \in \Cimplicit(x)} y_k,
  \end{equation*}
  which shows that $\Cimplicit(x) \subset \Cinout(x)$.  On the
  other hand, suppose that $\tilde{y}_\textup{in}(x)$ and
  $\tilde{y}_\textup{out}(x)$ are configurations inducing a confidence set
  $\tilde{C}(x)$ that satisfies $\Cimplicit(x) \subset
  \tilde{C}(x)$.  Then for any label $k$ included in
  $\tilde{y}_\textup{in}(x)$, all configurations $y \in
  \Cimplicit(x)$ satisfy $y_k=1$, as
  $\Cimplicit(x)\subset \tilde{C}(x)$, so we must have
  $\tilde{y}_\textup{in}(x)_k \le \yin(x)_k$.  The argument to prove
  $\tilde{y}_\text{out}(x)_k \ge \yout(x)_k$ is similar.
\end{proof}

The expansion from $\Cimplicit(x)$ to $\Cinout(x)$ may increase the size of
the confidence set, most notably in cases when labels repel each other.  As
a paradoxical worst-case, if $\Cimplicit(x)$ includes only each of the $K$
single-label configurations
then $\Cinout = \{-1 ,1\}^K$.  In such cases, refinement of the
inner/outer sets may be necessary; we outline an approach that considers
unions of such sets in Sec.~\ref{sec:union-inner-outer-sets} to come.  Yet
in problems for which we have strong predictors, we typically do not expect
``opposing'' configurations $y$ and $-y$ to both belong to $\Cinout(x)$,
which limits the increase in practice; moreover, in the case that the
standard implicit confidence set $\Cimplicit$ is a singleton, there is a
single $y \in \mc{Y}$ satisfying $s(x,y) \ge \what{q}_\alpha(x) -
\what{\quant}_{1-\alpha}$ and by definition $\yin(x) = \yout(x)$, so that
$\Cinout = \Cimplicit$.



\subsubsection{Unions of inner and outer sets}
\label{sec:union-inner-outer-sets}

As we note above, it can be beneficial to approximate the implicit
set~\eqref{eqn:standard-classification-set} more carefully; here, we
consider a union of easily representable sets.  The idea is that if two
tasks always have opposing labels, the confidence sets
should reflect that, yet it is possible that the naive
condition~\eqref{eqn:efficient-inner-outer-vecs} fails this check. For
example, consider a set $\Cimplicit(x)$ for which
any configuration $y \in \Cimplicit(x)$ satisfies $y_1 = -y_2$, but
which contains labels with both $y_1 = 1$ and $y_1 = -1$. In this case,
necessarily $\yin(x)_1 = \yin(x)_2 = -1$ and $\yout(x)_1 = \yout(x)_2 = 1$.
If instead we construct two sets of inner and outer vectors
$\yin^{(i)}, \yout^{(i)}$ for $i = 1, 2$, where
\begin{equation*}
  \yin^{(1)}(x)_1 = -\yin^{(1)}(x)_2 = 1
  ~~ \mbox{and} ~~
  \yin^{(2)}(x)_2 = -\yin^{(2)}(x)_2 = 1,
\end{equation*}
then choose the remaining labels $k = 3, \ldots, K$
so that $\yin^{(1)}(x) \preceq y \preceq \yin^{(1)}(x)$ for all
$y \in \Cimplicit(x)$ satisfying $y_1 = -y_2 = 1$, and vice-versa
for $\yin^{(2)}$ and $\yout^{(2)}$, we may evidently reduce the size of the
confidence set $\Cinout(x)$ by half while maintaining validity.

Extending this idea, let $I \subset [K]$ denote a set of indices. We
consider inner and outer sets $\yin$ and $\yout$ that index all
configurations of the labels $y_I = (y_i)_{i \in I} \in \{\pm 1\}^m$,
so that analogizing
the condition~\eqref{eqn:efficient-inner-outer-vecs}, we define the
$2^m$ inner and outer sets
\begin{subequations}
  \label{eqn:union-efficient-inner-outer}
  \begin{equation}
    \begin{split}
      \yin(x, y_I)_k & =
      \min\{y'_k \mid y' \in \Cimplicit(x), y'_I = y_I\} \\
      & = \begin{cases}
        -1 & \mbox{if~}
        \max_{y' : y'_k = -1, y'_I = y_I}
        s(x, y') \ge \what{q}_\alpha(x) - \what{\quant}_{1 - \alpha} \\
        1 & \mbox{if~}
        \max_{y' : y'_k = 1, y'_I = y_I}
        s(x, y') \ge \what{q}_\alpha(x) - \what{\quant}_{1 - \alpha}
        ~ \mbox{and~preceding~fails}\\
        + \infty & \mbox{otherwise},
      \end{cases}
    \end{split}
  \end{equation}
  and similarly
  \begin{equation}
    \yout(x, y_I)_k = \max\{y_k' \mid y' \in \Cimplicit(x), y'_I = y_I\}.
  \end{equation}
\end{subequations}
For any $I \subset [K]$ with $|I| = m$, we can then define the
index-based inner/outer confidence set
\begin{equation}
  \label{eqn:union-multilabel-io-set}
  \Cinout(x, I) \defeq
  \cup_{y_I \in \{\pm 1\}^m}
  \left\{y \in \mc{Y} \mid \yin(x, y_I) \preceq y \preceq \yout(x, y_I)
  \right\},
\end{equation}
which analogizes the function~\eqref{eqn:multilabel-io-set}.  When $m$ is
small, this union of rectangles is efficiently representable, and gives a
tighter approximation to $\Cimplicit$ than does the simpler
representation~\eqref{eqn:multilabel-io-set}; indeed, if for some pair $(i,
j) \in I$ we have $y_i = -y_j$ for all $y \in \Cimplicit(x)$, but for which
there are vectors $y \in \Cimplicit(x)$ realizing $y_i = 1$ and $y_i =
-1$, then $|\Cinout(x, I)| \le |\Cinout(x)| / 2$.  Moreover, no matter the
choice $I$ of the index set, we have the containment $\Cinout(x, I)
\supset \Cimplicit(x)$, so that
Corollary~\ref{corollary:smallest-set-valued} holds and $\Cinout$ provides
valid marginal coverage.  The sets~\eqref{eqn:union-efficient-inner-outer}
are efficiently computable for the scoring functions $s$ we consider
(cf.\ Sec.~\ref{sec:efficient-inner-outer-sets}).

The choice of the indices $I$ over which to split the rectangles requires
some care. A reasonable heuristic is to
obtain the inner/outer vectors $\yin(x)$ and $\yout(x)$ in
Alg.~\ref{alg:tree-cqc}, and if they provide too large a confidence set,
select a pair of variables $I = (i, j)$ for which $\yin(x)_{i,j} = (-1, -1)$
while $\yout(x)_{i, j} = (1, 1)$. To choose the pair, we suggest
the most negatively correlated pair of labels in the training data that
satisfy this joint inclusion; the heuristic is to find those pairings most
likely to yield empty confidence sets in the
collections~\eqref{eqn:union-efficient-inner-outer}.

\subsection{Efficient construction of inner
  and outer confidence sets}
\label{sec:efficient-inner-outer-sets}

With the validity of any inner/outer construction verifying the
conditions~\eqref{eqn:efficient-inner-outer-vecs} established, we turn to
two approaches to efficiently satisfy these.  The first focuses
on the scenario where a prediction method provides individual scoring
functions $s_k : \mc{X} \to \R$ for each task $k \in [K]$ (as frequent for
complex classifiers, such as random forests or deep
networks~\cite[e.g.][]{HerreraChRiDe16,ReadPfHoFr11}), while the second
considers the case when we have a scoring function $s : \mc{X} \times \mc{Y}
\to \R$ that is tree-structured in a sense we make precise; in the next
section, we will show how to construct such tree-structured scores using any
arbitrary multilabel prediction method.

\newcommand{\tinner}{t_{\textup{in}}}
\newcommand{\touter}{t_{\textup{out}}}
\newcommand{\tinnerhat}{\hat{t}_{\textup{in}}}
\newcommand{\touterhat}{\hat{t}_{\textup{out}}}

\paragraph{A direct inner/outer method using individual task scores}
We assume here that we observe individual scoring functions $s_k :
\mc{X} \to \R$ for each task. We can construct inner and outer
sets using only these scores while neglecting label correlations
by learning threshold functions
$\tinner \ge \touter : \mc{X} \to \R$, where we would like to have
the (true) labels $y_k$ satisfy
\begin{equation*}
  \sign(s_k(x) - \tinner(x)) \le y_k \le \sign(s_k(x) - \touter(x)).
\end{equation*}
In Algorithm~\ref{alg:direct-inner-outer}, we accomplish
this via quantile threshold functions on the maximal and minimal
values of the scores $s_k$ for positive and negative labels, respectively.

\algbox{\label{alg:direct-inner-outer} Split Conformalized
  Direct Inner/Outer method for Classification
  (CDioC).
}{
  \textbf{Input:} Sample $\{(X_i, Y_i)\}_{i = 1}^n$, disjoint
  index sets $\mc{I}_2, \mc{I}_3 \subset [n]$,
  quantile functions $\mc{Q}$, desired confidence level $\alpha$,
  and $K$ score functions $s_k : \mc{X} \to \R$.
  \begin{enumerate}[1.]
  \item Fit threshold functions (noting the sign conventions)
    \begin{equation}
      \begin{split}
        \tinnerhat & = -\argmin_{t \in \mc{Q}}
        \bigg\{\sum_{i \in \mc{I}_2} \pinball_{\alpha/2}
        \Big(\!-\max_k \{s_k(X_i) \mid Y_{i,k} = -1\} - t(X_i)\Big)
        \bigg\} \\
        \touterhat & =
        \argmin_{t \in \mc{Q}}
        \bigg\{\sum_{i \in \mc{I}_2} \pinball_{\alpha/2}
        \Big(\! \min_k \{s_k(X_i) \mid Y_{i,k} = 1\} - t(X_i)\Big)
        \bigg\}
      \end{split}
      \label{eqn:fit-inner-outer-thresholds}
    \end{equation}
  \item \label{item:direct-in-out-score}
    Define score
    $s(x, y) = \min\{ \min_{k : y_k = 1} s_k(x) - \touterhat(x),
    \tinnerhat(x) - \max_{k : y_k = -1} s_k(x)\}$ and
    compute conformity scores $\score_i \defeq -s(X_i, Y_i)$.
  \item Let $\what{\quant}_{1-\alpha}$ be the
    $(1 + 1 / |\mc{I}_3|)(1 - \alpha)$-empirical quantile of
    $\scores = \{\score_i\}_{i \in \mc{I}_3}$
    and
    \begin{equation*}
      \tinner(x) \defeq \tinnerhat(x) + \what{\quant}_{1 - \alpha}
      ~~ \mbox{and} ~~
      \touter(x) \defeq \touterhat(x) - \what{\quant}_{1-\alpha}.
    \end{equation*}
  \item Define $\yin(x)_k = \sign(s_k(x) - \tinner(x))$ and
    $\yout(x)_k = \sign(s_k(x) - \touter(x))$
    and return prediction set
    $\Cinout$ as in Eq.~\eqref{eqn:multilabel-io-set}.
  \end{enumerate}
}

The method is a slightly modified instantiation of the CQioC method
in Alg.~\ref{alg:tree-cqc} that allows easier computation. We
can also see that it guarantees validity.
\begin{corollary}
  Assume that $\{(X_i, Y_i)\}_{i = 1}^{n + 1} \sim P$ are exchangeable, where
  $P$ is an arbitrary distribution.  Then the
  confidence set $\Cinout$ in Algorithm~\ref{alg:direct-inner-outer}
  satisfies
  \begin{equation*}
    \P(Y_{n + 1} \in \Cinout(X_{n+1})) \ge 1 - \alpha.
  \end{equation*}  
\end{corollary}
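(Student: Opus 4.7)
My plan is to reduce this to the same kind of rank argument used in Theorem~\ref{thm:marginal-coverage}, by showing that Algorithm~\ref{alg:direct-inner-outer} is essentially an instantiation of the CQioC template with quantile function identically zero and the bespoke score $s(x,y)$ defined in step~\ref{item:direct-in-out-score}. The core claim to establish is the identity
\begin{equation*}
\{Y_{n+1} \in \Cinout(X_{n+1})\} = \{\score_{n+1} \le \what{\quant}_{1-\alpha}\},
\end{equation*}
where $\score_{n+1} = -s(X_{n+1},Y_{n+1})$. Given this identity, the corollary follows exactly as in Theorem~\ref{thm:marginal-coverage}.

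First I would unpack what $Y_{n+1} \in \Cinout(X_{n+1})$ means componentwise. Using $\yin(x)_k = \sign(s_k(x)-\tinner(x))$ and $\yout(x)_k = \sign(s_k(x)-\touter(x))$, the containment $\yin(x) \preceq Y_{n+1} \preceq \yout(x)$ is equivalent to $s_k(X_{n+1}) \ge \touter(X_{n+1})$ for every $k$ with $Y_{n+1,k}=1$ and $s_k(X_{n+1}) \le \tinner(X_{n+1})$ for every $k$ with $Y_{n+1,k}=-1$. Substituting $\tinner = \tinnerhat + \what{\quant}_{1-\alpha}$ and $\touter = \touterhat - \what{\quant}_{1-\alpha}$, this pair of inequalities becomes
\begin{equation*}
\min_{k:Y_{n+1,k}=1}\!\bigl(s_k(X_{n+1}) - \touterhat(X_{n+1})\bigr) \;\ge\; -\what{\quant}_{1-\alpha}
\end{equation*}
and the symmetric statement $\tinnerhat(X_{n+1}) - \max_{k:Y_{n+1,k}=-1} s_k(X_{n+1}) \ge -\what{\quant}_{1-\alpha}$. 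Taking the minimum of the two left-hand sides recovers exactly $s(X_{n+1},Y_{n+1})$ from step~\ref{item:direct-in-out-score}, so the joint condition collapses to $s(X_{n+1},Y_{n+1}) \ge -\what{\quant}_{1-\alpha}$, i.e., $\score_{n+1} \le \what{\quant}_{1-\alpha}$, as claimed.

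Next I would invoke exchangeability. Let $\mc{F}_2 = \sigma\{(X_i,Y_i) : i \in \mc{I}_2\}$; the threshold functions $\tinnerhat$ and $\touterhat$, and hence the score $s$, are $\mc{F}_2$-measurable. Since $\{(X_i,Y_i)\}_{i=1}^{n+1}$ is exchangeable and $\mc{I}_3$ is disjoint from $\mc{I}_2$, the family $\{(X_i,Y_i)\}_{i \in \mc{I}_3 \cup \{n+1\}}$ is exchangeable conditionally on $\mc{F}_2$. Applying the conformity map $(x,y) \mapsto -s(x,y)$, which is a fixed measurable function once we condition on $\mc{F}_2$, the conformity scores $\{\score_i\}_{i \in \mc{I}_3 \cup \{n+1\}}$ inherit this conditional exchangeability.

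Finally I would apply the first part of Lemma~\ref{lem:inflated_quantiles} conditionally on $\mc{F}_2$ with $\alpha$ replaced by $1-\alpha$: since $\what{\quant}_{1-\alpha}$ is the $(1+1/|\mc{I}_3|)(1-\alpha)$-empirical quantile of $\{\score_i\}_{i \in \mc{I}_3}$, we obtain $\P(\score_{n+1} \le \what{\quant}_{1-\alpha} \mid \mc{F}_2) \ge 1-\alpha$. Taking expectations over $\mc{F}_2$ and using the equivalence established above yields $\P(Y_{n+1} \in \Cinout(X_{n+1})) \ge 1-\alpha$. The only non-routine step is the componentwise unpacking that identifies the inner/outer containment with a single inequality on $s(X_{n+1},Y_{n+1})$; everything after that is a direct translation of the CQC/CQioC validity argument.
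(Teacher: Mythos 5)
Your proposal is correct and follows essentially the same route as the paper: both arguments reduce membership in $\Cinout(X_{n+1})$ to the single threshold condition $s(X_{n+1},Y_{n+1}) \ge -\what{\quant}_{1-\alpha}$ on the scalar score from step~\ref{item:direct-in-out-score} and then invoke the exchangeability/quantile argument of Theorem~\ref{thm:marginal-coverage} (the paper phrases this as verifying the conditions~\eqref{eqn:efficient-inner-outer-vecs} so that Corollary~\ref{corollary:smallest-set-valued} applies, but the content is identical). The only caveat—shared equally by the paper's own proof—is the handling of exact ties at $\sign(0)$, where strict versus non-strict inequalities do not quite match; this is immaterial when the scores have no atoms.
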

\begin{proof}
  We show that the definitions of $\yin$ and $\yout$ in
  Alg.~\ref{alg:direct-inner-outer} are special cases of the
  condition~\eqref{eqn:efficient-inner-outer-vecs}, which then
  allows us to apply Corollary~\ref{corollary:smallest-set-valued}.
  We focus on the inner set, as the outer is similar.
  Suppose in Alg.~\ref{alg:direct-inner-outer}
  that $\yin(x)_k = 1$. Then
  $s_k(x) - \tinnerhat(x) - \what{\quant}_{1-\alpha} \ge 0$, which
  implies that
  $\tinnerhat(x) - s_k(x) \le -\what{\quant}_{1 - \alpha}$,
  and for any $y \in \mc{Y}$ satisfying $y_k = -1$, we have
  \begin{equation*}
    \tinnerhat(x) - \max_{l : y_l = -1} s_l(x) \le -\what{\quant}_{1-\alpha}.
  \end{equation*}
  For the scores $s(x, y)$ in
  line~\ref{item:direct-in-out-score} of Alg.~\ref{alg:direct-inner-outer},
  we then immediately obtain $s(x, y) \le - \what{\quant}_{1-\alpha}$ for
  any $y \in \mc{Y}$ with $y_k = -1$.  This is the first
  condition~\eqref{eqn:efficient-inner-outer-vecs}, so that (performing,
  \emph{mutatis-mutandis}, the same argument with $\yout(x)_k$)
  Corollary~\ref{corollary:smallest-set-valued} implies the validity of
  $\Cinout$ in Algorithm~\ref{alg:direct-inner-outer}.
\end{proof}

\paragraph{A prediction method for tree-structured scores}

We now turn to efficient computation of the inner and outer
vectors~\eqref{eqn:efficient-inner-outer-vecs} when the scoring function $s:
\mc{X} \times \mc{Y} \to \R$ is tree-structured. By this we mean that there
is a tree $\mc{T} = ([K], E)$, with nodes $[K]$ and edges $E \subset [K]^2$,
and an associated set of pairwise and singleton factors $\psi_e : \{-1,1\}^2
\times \mc{X} \to \R$ for $e \in E$ and $\varphi_k : \{-1, 1\} \times \mc{X}
\to \R$ for $k \in [K]$ such that
\begin{equation}
  \label{eqn:tree-structured-score}
  s(x, y) = \sum_{k = 1}^K \varphi_k(y_k, x)
  + \sum_{e = (k, l) \in E} \psi_e(y_k, y_l, x).
\end{equation}
Such a score allows us to consider interactions between tasks $k, l$ while
maintaining computational efficiency, and we will show how to construct such
a score function both from arbitrary multilabel prediction methods and from
those with individual scores as above.  When we have a tree-structured
score~\eqref{eqn:tree-structured-score}, we can use efficient
message-passing algorithms~\cite{Dawid92a, KollerFr09} to compute the
collections (maximum marginals) of scores
\begin{equation}
  \label{eqn:max-marginals}
  \mc{S}_- \defeq \Big\{\max_{y \in \mc{Y} : y_k = -1} s(x,y)\Big\}_{k = 1}^K
  ~~ \mbox{and} ~~
  \mc{S}_{+} \defeq \Big\{\max_{y \in \mc{Y} : y_k = 1} s(x,y)\Big\}_{k = 1}^K
\end{equation}
in time $O(K)$, from which it is immediate to construct
$\yin$ and
$\yout$ as in the conditions~\eqref{eqn:efficient-inner-outer-vecs}.
We outline the approach in
Appendix~\ref{sec:computation-of-min-max-tree}, as it is not the central
theme of this paper, though this efficiency highlights the importance of the
tree-structured scores for practicality.

\subsection{Building tree-structured scores}
\label{sec:multilabel-arbitrary}

With the descriptions of the generic multilabel conformalization method in
Alg.~\ref{alg:tree-cqc} and that we can efficiently compute predictions
using tree-structured scoring functions~\eqref{eqn:tree-structured-score},
we now turn to constructing such scoring functions from predictors, which
trade between label dependency structure, computational efficiency, and
accuracy of the predictive function. We begin with a general case of an
arbitrary predictor function, then describe a heuristic graphical model
construction when individual label scores are available (as we assume in
Alg.~\ref{alg:direct-inner-outer}).
 
\paragraph{From arbitrary predictions to scores}
We begin with the most general case that we have access only to a predictive
function $\what{y}: \mathcal{X} \to \R^K$. This prediction function is
typically the output of some learning algorithm, and in the generality here,
may either output real-valued scores $\what{y}_k(x) \in \R$ or simply output
$\what{y}_k(x) \in \{-1, 1\}$, indicating element $k$'s presence.

We compute a regularized scoring function based on a
tree-structured graphical model (cf.~\cite{KollerFr09}) as follows.
Given a tree $\mathcal{T}=([K], E)$ on the labels $[K]$ and parameters
$\alpha \in \R^K$, $\beta \in \R^E$, we define
\begin{align}
  \label{eqn:tree-score-function}
  s_{\mc{T},\alpha, \beta}(x,y) \defeq \sum_{k=1}^K \alpha_k y_k \what{y}_k(x)
  + \sum_{e=(k,l) \in E} \beta_e y_k y_l
\end{align}
for all $(x,y) \in \mathcal{X} \times \{-1, 1\}^K$, where we recall that we
allow $\what{y}_k(x) \in \R$. We will find the tree $\mc{T}$ assigning
the highest (regularized) scores to the true data $(X_i, Y_i)_{i=1}^n$
using efficient dynamic programs
reminiscent of the Chow-Liu algorithm~\cite{ChowLi68}.
To that end, we use Algorthim~\ref{alg:score-tree}.

\algbox{
  \label{alg:score-tree} Method to find optimal tree from
  arbitrary predictor $\what{y}$.
}{
  \textbf{Input:} Sample $\{(X_i, Y_i)\}_{i \in \mc{I}_1}$,
  regularizers $r_1, r_2 : \R \to \R$,
  predictor
  $\what{y} : \mc{X} \to \R^K$.

  Set
  \begin{equation}
    \label{eqn:tree-score-function-mle}
    (\what{\mc{T}}, \what{\alpha}, \what{\beta})
    \defeq \argmax_{\mc{T} = ([K], E), \alpha, \beta}
    \bigg\{\sum_{i = 1}^n s_{\mc{T}, \alpha, \beta}(X_i, Y_i) -
    \sum_{k = 1}^K r_1(\alpha_k)
    - \sum_{e \in E} r_2(\beta_e)\bigg\}.
  \end{equation}
  and return score function $s_{\what{\mc{T}}, \what{\alpha},
    \what{\beta}}$ of form~\eqref{eqn:tree-score-function}.
}

\noindent
Because the regularizers $r_1, r_2$ decompose along the edges and nodes of
the tree, we can implement Alg.~\ref{alg:score-tree} using a maximum
spanning tree algorithm. Indeed, recall~\cite{BoydVa04}
the familiar convex conjugate
$r^*(t) \defeq \sup_\alpha \{\alpha t - r(\alpha)\}$. Then
immediately
\begin{align*}
  \lefteqn{\sup_{\alpha, \beta} \bigg\{\sum_{i = 1}^n s_{\mc{T}, \alpha, \beta}(X_i, Y_i)
    - \sum_{k = 1}^K r_1(\alpha_1) - \sum_{e \in E} r_2(\beta_e)\bigg\}} \\
  & \qquad\qquad\qquad\qquad
  = \sum_{k = 1}^K r_1^*\left(\sum_{i = 1}^n Y_{i,k} \what{y}_k(X_i)\right)
  + \sum_{e = (k, l)\in E} r_2^*\left(\sum_{i = 1}^n Y_{i,k} Y_{i,l}\right),
\end{align*}
which decomposes along the edges of the putative tree. As a consequence, we
may solve problem~\eqref{eqn:tree-score-function-mle} by finding the maximum
weight spanning tree in a graph with edge weights $r_2^*(\sum_{i = 1}^n
Y_{i,k} Y_{i, l})$ for each edge $(k, l)$, then choosing $\alpha, \beta$ to
maximize the objective~\eqref{eqn:tree-score-function-mle}, which is a
collection of 1-dimensional convex optimization problems.

\paragraph{From single-task scores to a tree-based probabilistic model}
While Algorithm~\ref{alg:score-tree} will work regardless of the predictor
it is given---which may simply output a vector $\what{y} \in \{-1, 1\}^K$,
as in Alg.~\ref{alg:direct-inner-outer} it is frequently the case that
multilabel methods output scores $s_k : \mc{X} \to \R$ for each task.  To
that end, a natural strategy is to model the distribution of $Y \mid X$
directly via a tree-structured graphical model~\cite{LaffertyMcPer01}.
Similar to the score in Eq.~\eqref{eqn:tree-structured-score}, we define
interaction factors $\psi: \{-1,1\}^2 \rightarrow \R^4$ by $\psi(-1,-1) =
e_1$, $\psi(1,-1) = e_2$, $\psi(-1,1) = e_3$ and $\psi(1,1) = e_4$, the
standard basis vectors, and marginal factors
$\varphi_k : \{-1, 1\} \times \mc{X} \to \R^2$ with
\begin{equation*}
  \varphi_k(y_k, x)
  \defeq \half \left[\begin{matrix} 
      (y_k - 1) \cdot s_k(x) \\
      (y_k + 1) \cdot s_k(x) \end{matrix} \right],
\end{equation*}
incorporating information $s_k(x)$ provides on $y_k$.
For a tree $\mc{T} = ([K], E)$, the label model is
\begin{equation}
  \label{multilabel-pgmmethod-treelikelihood}
  p_{\mathcal{T}, \alpha, \beta} \left(y \mid x \right) 
  \propto 
  \exp \biggl(
  \sum_{e = (k,l) \in E} \beta_e^T \psi(y_k, y_l)
  + \sum_{k=1}^K \alpha_k^T \varphi_k(y_k,x)
  \biggr),
\end{equation}
where $(\alpha, \beta)$ is a set of parameters such
that, for each edge $e \in E$, $\beta_e \in \R^4$ and $\ones^T \beta_e = 0$
(for identifiability), while $\alpha_k \in \R^2$ for each
label $k \in [K]$.
%
Because we view this as a ``bolt-on'' approach, applicable to \emph{any}
method providing scores $s_k$, we include only pairwise label interaction
factors independent of $x$, allowing singleton factors to depend on the
observed feature vector $x$ through the scores $s_k$.

The log-likelihood $\log p_{\mc{T}, \alpha, \beta}$ is convex
in $(\alpha, \beta)$ for any fixed tree $\mc{T}$, and the
Chow-Liu decomposition~\cite{ChowLi68} of the likelihood of a tree
$\mc{T} = ([K], E)$ gives
\begin{equation}
  \label{eqn:tree-log-likelihood}
  \log p_{\mc{T}, \alpha,\beta}(y \mid x)
  = \sum_{k = 1}^K \log p_{\mc{T}, \alpha, \beta}(y_k \mid x)
  + \sum_{e = (k, l) \in E}
  \log \frac{p_{\mc{T}, \alpha, \beta}(y_k, y_l \mid x)}{
    p_{\mc{T}, \alpha, \beta}(y_k \mid x)
    p_{\mc{T}, \alpha, \beta}(y_l \mid x)},
\end{equation}
that is, the sum of the marginal log-likelihoods and pairwise mutual
information terms,
conditional on $X = x$.
Given a sample $(X_i, Y_i)_{i = 1}^n$, the goal is to then
solve
\begin{align}
  \label{eqn:multilabel-pgmmethod-treeMLE}
  \maximize_{\mathcal{T}, \alpha, \beta}
  L_n(\mathcal{T}, \alpha, \beta)  \defeq
  \sum_{i=1}^n \log p_{\mathcal{T}, \alpha,\beta} \left(Y_i \mid X_i \right).
\end{align}
When there is no conditioning on $x$, the pairwise mutual
information terms $\log \frac{p_{\mc{T},\alpha,\beta}(y_k, y_l)}{p(y_k)
  p(y_l)}$ are independent of the tree $\mc{T}$~\cite{ChowLi68}.
We heuristically compute empirical conditional mutual informations between
each pair $(k, l)$ of tasks,
choosing the tree $\mc{T}$ that maximizes these values to approximate
problem~\eqref{eqn:multilabel-pgmmethod-treeMLE} in
Algorithm~\ref{alg:multilabel-pgm-method}, using the selected tree
$\what{\mc{T}}$
to choose $\alpha, \beta$ maximizing $L_n(\what{\mc{T}}, \alpha, \beta)$.
(In the algorithm we superscript $Y$ to make task labels versus
observations clearer.)


\algbox{
  \label{alg:multilabel-pgm-method}
  Chow-Liu-type approximate Maximum Likelihood Tree and Scoring Function
}{
  \textbf{Input:}
  Sample $\{(X^{(i)}, Y^{(i)})\}_{i \in \mc{I}_1}$, and $K$ score functions $s_k : \mc{X} \to \R$.

  \textbf{For} each pair $e = (k,l) \in [K]^2$
  \begin{enumerate}
  \item Define the single-edge tree $\mc{T}_e = (\{k, l\}, \{e\})$
  \item \label{item:fit-tree-likelihood}
    Fit model~\eqref{eqn:tree-log-likelihood}
    for tree $\mc{T}_e$ via
    $(\what{\alpha}, \what{\beta}) \defeq
    \argmax_{\alpha,\beta} L_n(\mc{T}_e, \alpha, \beta)$
  \item Estimate edge empirical mutual information
    \begin{equation*}
      \what{I}_{e} \defeq \sum_{i=1}^n
      \log\bigg(
      \frac{
        p_{\mathcal{T}_e, \what{\alpha}, \what{\beta}}(Y_k^{(i)}, Y_l^{(i)} \mid X^{(i)})
      }{
        p_{\mathcal{T}_e, \what{\alpha}, \what{\beta}}(Y_k^{(i)} \mid X^{(i)})
        p_{\mathcal{T}_e, \what{\alpha}, \what{\beta}}(Y_l^{(i)} \mid X^{(i)})
      }
      \bigg)
    \end{equation*}
  \end{enumerate}

  \textbf{Set} $\what{\mathcal{T}} =
  \textsc{MaxSpanningTree}((\what{I}_e)_{e \in [K]^2})$ and $(\what{\alpha},
  \what{\beta}) = \argmax_{\alpha, \beta} L_n(\what{\mc{T}}, \alpha,
  \beta)$.

  \textbf{Return} scoring function
  \begin{equation*}
    s_{\what{\mc{T}}, \what{\alpha}, \what{\beta}}(x, y)
    \defeq 
    \sum_{e = (k, l) \in E}
    \what{\beta}_e^T \psi(y_k, y_l) + \sum_{k = 1}^K \what{\alpha}_k^T
    \varphi_k(y_k, x)
  \end{equation*}
}

\noindent
The algorithm takes time roughly $O(nK^2 + K^2 \log(K))$, as each
optimization step~\ref{item:fit-tree-likelihood}
solves an $O(1)$-dimensional concave maximization problem, which is
straightforward via a Newton method (or gradient descent).  The
approach does not guarantee recovery of the correct tree structure even if
the model is well-specified, as we neglect information coming
from labels other than $k$ and $l$ in the estimates $\what{I}_e$ for edge $e
= (k, l)$, although we expect the heuristic to return sufficiently
reasonable tree structures. In any case, the actual scoring function $s$ it
returns still allows efficient conformalization and valid predictions via
Alg.~\ref{alg:tree-cqc}, regardless of its accuracy; a more accurate tree
will simply allow smaller and more accurate confidence
sets~\eqref{eqn:efficient-inner-outer-vecs}.

\newcommand{\scorelesstree}{ArbiTree-CQC\xspace}
\newcommand{\scorelesstreeshort}{ArbiTree\xspace}
\newcommand{\pgmtree}{PGM-CQC\xspace}
\newcommand{\pgmtreeshort}{PGM\xspace}

\section{Experiments}
\label{sec:experiments}

\newcommand{\VC}{\textup{VC}}

Our main motivation is to design methods with more robust conditional
coverage than the ``marginal'' split-conformal
method~\eqref{eqn:marginal-method}. Accordingly, the
methods we propose in Sections~\ref{sec:multiclass} and~\ref{sec:multilabel}
fit conformalization scores that depend on 
features $x$ and, in some cases, model dependencies among $y$ variables.
Our experiments consequently focus on more robust notions of coverage
than the nominal marginal coverage the methods guarantee, and we
develop a new evaluation metric for validation and testing of coverage, looking
at connected subsets of the space $\mc{X}$ and studying coverage over these.
Broadly, we expect our proposed methods to maintain coverage of (nearly)
$1 - \alpha$ across subsets; the experiments
are consistent with this expectation. We include a few additional
plots in supplementary appendices for completeness.

\paragraph{Measures beyond marginal coverage} 
Except in simulated experiments, we cannot compute conditional coverage of
each instance, necessitating approximations that provide more
conditional-like measures of coverage---where methods providing weaker
marginal coverage may fail to uniformly cover---while still allowing
efficient computation.  To that end, we consider at coverage over slabs
\begin{equation*}
  S_{v,a,b} \defeq \left\{ x \in \R^d \mid a \leq v^T
  x \leq b \right\},
\end{equation*}
where $v \in \R^d$ and $a < b \in \R$, which satisfy these desiderata.  For
a direction $v$ and threshold $0 < \delta \le 1$, we consider the worst
coverage over all slabs containing $\delta$ mass in $\{X_i\}_{i=1}^n$,
defining
\begin{align}
  \label{eqn:worst-slab-computation}
  \text{WSC}_n(\what{C},v) \defeq
  \inf_{a < b}
  \left\{P_n(Y \in \what{C}(X) \mid a \le v^T X \le b)
  ~~~ \mbox{s.t.}~~
  P_n(a \le v^T X \le b) \ge \delta\right\},
\end{align}
where $P_n$ denotes the empirical distribution on $(X_i, Y_i)_{i=1}^n$,
which is efficiently computable in $O(n)$ time~\cite{ChungLu03}.  As long as
the mapping $\what{C} : \mc{X} \rightrightarrows \mc{Y}$ is constructed
independently of $P_n$, we can show that these quantities
concentrate. Indeed, let us temporarily assume that the confidence set has
the form $\what{C}(x) = \{y \mid s(x, y) \ge q(x)\}$ for an arbitrary
scoring function $s : \mc{X} \times \mc{Y} \to \R$ and threshold functions
$q$. Let $V \subset \R^d$; we abuse notation to let $\VC(V)$ be the
VC-dimension of the set of halfspaces it induces, where we note that $\VC(V)
\le \min\{d, \log_2 |V|\}$. Then for some numerical constant $C$, for all $t
> 0$
\begin{align}
  \sup_{v \in V, a \le b : P_n(X \in S_{v,a,b}) \ge \delta} &
  \left\{|P_n(Y \in \what{C}(X) \mid X \in S_{v,a,b})
  - P(Y \in \what{C}(X) \mid X \in S_{v, a, b})|
  \right\} \label{eqn:conditional-probs-close} \\
  & \qquad \qquad \le C \sqrt{\frac{\VC(V) \log n + t}{\delta n}}
  \le C \sqrt{\frac{\min\{d, \log |V|\} \log n + t}{\delta n}}
  \nonumber
\end{align}
with probability at least $1 - e^{-t}$.
(See Appendix~\ref{sec:localization-vc-argument} for a brief
derivation of inequality~\eqref{eqn:conditional-probs-close} and a few
other related inequalities.) 

Each of the confidence sets we develop in this paper satisfy $\what{C}(x)
\supset \{y \mid s(x, y) \ge q(x)\}$ for some scoring function $s$ and
function $q$. Thus, if $\what{C}: \mc{X} \rightrightarrows \mc{Y}$
effectively provides conditional coverage at level $1 - \alpha$, we should
observe that
\begin{equation*}
  \inf_{v \in V} \text{WSC}_n(\what{C},v)
  \ge 1 - \alpha - O(1) \sqrt{\frac{\VC(V) \log n}{\delta n}}
  \ge 1 - \alpha - O(1) \sqrt{\frac{\min\{d, \log |V|\} \log n}{\delta n}}.
\end{equation*}


In each of our coming experiments, we draw $M = 1000$ samples $v_j$
uniformly on $\sphere^{d-1}$, computing the worst-slab
coverage~\eqref{eqn:worst-slab-computation} for each $v_j$.  In the
multiclass case, we expect our conformal quantile classification (CQC,
Alg.~\ref{alg:cqc}) method to provide larger worst-slab coverage than the
standard marginal method~\eqref{eqn:marginal-method}, while in the
multilabel case, we expect that the combination of tree-based scores
(Algorithms~\ref{alg:score-tree} or \ref{alg:multilabel-pgm-method}) with
the conformalized quantile inner/outer classification (CQioC,
Alg.~\ref{alg:tree-cqc}) should provide larger worst-slab coverage than the
conformalized direct inner/outer classification (CDioC,
Alg.~\ref{alg:direct-inner-outer}).  In both cases, we expect that our more
sophisticated methods should provide confidence sets of comparable size to
the marginal methods.
%
%
In multiclass experiments, for comparison, we additionally include the
Generalized Inverse Quantile method (GIQ, Algorithm 1~\cite{RomanoSeCa20},
which appeared after the initial version of the current paper appeared on
the \texttt{arXiv}), which similarly targets improved conditional coverage.
Unlike in the multiclass case, we know of no baseline method for multilabel
problems, as the ``marginal'' method~\eqref{eqn:marginal-method} is
computationally inefficient when the number of labels grows.  For this
reason, we focus on the methods in this paper, highlighting the potential
advantages of each one while comparing them to an oracle (conditionally
perfect) confidence set in simulation.

\begin{figure}
  \centering
  \begin{tabular}{cc}
    \hspace{-.8cm}
    \begin{minipage}{.42\columnwidth}
      \caption{\label{fig:multiclass-simulation-scatter-dataset}
        Gaussian mixture with $\mu_0 = (1,0)$,
        $\mu_1=(-\half, \frac{\sqrt{3}}{2})$, $\mu_2 = (-\half, -
        \frac{\sqrt{3}}{2})$, and $\mu_3 = (-\half, 0)$; and $\Sigma_0 =
        \diag(0.3,1)$, $\Sigma_1 = \Sigma_2 = \diag(0.2,0.4)$, and $\Sigma_3 =
        \diag(0.2,0.5)$.}
    \end{minipage}
    &
    \hspace{-.75cm}
    \begin{minipage}{.65\columnwidth}
      \includegraphics[width=\columnwidth]{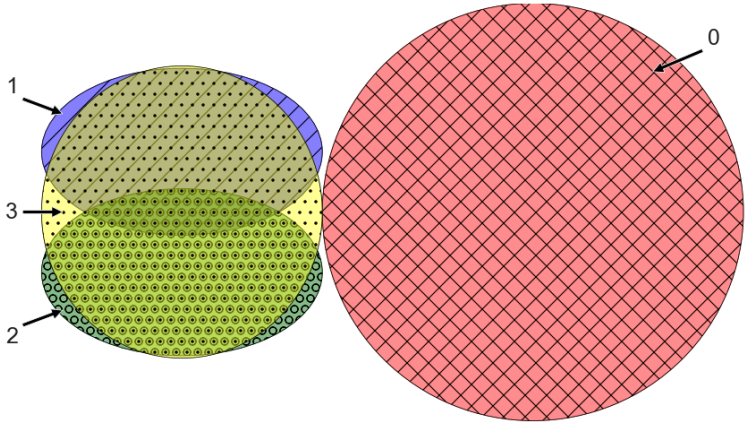}
    \end{minipage}
  \end{tabular}
  \vspace{-.2cm}
\end{figure}%

We present five experiments: two with simulated datasets and three with real
datasets (CIFAR-10~\cite{KrizhevskyHi09}, ImageNet~\cite{DengDoSoLiLiFe09} and Pascal VOC 2012~\cite{EveringhamVaWiWiZi12}), and both in multiclass and multilabel
settings.  In each experiment, a single trial corresponds to a realized random
split of the data between training, validation and calibration sets
$\mathcal{I}_1$, $\mathcal{I}_2$ and $\mathcal{I}_3$, and in each figure,
the red dotted line represents the desired level of coverage
$1-\alpha$. Unless otherwise specified, we summarize results via
boxplots that display the lower and upper quartiles as the hinges of the
box, the median as a bold line, and whiskers that extend to the $5\%$ and
$95\%$ quantiles of the statistic of interest (typically coverage
or average confidence set size).

\subsection{Simulation}
\label{sec:experiments-simulation}

\subsubsection{More uniform coverage on a multiclass example}

Our first simulation experiment allows us to compute the
conditional coverage of each sample and evaluate our CQC
method~\ref{alg:cqc}. We study its performance
on small sub-populations, a challenge for traditional machine learning
models~\cite{DuchiNa18, HashimotoSrNaLi18}.  In contrast to the traditional
split-conformal algorithm (method~\eqref{eqn:marginal-method},
cf.~\cite{VovkGaSh05}), we expect the quantile
estimator~\eqref{eqn:quantile-regression} in the CQC method~\ref{alg:cqc} to
better account for data heterogeneity, maintaining higher coverage on
subsets of the data, in particular in regions of the space where multiple classes coexist.

\begin{figure}
  \centering
  \begin{overpic}[scale=0.75]{
      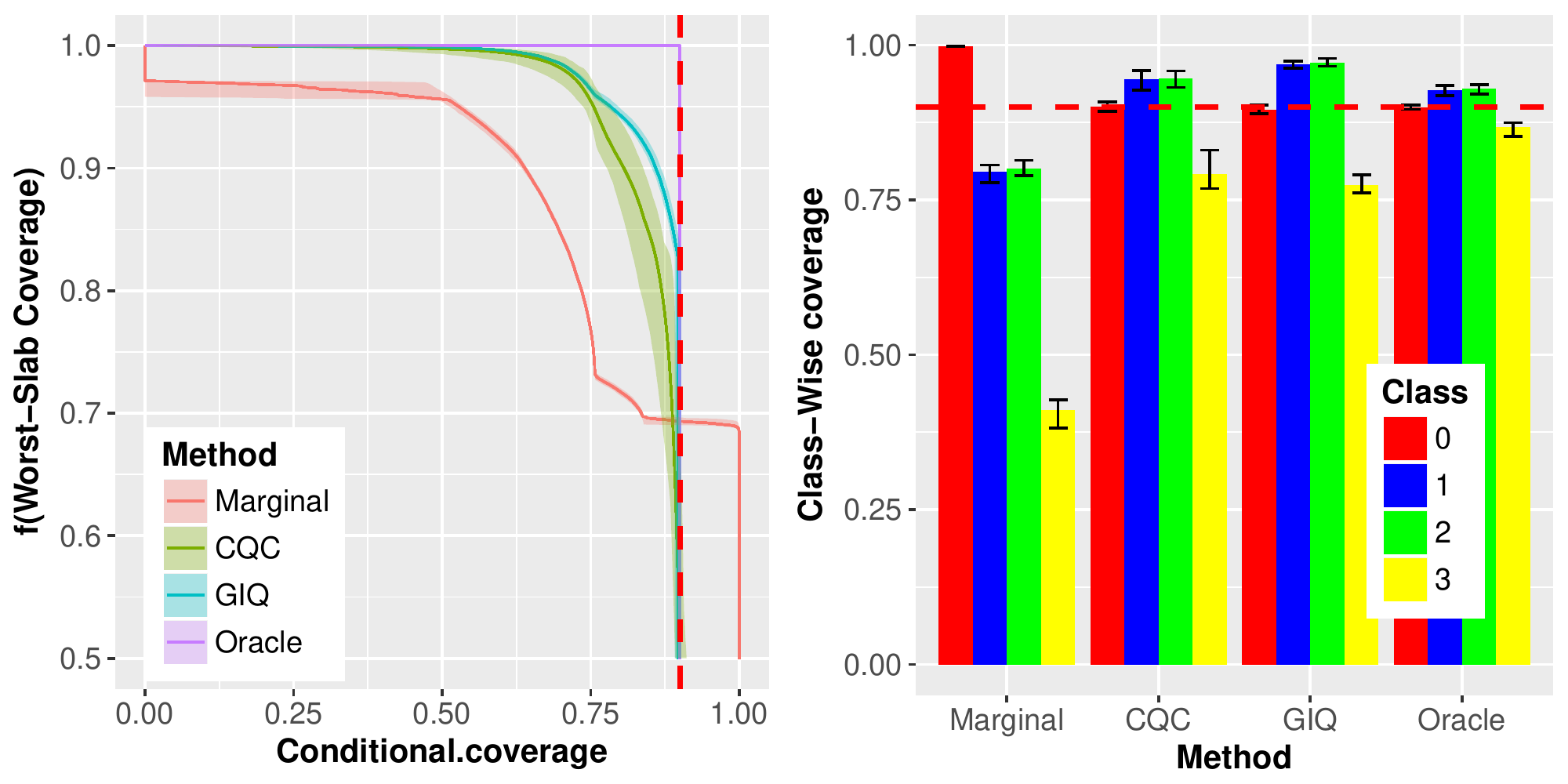}
    \put(0,10){
      \tikz{\path[draw=white, fill=white] (0, 0) rectangle (.3cm, 6cm)}
    }
    \put(0,0){\rotatebox{90}{
        \small $X$-measure $P_X(\{x :
        \P(Y \in \what{C}(X) \mid X = x) \ge t\})$}
    }
    \put(15, 0){
      \tikz{\path[draw=white, fill=white] (0, 0) rectangle (4cm, .5cm)}
    }
    \put(9, 1){
      \small Conditional coverage probability $t$}
    \put(70, 0){
      \tikz{\path[draw=white, fill=white] (0, 0) rectangle (4cm, .5cm)}
    }
    \put(49, 13){
      \tikz{\path[draw=white, fill=white] (0, 0) rectangle (.5cm, 5cm)}
    }
    \put(51, 16){\rotatebox{90}{
        \small Per-class coverage}}
  \end{overpic}
  \caption{Simulation results on multiclass problem. Left: $X$-probability
    of achieving a given level $t$ of conditional coverage versus coverage
    $t$, i.e., $t \mapsto P_X(\P(Y\in \what{C}_{\text{Method}}(X) \mid X)
    \geq t)$. The ideal is to observe $t \mapsto \indic{t \leq
      1-\alpha}$. Right: class-wise coverage $\P(Y\in
    \what{C}_{\text{Method}}(X) \mid Y=y)$ on the
    distribution~\eqref{eqn:dataset-multiclass-simulated} (as in
    Fig.~\ref{fig:multiclass-simulation-scatter-dataset}) for each
    method. Confidence bands and error bars display
    the range of the statistic over $M=20$ trials.}
\label{fig:multiclass-simulation-coverage}
\end{figure}

To test this, we generate $n=10^5$ data points $\{ X_i, Y_i \}_{i \in [N]}$
i.i.d.\ from a Gaussian mixture with one majority
group and three minority ones,
\begin{align}
  \label{eqn:dataset-multiclass-simulated}
  Y \sim \text{Mult}(\pi) \text{ and } X \mid Y=y \sim \normal( \mu_y, \Sigma_y ).
\end{align}
where $\pi = (.7, .1, .1, .1)$ (see
\figref{fig:multiclass-simulation-scatter-dataset}).  We purposely introduce
more confusion for the three minority groups (1, 2 and 3), whereas the
majority (0) has a clear linear separation.  We choose $\alpha= 10\%$, the
same as the size of the smaller sub-populations, then we apply our CQC
method and compare it to both the Marginal~\eqref{eqn:marginal-method} and
the Oracle (which outputs the smallest randomized conditionally valid $1-\alpha$
confidence set) methods.

The results in Figure~\ref{fig:multiclass-simulation-coverage} are
consistent with our expectations. The randomized oracle method provides
exact $1-\alpha$ conditional coverage, but CQC appears to provide more
robust coverage for individual classes than the marginal method.  While all
methods have comparable confidence set sizes and maintain $1-\alpha$
coverage marginally (see also supplementary
\figref{fig:multiclass-averagesize-coverage}), the left plot shows the CQC
and GIQ methods provide consistently better conditional coverage than the
marginal method. The latter~\eqref{eqn:marginal-method} has a coverage close
to $1$ for 70\% of the examples (typically all examples from the majority
class) and so undercovers the remaining 30\% minority examples, in
distinction from the CQC and GIQ methods, whose aims for conditional
coverage yield better coverage for minority classes (see right plot of
\figref{fig:multiclass-simulation-coverage}).

\subsubsection{Improved coverage with graphical models}

\label{subsec:experiments-multilabel-simulation}

\begin{figure}
 \centering
  \begin{overpic}[
  				scale=0.8]{%
     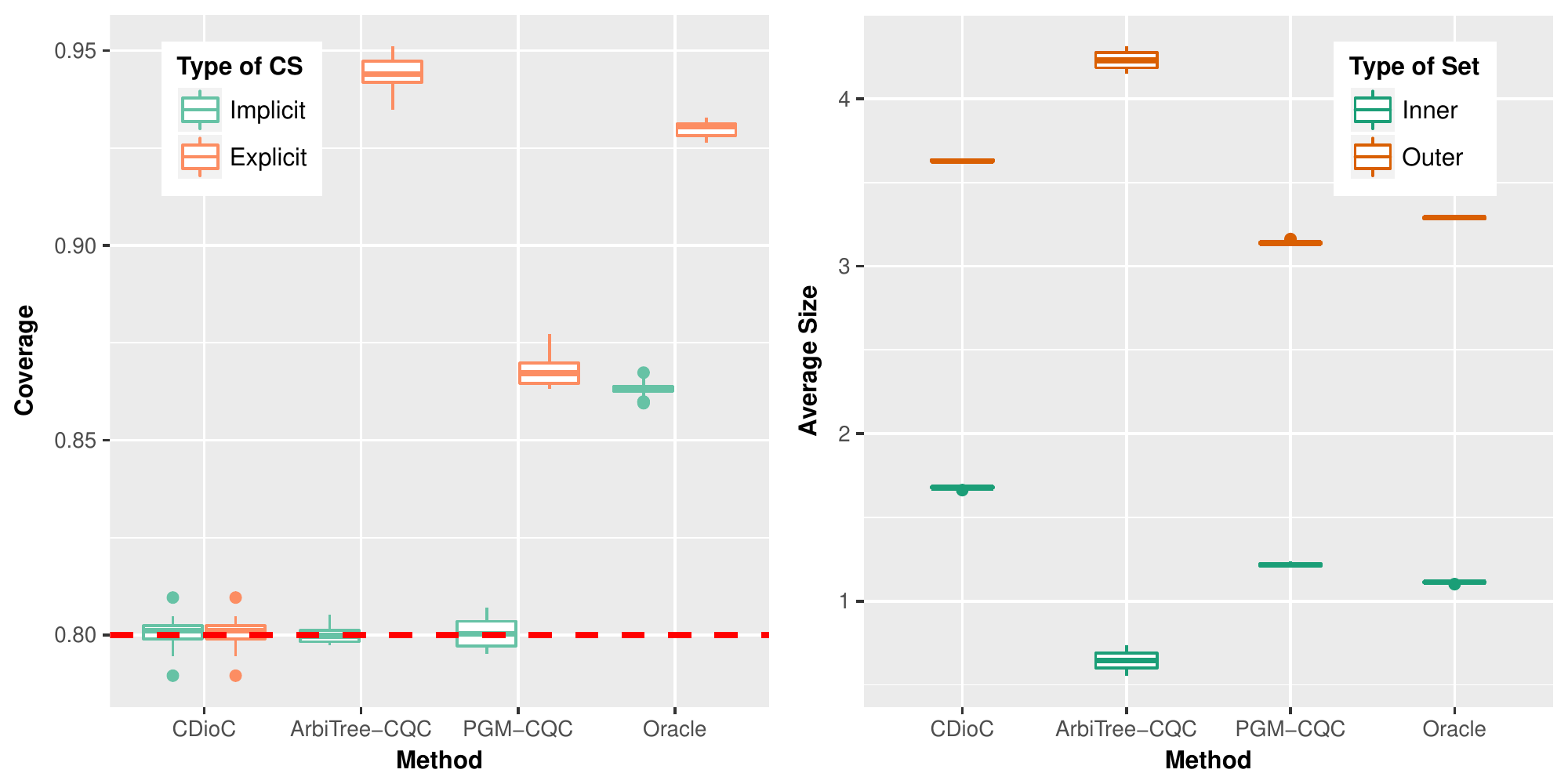}
    \put(0,10){
      \tikz{\path[draw=white, fill=white] (0, 0) rectangle (.3cm, 6cm)}
    }
    \put(0,15){\rotatebox{90}{
        \small $\P(Y \in \hat{C}(X))$}
    }
    \put(15, 0){
      \tikz{\path[draw=white, fill=white] (0, 0) rectangle (4cm, .4cm)}
    }
        
    \put(70, 0){
      \tikz{\path[draw=white, fill=white] (0, 0) rectangle (4cm, .4cm)}
    }
    \put(48, 13){
      \tikz{\path[draw=white, fill=white] (0, 0) rectangle (.6cm, 5cm)}
    }
    \put(51, 10){\rotatebox{90}{
        \small $\E\left|\yin(X)\right|$ and $\E\left|\yout(X)\right|$}}
  \end{overpic}
  \caption{Results for simulated multilabel experiment with label
    distribution~\eqref{eqn:multilabel-logreg}.  Methods are the true oracle
    confidence set; the conformalized direct inner/outer method (CDioC,
    Alg.~\ref{alg:direct-inner-outer}) and tree-based methods with implicit
    confidence sets $\Cimplicit$ or explicit inner/outer sets $\Cinout$,
    labeled \scorelesstreeshort and \pgmtreeshort (see
    Sec.~\ref{subsec:experiments-multilabel-simulation}).
    Left: Marginal coverage probability at level $1 - \alpha = .8$
    for different methods.
    Right: confidence set sizes of methods.
  }
  \label{fig:multilabel-simulation-averagesize-coverage}
\end{figure}

Our second simulation addresses the multilabel setting, where we have a
predictive model outputting scores $s_k(x) \in \R$ for each task $k$.  As a
baseline comparison, we compute oracle confidence sets (the smallest
$1-\alpha$-conditionally valid (non-randomized) confidence set in the
implicit case and the smaller inner and outer sets containing it in the
explicit case).  We run three methods, First, the direct Inner/Outer method
(CDioC), Alg.~\ref{alg:direct-inner-outer}. Second, we use the graphical
model score from the likelihood model in
Alg.~\ref{alg:multilabel-pgm-method} to choose a scoring function
$s_{\mc{T}} : \mc{X} \times \mc{Y} \to \R$, which we call the \pgmtree
method; we then either use the CQC method~\ref{alg:cqc} with this scoring
function directly, that is, the implicit confidence set
(recall~\eqref{eqn:standard-classification-set}) $\Cimplicit(x) = \{y \in
\mc{Y} \mid s_{\mc{T}}(x, y) \ge \hat{q}(x) - \what{\quant}_{1 - \alpha}\}$
or the explicit $\Cinout$ set of
Eqs.~\eqref{eqn:multilabel-io-set}--\eqref{eqn:efficient-inner-outer-vecs}.
Finally, we do the same except that we use the arbitrary predictor method
(Alg.~\ref{alg:score-tree}) to construct the score $s_{\mc{T}}$, where we
use the $\{\pm 1\}^K$ assignment $\what{y}$ instead of scores as input
predictors, which we term \scorelesstree.

We consider a misspecified logistic regression model, where hidden
confounders induce correlation between labels.  Because of the dependency
structure, we expect the tree-based methods to output smaller and more
robust confidence sets than the direct CDioC
method~\ref{alg:direct-inner-outer}. Comparing the two score-based methods
(CDioC and \pgmtree) with the scoreless tree method \scorelesstree is
perhaps unfair, as the latter uses less information---only signs of
predicted labels. Yet we expect the \scorelesstree method to leverage the
correlation between labels and mitigate this disadvantage, at least relative
to CDioC.

Our setting follows, where we consider $K=5$ tasks and dimension $d = 2$.
For each experiment, we choose a tree $\mc{T} = ([K], E)$ uniformly at
random, where each edge $e\in E$ has a correlation strength value $\tau_e
\sim \uniform[-5,5]$, and for every task $k$ we sample a vector $\theta_k
\sim \uniform(r \sphere^{d-1})$ with radius $r=5$.  For each observation, we
draw $X \simiid \normal (0,I_d)$ and draw independent uniform hidden
variables $H_e \in \{ -1, 1\}$ for each edge $e$ of the tree.  Letting $E_k$
be the set of edges adjacent to $k$, we draw $Y_k \in \{-1, 1\}$ from the
logistic model
\ifthenelse{\equal{\paperVersion}{icml}}{
  \begin{align}
    \label{eqn:multilabel-logreg}
    \begin{split}
      \P & \left( Y_k = y_k \mid X=x, H=h \right) \propto \\
      &\qquad \exp \biggl\{ -y_k \Bigl(1 + x^T \theta_k + \sum_{e \in E_k} \tau_e h_e \Bigr) \biggr\}.
    \end{split}
  \end{align}
}{ 
  \begin{align}
    \label{eqn:multilabel-logreg}
    \P  \left( Y_k = y_k \mid X=x, H=h \right) \propto \exp \biggl\{ -y_k \Bigl(1 + x^T \theta_k + \sum_{e \in E_k} \tau_e h_e \Bigr) \biggr\}.
  \end{align}
} We simulate $n_\text{total} = $ 200,000 data points, using $n_\text{tr}$
= 100,000 for training, $n_\text{v}= $ 40,000 for validation, $n_\text{c}= $
40,000 for calibration, and $n_\text{te}= $ 20,000 for testing.

The methods CDioC and \pgmtree require per-task scores $s_k$, so for each
method we fit $K$ separate logistic regressions of $Y_k$ against $X$
(leaving the hidden variables $H$ unobserved) on the training data. We then
use the fit parameters $\what{\theta}_k \in \R^d$ to define scores $s_k(x) =
\what{\theta}_k^T x$ (for the methods CDioC and \pgmtree) and the
``arbitrary'' predictor $\what{y}_k(x) = \sign(s_k(x))$ (for method
\scorelesstree).  We use a one-layer fully-connected neural network with
$16$ hidden neurons as the class $\mc{Q}$ in the quantile regression
step~\eqref{eqn:quantile-regression} of the methods; no matter our choice of
$\mc{Q}$, the final conformalization step guarantees (marginal) validity.


Figure~\ref{fig:multilabel-simulation-averagesize-coverage},
\ref{fig:multilabel-simulation-worst-slab-coverage}, and
\ref{fig:multilabel-simulation-conditional-coverage} show our results.  In
Fig.~\ref{fig:multilabel-simulation-averagesize-coverage}, we give the
marginal coverage of each method (left plot), including both the implicit
$\Cimplicit$ and explicity $\Cinout$ confidence sets, and the average
confidence set sizes for the explicit confidence set $\Cinout$ in the right
plot.  The explicit sets $\Cinout$ in the \scorelesstree and \pgmtree
methods both overcover, though not substantially more than the oracle
method; the sizes of $\Cinout$ for the \pgmtree and oracle methods are
similar (see supplementary
\figref{fig:multilabel-simulation-size-distribution}).  On the other hand,
the \pgmtree explicit confidence sets (which expand the implicit set
$\Cimplicit$ as in~\eqref{eqn:efficient-inner-outer-vecs}) cover more
than the direct Inner/Outer method CDioC.  The
confidence sets of the scoreless method \scorelesstree are wider, which is
consistent with the limitations of a method using only the
predictions $\what{y}_k = \sign(s_k)$.

\begin{figure}
 \centering
  \begin{overpic}[
  				scale=0.7]{%
     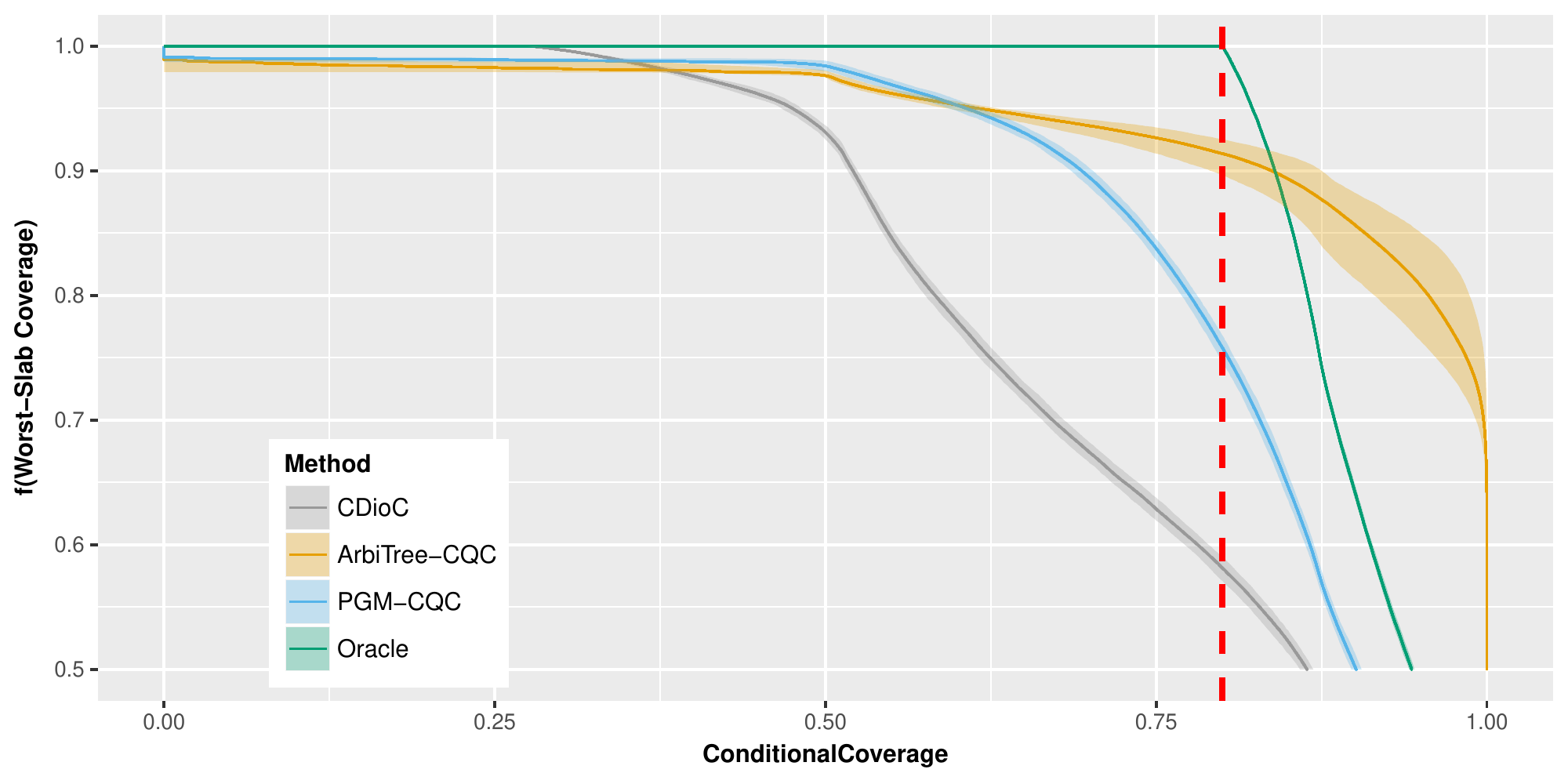}
    \put(0,5){
      \tikz{\path[draw=white, fill=white] (0, 0) rectangle (.3cm, 6cm)}
    }
    \put(0,5){\rotatebox{90}{
        \small $P_X(\{x :
        \P(Y \in \what{C}(X) \mid X = x) \ge t\})$}
    }
    \put(40, 0){
      \tikz{\path[draw=white, fill=white] (0, 0) rectangle (4cm, .4cm)}
    } 
    \put(30,1){
        \small Conditional coverage probability $t$
    }
  \end{overpic}
  \caption{Simulated multilabel experiment with label
    distribution~\eqref{eqn:multilabel-logreg}. The plot shows the
    $X$-probability of achieving a given level $t$ of conditional coverage
    versus coverage $t$, i.e., $t \mapsto P_X(\P(Y\in
    \what{C}_{\text{Method}}(X) \mid X) \geq t)$, using explicit confidence
    sets $\Cinout$. The ideal is to observe $t \mapsto \indic{t \leq
      1-\alpha}$. Confidence bands display the range of the statistic over
    $M=20$ trials.  }
  \label{fig:multilabel-simulation-conditional-coverage}
\end{figure}

\begin{figure}[t]
  \centering
    \begin{overpic}[
  				scale=0.6]{%
     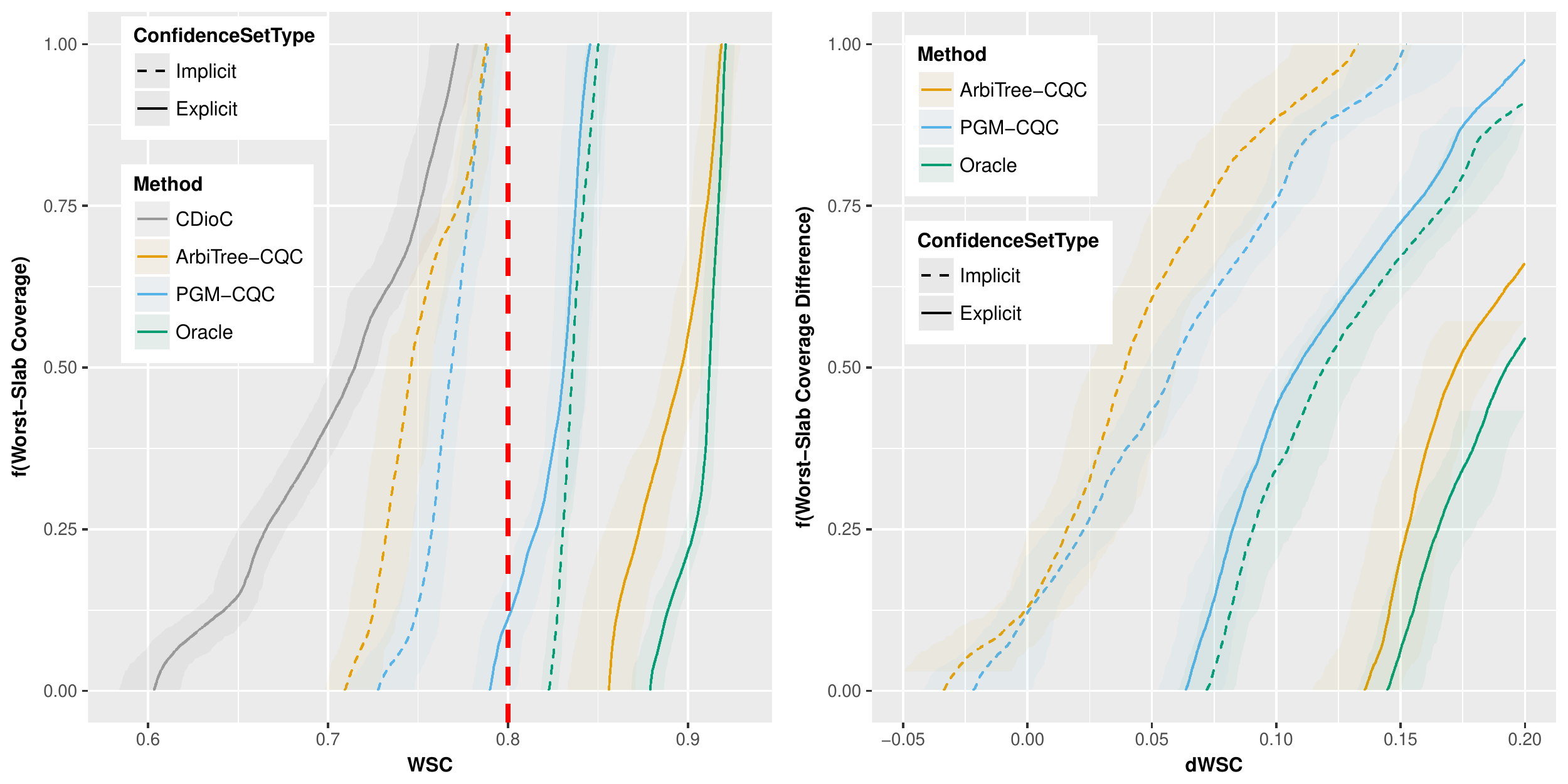}
    \put(-0.5,10){
      \tikz{\path[draw=white, fill=white] (0, 0) rectangle (.3cm, 6cm)}
    }
    \put(-1,12){\rotatebox{90}{
        \small $\P_v( \text{WSC}_n(\what{C}_{\text{Method}},v) \le t)$}
    }
    \put(15, 0){
      \tikz{\path[draw=white, fill=white] (0, 0) rectangle (4cm, .25cm)}
    }
        
    \put(60, 0){
      \tikz{\path[draw=white, fill=white] (0, 0) rectangle (4cm, .25cm)}
    }
    \put(49, 6){
      \tikz{\path[draw=white, fill=white] (0, 0) rectangle (.4cm, 7cm)}
    }
    \put(49, 0){\rotatebox{90}{
        \small $\P_v(\text{WSC}_n(\what{C}_{\textrm{Method}},v) -
    \text{WSC}_n(\what{C}_{\textrm{CDioC}},v) \le t) $}}
    
    \put(12, 0){
        \small Worst Coverage probability  $t$}
        
     \put(65, 0){
        \small Difference of coverage $t$}
    
  \end{overpic}
  \caption{\label{fig:multilabel-simulation-worst-slab-coverage} Cumulative
    distribution of worst-slab
    coverage~\eqref{eqn:worst-slab-computation} (with $\delta = 20\%$) on
    mis-specifed logistic regression model~\eqref{eqn:multilabel-logreg}
    over $M = 1000$ i.i.d.\ choices of direction $v \in \sphere^{d-1}$. We
    expect to have $\P_v(\text{WSC}_n(\what{C}, v) \le t ) = \indic{t \ge
      1-\alpha}$ if $\what{C}$ provides exact $1-\alpha$-conditional
    coverage. Left: CDF of $\text{WSC}_n(\what{C}, v)$.  Right:
    CDF of difference $\text{WSC}_n(\what{C}_{\textrm{Method}_1},v)
    - \text{WSC}_n(\what{C}_{\textrm{Method}_2},v)$, $\text{Method}_2$ is
    always the CDioC method (Alg.~\ref{alg:direct-inner-outer}), and the
    other four are \scorelesstree and \pgmtree with both implicit
    $\Cimplicit$ and explicit $\Cinout$ confidence sets.}
\end{figure}


Figures~\ref{fig:multilabel-simulation-conditional-coverage}
and~\ref{fig:multilabel-simulation-worst-slab-coverage} consider each
method's approximation to conditional coverage; the former with exact
calculations and the latter via worst-slab coverage
measures~\eqref{eqn:worst-slab-computation}.  Both plots dovetail with our
expectations that the \pgmtree and \scorelesstree methods are more robust
and feature-adaptive. Indeed,
Figure~\ref{fig:multilabel-simulation-conditional-coverage} shows that the
\pgmtree method provides at least coverage $1-\alpha$ for 75\% of the
examples, against only 60\% for the CDioC method, and has an overall
consistently higher coverage.  In
Figure~\ref{fig:multilabel-simulation-worst-slab-coverage}, each of the $M
=10^3$ experiments corresponds to a draw of $v \simiid
\uniform(\sphere^{d-1})$, then evaluating the worst-slab
coverage~\eqref{eqn:worst-slab-computation} with $\delta = .2$.  In the left
plot, we show its empirical cumulative distribution across draws of $v$ for
each method, which shows a substantial difference in coverage between the
CDioC method and the others.  We also perform direct comparisons in the
right plot: we draw the same directions $v$ for each method, and then (for
the given $v$) evaluate the difference $\textup{WSC}_n(\what{C}, v) -
\textup{WSC}_n(\what{C}', v)$, where $\what{C}$ and $\what{C}'$ are the
confidence sets each method produces, respectively. Thus, we see that both
tree-based methods always provide better worst-slab coverage, whether we use
the implicit confidence sets $\Cimplicit$ or the larger direct inner/outer
(explicit) confidence sets $\Cinout$, though in the latter case, some of the
difference likely comes from the differences in marginal coverage.  The
worst-slab coverage is consistent with the true conditional coverage
in that the relative ordering of method performance is consistent, suggesting
its usefulness as a proxy.

\subsection{More robust coverage on CIFAR 10 and ImageNet datasets}
\label{sec:experiments-cifar10}

\begin{figure}
    \begin{overpic}[
  				scale=0.6]{%
     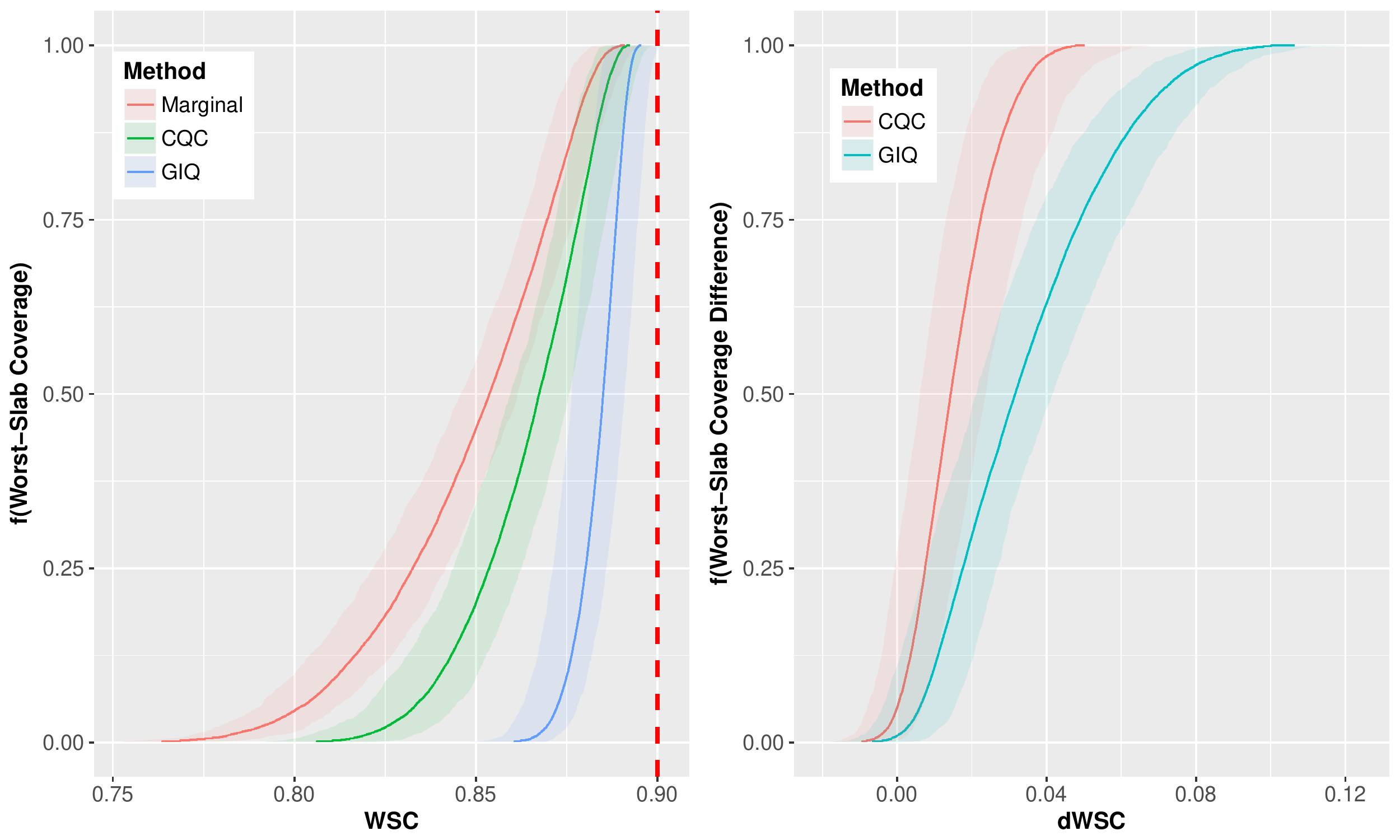}
    \put(-0.5,15){
      \tikz{\path[draw=white, fill=white] (0, 0) rectangle (.3cm, 6cm)}
    }
    \put(-1,16){\rotatebox{90}{
        \small $\P_v( \text{WSC}_n(\what{C}_{\text{Method}},v) \le t)$}
    }
    \put(15, 0){
      \tikz{\path[draw=white, fill=white] (0, 0) rectangle (4cm, .3cm)}
    }
    \put(60, 0){
      \tikz{\path[draw=white, fill=white] (0, 0) rectangle (4cm, .3cm)}
    }
    \put(49, 6){
      \tikz{\path[draw=white, fill=white] (0, 0) rectangle (.4cm, 7cm)}
    }
    \put(50, 7){\rotatebox{90}{
        \small $\P_v(\text{WSC}_n(\what{C}_{\textrm{Method}},v) -
    \text{WSC}_n(\what{C}_{\textrm{CDioC}},v) \le t) $}}
    \put(12, 0){
        \small Worst Coverage probability  $t$}
     \put(65, 0){
        \small Difference of coverage $t$}
  \end{overpic}
  
  \caption{Worst-slab coverage for ImageNet-10
    with $\delta=.2$ over $M = 1000$ draws
    $v \simiid \uniform(\sphere^{d-1})$.
    The dotted line is the desired (marginal) coverage.
    Left: CDF of worst-slab coverage.
    Right: CDF of
    coverage difference $\text{WSC}_n(\what{C}_{\textrm{CQC}},v) -
    \text{WSC}_n(\what{C}_{\textrm{Marginal}},v)$.}
  \label{fig:imagenet-worst-slab}
\end{figure}

In our first real experiments, we study two multiclass image classification
problems with the benchmark CIFAR-10~\cite{KrizhevskyHi09} and
ImageNet~\cite{DengDoSoLiLiFe09} datasets. We use similar approaches to
construct our feature vectors and scoring functions. With the CIFAR-10
dataset, which consists of $n =$ 60,000 $32 \times 32$ images across 10
classes, we use $n_{\text{tr}}$ = 50,000 of them to train the full model, a
validation set of size $n_{\text{v}} = $ 6,000 for fitting quantile
functions and hyperparameter tuning, $n_{\text{c}}= $ 3,000 for calibration
and the last $n_{\text{te}} = $ 1,000 for testing.  We train a standard
ResNet50~\cite{HeZhReSu16} architecture for $200$ epochs, obtaining test set
accuracy $92.5 \pm 0.5\%$, and use the $d=256$-dimensional output of the
final layer of the ResNet as the inputs $X$ to the quantile estimation. For
the ImageNet classification problem, we load a pre-trained
Inception-ResNetv2~\cite{SzegedyIoVaAl17} architecture, achieving top-1 test
accuracy $80.3 \pm 0.5 \%$, using the $d=1536$-dimensional output of the
final layer as features $X$. Splitting the original ImageNet validation set
containing 50,000 instances into 3 different sets, we fit our quantile
function on $n_{\text{v}}$ = 30,000 examples, calibrate on $n_{\text{c}}=$
10,000 and test on the last $n_{\text{te}} = $ 10,000 images.

We apply the CQC method~\ref{alg:cqc} with $\alpha=5\%$ for CIFAR-10 and
$\alpha=10\%$ for ImageNet and it to the benchmark marginal
method~\eqref{eqn:marginal-method} and GIQ~\cite{RomanoSeCa20}. We expect
the former to typically output small confidence sets, as the neural
network's accuracy is close to the confidence level $1-\alpha$; this allows
(typically) predicting a single label while maintaining marginal coverage.
Supplementary figures~\ref{fig:cifar10-averagesize-coverage}
and~\ref{fig:imagenet-averagesize-coverage} show this.  The worst-slab
coverage~\eqref{eqn:worst-slab-computation} tells a different story. In
Figure~\ref{fig:imagenet-worst-slab}, we compare worst-slab coverage over $M
= 1000$ draws of $v \simiid \uniform(\sphere^{d-1})$ on the ImageNet dataset
(see also Fig.~\ref{fig:cifar10-worst-slab} for the identical experiment
with CIFAR-10).  The CQC and GIQ methods provide significant 3--5\% and
5--7\% better coverage, respectively, in worst-slab coverage over the
marginal method.  The CQC and GIQ methods provide stronger gains on ImageNet
than on CIFAR-10, which we attribute to the relative easiness of CIFAR-10:
the accuracy of the classifier is high, allowing near coverage by Dirac
measures.


Figure~\ref{fig:imagenet-sizecdf} compares the confidence set sizes and
probability of coverage given confidence set size for the marginal, CQC, and
GIQ methods.  We summarize briefly. The CQC method gives confidence sets of
size at most 2 for 80\% of the instances---comparable to the marginal method
and more frequently than the GIQ method (which yields 75\% examples with
$|\what{C}(X)| \le 2$). Infrequently, the CQC and GIQ methods yield very
large confidence sets, with $|\what{C}(X)| \ge 200$ about 5\% of the time
for the GIQ method and the completely informative $|\what{C}(X)| = 1000$
about 2\% of the time for the CQC method. While the average confidence
set size $\E[|\what{C}(X)|]$ is smaller for the marginal method
(cf.\ supplementary Fig.~\ref{fig:imagenet-averagesize-coverage}),
this is evidently a very incomplete story.
The bottom plot in
Fig.~\ref{fig:imagenet-sizecdf} shows the behavior we expect for a marginal
method given a reasonably accurate classifier: it overcovers for examples
$x$ with $\what{C}(x)$ small. GIQ exhibits the opposite behavior,
overcovering when $\what{C}(x)$ is large and undercovering for small
$\what{C}(x)$, while CIQ provides nearly $1 - \alpha$ coverage roughly
independent of confidence set size, as one would expect for a method with
valid conditional coverage.  (In supplementary Fig.~\ref{fig:cifar-sizecdf},
we see similar but less-pronounced behavior on CIFAR-10.)



\begin{figure}
 \centering
  \begin{overpic}[
      scale=0.6]{%
     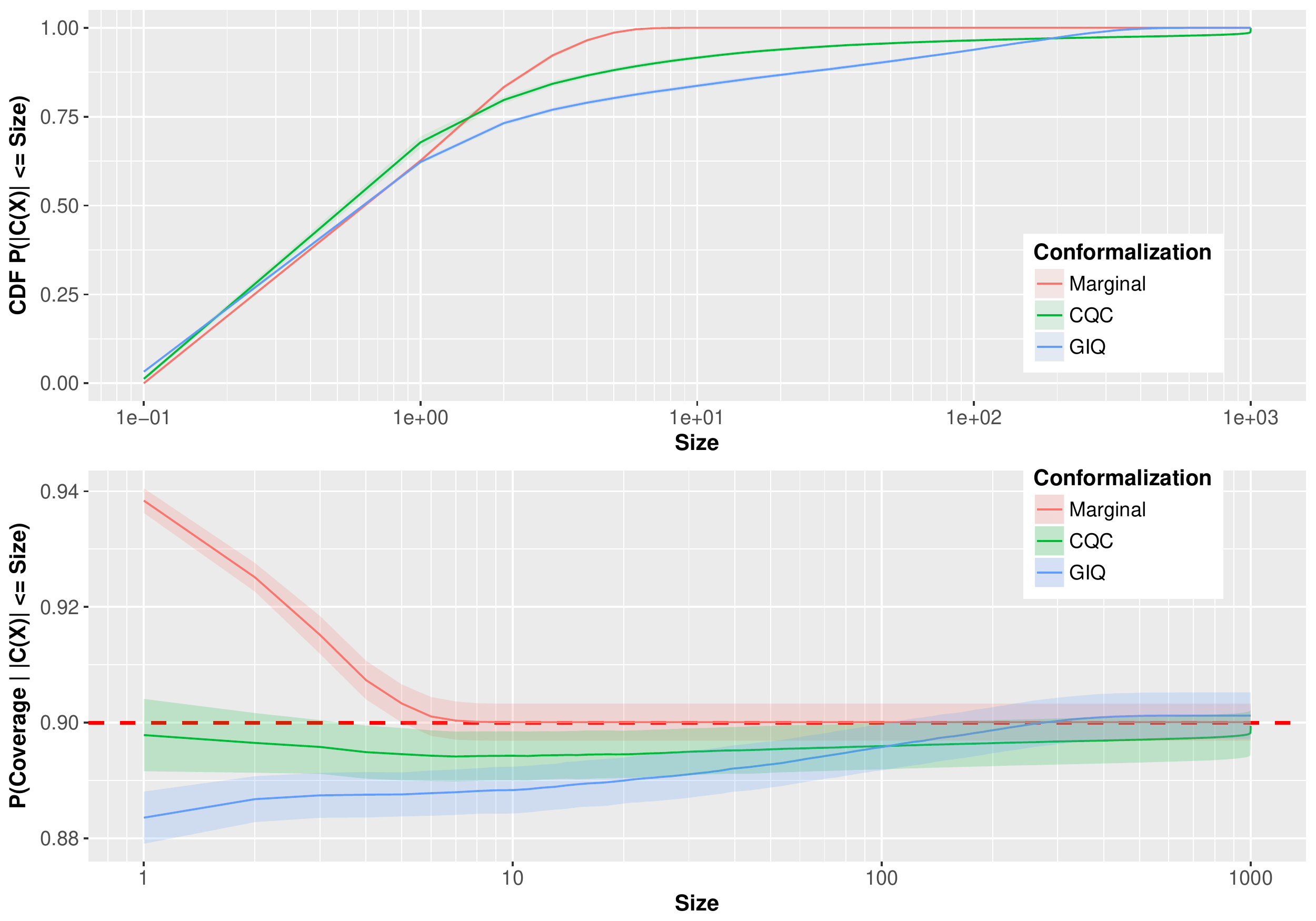}
    \put(-1,30){
      \tikz{\path[draw=white, fill=white] (0, 0) rectangle (.4cm, 6cm)}
    }
    \put(-1,0){
      \tikz{\path[draw=white, fill=white] (0, 0) rectangle (.4cm, 6cm)}
    }
    \put(-1,3){\rotatebox{90}{
        \small $\P( Y \in \what{C}(X) \mid \mbox{card}(\what{C}(X)) \le t)$}
    }
    \put(-1,44){\rotatebox{90}{
        \small $\P( \mbox{card}(\what{C}(X)) \le t)$}
    }   
    \put(0,37){
      \tikz{\path[draw=white, fill=white] (0, 0) rectangle (15cm, .3cm)}
    }
    \put(10.5, 37.4){\scriptsize 0}
    \put(31.5, 37.4){\scriptsize 1}
    \put(52, 37.4){\scriptsize 10}
    \put(72.6, 37.4){\scriptsize 100}
    \put(93.2, 37.4){\scriptsize 1000}
    \put(50, 0){
      \tikz{\path[draw=white, fill=white] (0, 0) rectangle (4cm, .3cm)}
    }
    \put(50, 35){
      \tikz{\path[draw=white, fill=white] (0, 0) rectangle (4cm, .3cm)}
    }
    \put(37,0){
        \small Confidence set size $t$ (log-scale)
    } 
        \put(37,35.5){
        \small Confidence set size $t$ (log-scale)
    }
  \end{overpic}
  \caption{ImageNet results over $20$ trials. Methods are the
    marginal method (Alg.~\ref{eqn:marginal-method}), the CQC
    method (Alg.~\ref{alg:cqc}), and the GIQ method
    (Alg. 1,~\cite{RomanoSeCa20}).  Top: cumulative distribution of
    confidence set size.  Bottom: probability of coverage
    conditioned on confidence set size.  }
  \label{fig:imagenet-sizecdf}
\end{figure}

\subsection{A multilabel image recognition dataset}
\label{sec:experiments-voc2012}

\begin{figure}
 \centering
  \begin{overpic}[
  				scale=0.75]{%
     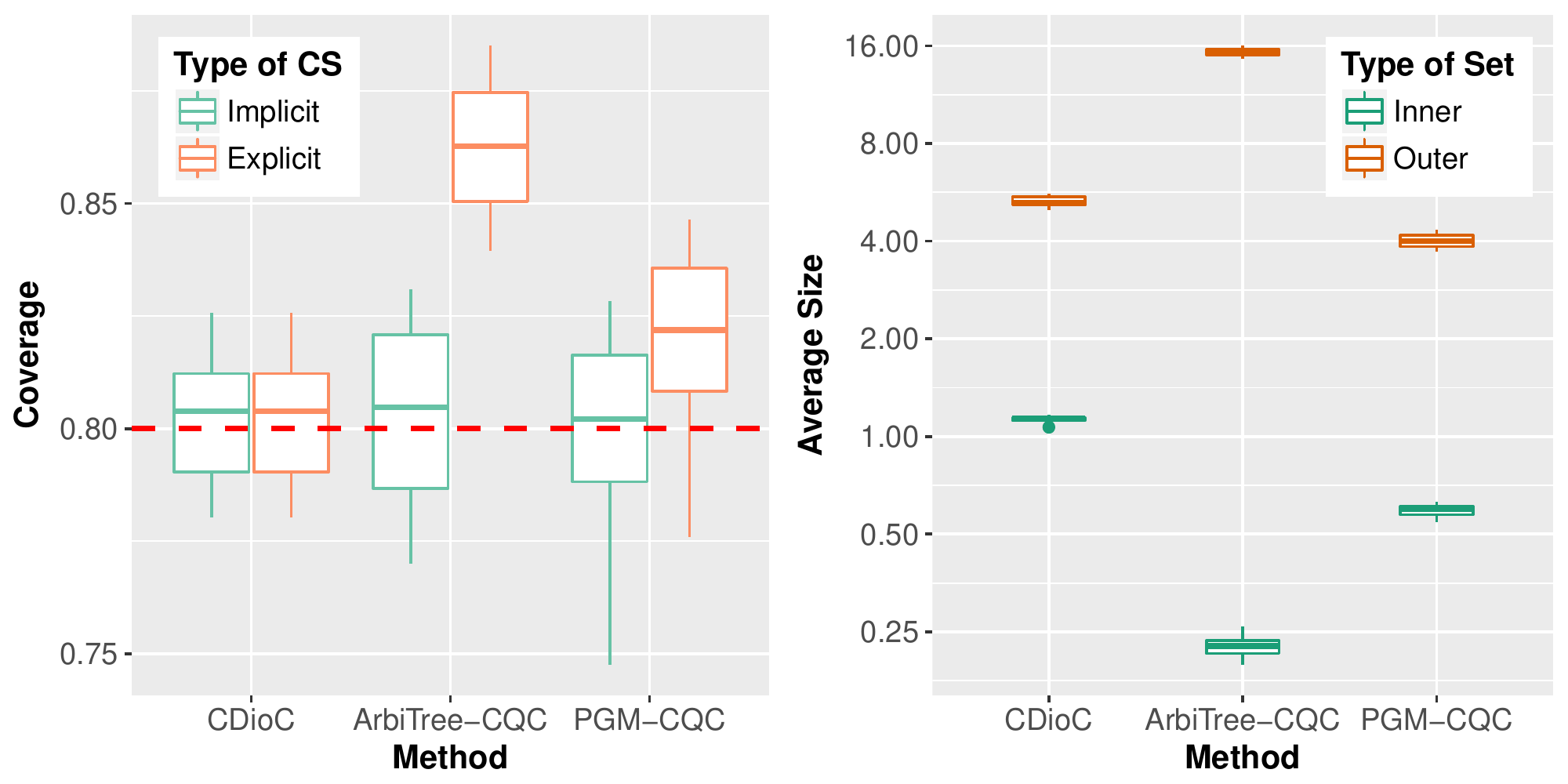}
    \put(0,10){
      \tikz{\path[draw=white, fill=white] (0, 0) rectangle (.3cm, 6cm)}
    }
    \put(0,22){\rotatebox{90}{
        \small $\P(Y \in \hat{C}(X))$}
    }
    \put(15, 0){
      \tikz{\path[draw=white, fill=white] (0, 0) rectangle (4cm, .4cm)}
    }
        
    \put(70, 0){
      \tikz{\path[draw=white, fill=white] (0, 0) rectangle (4cm, .4cm)}
    }
    \put(48, 13){
      \tikz{\path[draw=white, fill=white] (0, 0) rectangle (.6cm, 5cm)}
    }
    \put(51, 6){\rotatebox{90}{
        \small $\E\left|\yin(X)\right|$ and $\E\left|\yout(X)\right|$ (log-scale)}}
  \end{overpic}
  \caption{Pascal-VOC dataset~\cite{EveringhamVaWiWiZi12} results over $20$
    trials.  Methods are the conformalized direct inner/outer method (CDioC,
    Alg.~\ref{alg:direct-inner-outer}) and tree-based methods with implicit
    confidence sets $\Cimplicit$ or explicit inner/outer sets $\Cinout$,
    labeled \scorelesstreeshort and \pgmtreeshort (see
    Sec.~\ref{subsec:experiments-multilabel-simulation}).  Left: Marginal
    coverage probability at level $1 - \alpha = .8$.
    Right: confidence set sizes of methods.  }
    \label{fig:voc-averagesize}
\end{figure}

\begin{figure}
\centering
  \begin{overpic}[
  				scale=0.62]{%
     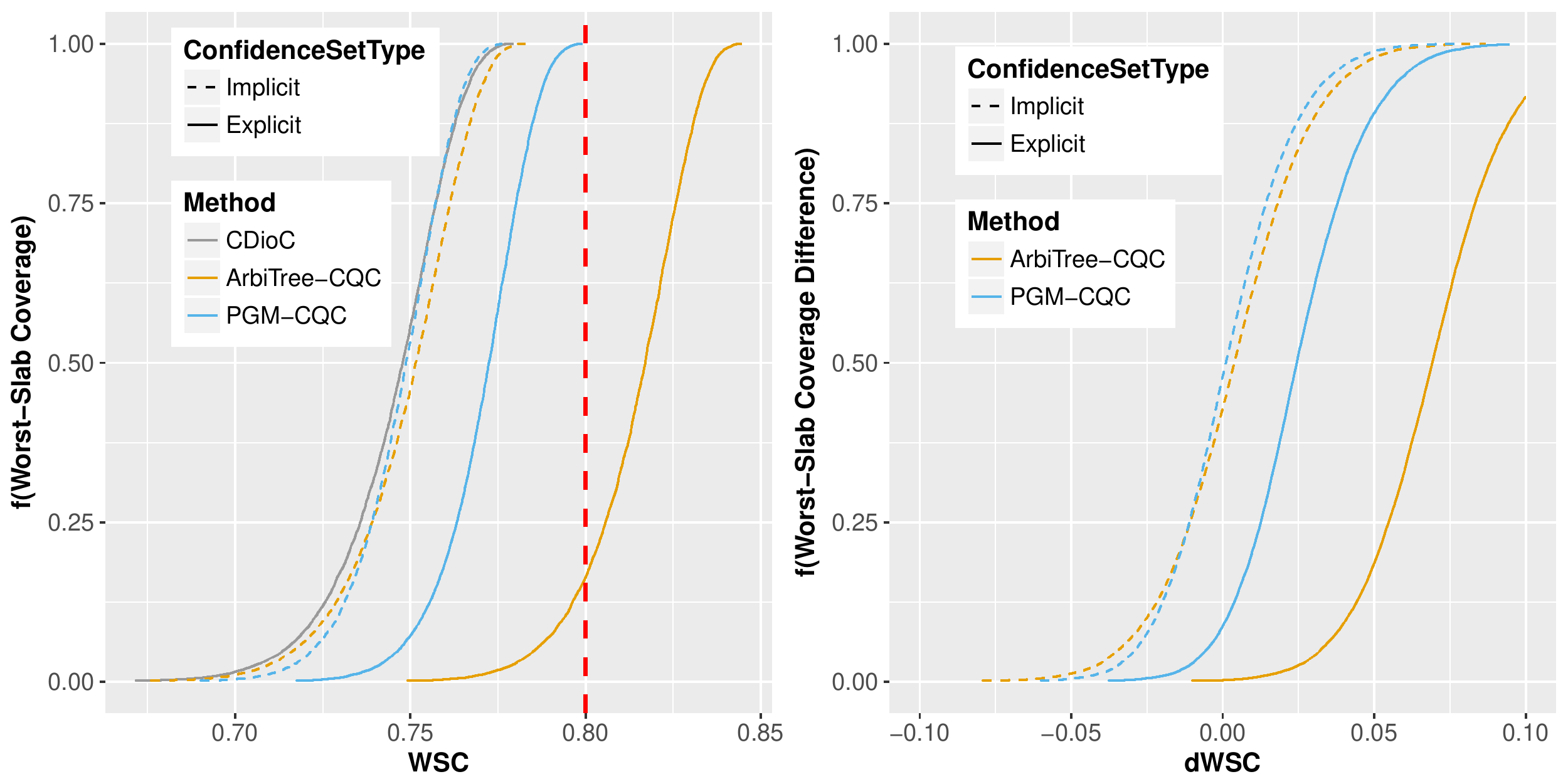}
        \put(-0.5,10){
      \tikz{\path[draw=white, fill=white] (0, 0) rectangle (.3cm, 6cm)}
    }
    \put(-1,15){\rotatebox{90}{
        \small $\P_v( \text{WSC}_n(\what{C}_{\text{Method}},v) \le t)$}
    }
    \put(15, 0){
      \tikz{\path[draw=white, fill=white] (0, 0) rectangle (4cm, .33cm)}
    }
        
    \put(60, 0){
      \tikz{\path[draw=white, fill=white] (0, 0) rectangle (4cm, .33cm)}
    }
    \put(49, 6){
      \tikz{\path[draw=white, fill=white] (0, 0) rectangle (.4cm, 7cm)}
    }
    \put(50, 1){\rotatebox{90}{
        \small $\P_v(\text{WSC}_n(\what{C}_{\textrm{Method}},v) -
    \text{WSC}_n(\what{C}_{\textrm{CDioC}},v) \le t) $}}
    
    \put(10, 0){
        \small Worst Coverage probability  $t$}
        
     \put(65, 0){
        \small Difference of coverage  $t$}
  \end{overpic}
\caption{ Worst-slab coverage~\eqref{eqn:worst-slab-computation} for
  Pascal-VOC with $\delta=.2$ over $M = 1000$ draws $v \simiid
  \uniform(\sphere^{d-1})$.  For tree-structured methods ArbiTree-CQC and
  PGM-CQC, we compute the worst-slab coverage using implicit confidence sets
  $\Cimplicit$ and explicit inner/outer sets $\Cinout$.  The dotted line is
  the desired (marginal) coverage.  Left: distribution of worst-slab
  coverage.  Right: distribution of the coverage difference
  $\text{WSC}_n(\what{C}_{\textrm{Method}},v) -
  \text{WSC}_n(\what{C}_{\textrm{CDioC}},v)$ for $\text{Method}_i \in \{
  \text{ArbiTree-CQC}, \text{PGM-CQC}\}$ with implicit $\Cimplicit$ or
  explicit inner/outer $\Cinout$ confidence sets.  }
\label{fig:voc-worst-slab-coverage}
\end{figure}

Our final set of experiments considers the multilabel image classification
problems in the PASCAL VOC 2007 and VOC 2012
datasets~\cite{EveringhamVaWiWiZi07, EveringhamVaWiWiZi12}, which consist of
$n_{2012} = 11540$ and $n_{2007}=9963$ distinct $224 \times 224$ images,
where the goal is to predict the presence of entities from $K=20$
different classes, (e.g.\ birds, boats, people).
We compare the direct inner outer method
(CDioC)~\ref{alg:direct-inner-outer} with the split-conformalized
inner/outer method (CQioC)~\ref{alg:tree-cqc}, where we use the tree-based
score functions that Algorthims~\ref{alg:score-tree} and
\ref{alg:multilabel-pgm-method} output. For the PGM method, which performs best in practice, we additionally compare the performance of standard inner and outer sets (see Alg.~\ref{alg:tree-cqc}), to the refinement that we describe in section~\ref{sec:union-inner-outer-sets}, where we instead output a confidence set as a union of inner and outer sets. Here, we choose $m=2$, which corresponds to outputting a union of $4$ inner/outer sets in equation~\ref{eqn:union-multilabel-io-set}, and select the indices $I$ according to the heuristics that we describe in that same section.

In this problem, we use the $d=2048$ dimensional output of a ResNet-101 with
pretrained weights on the ImageNet dataset~\cite{DengDoSoLiLiFe09} as our
feature vectors $X$. For each of the $K$ classes, we fit a binary logistic
regression model $\what{\theta}_k \in \R^d$ of $Y_k \in \{\pm 1\}$ against
$X$, then use $s_k(x) = \what{\theta}_k^T x$ as the scores for the \pgmtree
method (Alg.~\ref{alg:multilabel-pgm-method}).  We use $\what{y}_k(x) =
\sign(s_k(x))$ for the \scorelesstree method~\ref{alg:score-tree}.
The fit predictors have F1-score $0.77$ on held-out data, so we do not
expect our confidence sets to be uniformly small while maintaining the
required level of coverage, in particular for \scorelesstree.
We use a validation set of $n_{\text{v}} = 3493$ images to
fit the quantile functions~\eqref{eqn:quantile-regression} as above
using a one layer fully-connected neural network
with $16$ hidden neurons and tree parameters.

The results from Figure~\ref{fig:voc-worst-slab-coverage} are consistent
with our hypotheses that the tree-based models should improve robustness of
the coverage. Indeed, while all methods ensure marginal coverage at level
$\alpha = .8$ (see \figref{fig:voc-averagesize}),
Figure~\ref{fig:voc-worst-slab-coverage} shows that worst-case
slabs~\eqref{eqn:worst-slab-computation} for the tree-based methods have
closer to $1-\alpha$ coverage.  In particular, for most random slab
directions $v$, the tree-based methods have higher worst-slab coverage than
the direct inner/outer method (CDioC, Alg.~\ref{alg:direct-inner-outer}).
At the same time, both the CDioC method and \pgmtree method (using
$\Cinout$) provide similarly-sized confidence sets, as the inner set of the
\pgmtree method is typically smaller, as is its outer set
(cf.\ \figref{fig:multilabel-voc-size-distribution}).
The \scorelesstree method performs poorly on this example: its coverage is
at the required level, but the confidence sets are too large and thus
essentially uninformative.

\begin{figure}
\centering
  \begin{overpic}[
  				scale=0.62]{%
     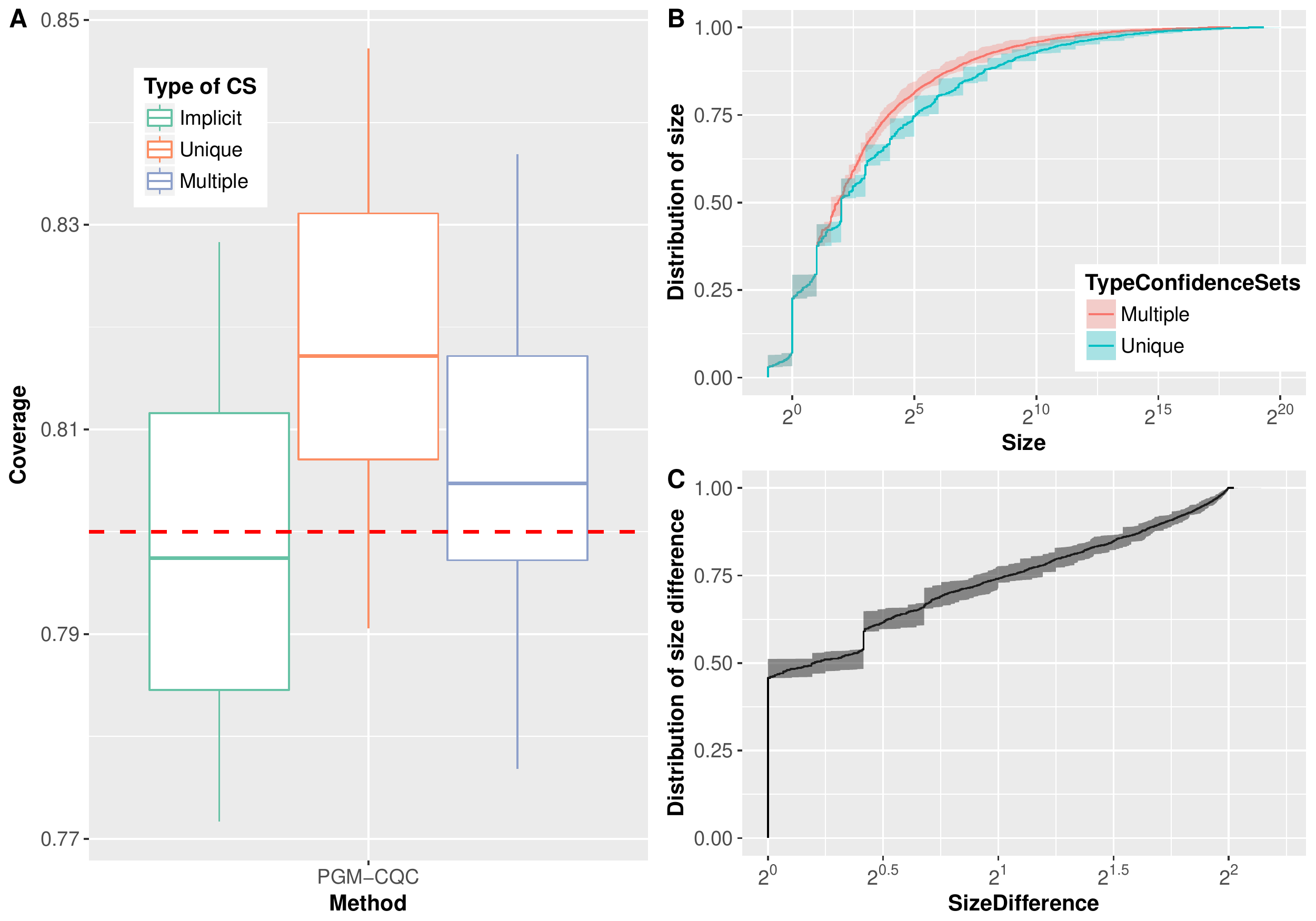}
        \put(-0.5,10){
      \tikz{\path[draw=white, fill=white] (0, 0) rectangle (.3cm, 6cm)}
    }
    \put(-1,28){\rotatebox{90}{
        \small $\P( Y \in \what{C}(X)$}
    }
    \put(15, 0){
      \tikz{\path[draw=white, fill=white] (0, 0) rectangle (4cm, .33cm)}
    }        
    \put(60, 0){
      \tikz{\path[draw=white, fill=white] (0, 0) rectangle (4cm, .33cm)}
    }    
    \put(60, 35){
      \tikz{\path[draw=white, fill=white] (0, 0) rectangle (4cm, .33cm)}
    }    
    \put(49, 6){
      \tikz{\path[draw=white, fill=white] (0, 0) rectangle (.4cm, 4cm)}
    }    
    \put(49, 46){
      \tikz{\path[draw=white, fill=white] (0, 0) rectangle (.4cm, 3cm)}
    }
    \put(49.5, 3){\rotatebox{90}{
        \small $\P( |\Cinout(x)| / |\Cinout(x, I)| \le t ) $}}
    \put(49.5, 45){\rotatebox{90}{
        \small $\P( |\what{C}(x)| \le t ) $}}
    \put(15, 0){
        \small Worst Coverage probability  $t$}
     \put(75, 36){
        \small Size  $t$}
      \put(70, 0){
        \small Ratio of size  $t$}
  \end{overpic}
\caption{ Comparison between standard inner/outer
  sets~\eqref{eqn:multilabel-io-set} (``Unique" in the figure) and union of
  inner/outer sets~\eqref{eqn:union-multilabel-io-set} (``Multiple"), on
  Pascal-VOC dataset.  The dotted line is the desired (marginal)
  coverage. The label index set $I$ contains two labels for each instance
  $x$ (see Sec.~\ref{sec:union-inner-outer-sets}).  A: summary box plot of
  coverage for each type of confidence set over $20$ trials.  B: cumulative
  distribution of the number of total configurations in each confidence set.  C:
  cumulative distribution of the ratio of size of confidence
  sets~\eqref{eqn:multilabel-io-set} and~\eqref{eqn:union-multilabel-io-set}
  (with $m=2$).}
\label{fig:voc-comparison-union-io-sets}
\end{figure}

Finally, Figure~\ref{fig:voc-comparison-union-io-sets} investigates refining
the confidence sets as we suggest in
Section~\ref{sec:union-inner-outer-sets} by taking pairs of the most
negatively correlated labels $i, j$ satisfying $\yin(x)_{(i,j)} = -1$ and
$\yout(x)_{(i,j)} = 1$. Moving from a single inner/outer set to the union of
4 inner/outer sets non only mitigates increased coverage, moving closer to
the target level $1-\alpha$ and to the coverage of the implicit confidence
sets, but in addition, for more than half of the examples, it
decreases (by a factor up to $4=2^m$) the number of configurations
in the confidence set.


\section{Conclusions}
\label{sec:conclusion}

As long as we have access to a validation set, independent of (or at least
exchangeable with) the sets used to fit a predictive model, split conformal
methods guarantee marginal validity, which gives great freedom in
modeling. It is thus of interest to fit models more adaptive to the signal
inputs $x$ at hand---say, by quantile predictions as we have done, or other
methods yet to be discovered---that can then in turn be conformalized. As
yet we have limited understanding of what ``near'' conditional coverage
might be possible: \citet{BarberCaRaTi19a} provide procedures that can
guarantee coverage uniformly across subsets of $X$-space as long as those
subsets are not too complex, but it appears computationally challenging to
fit the models they provide, and our procedures empirically appear to have
strong coverage across slabs of the data as in
Eq.~\eqref{eqn:worst-slab-computation}. Our work, then, is a stepping stone
toward more uniform notions of validity, and we believe
exploring approaches to this---perhaps by distributionally robust
optimization~\cite{DelageYe10, DuchiNa18}, perhaps by uniform
convergence arguments~\cite[Sec.~4]{BarberCaRaTi19a}---will be both
interesting and essential for trusting applications of statistical
machine learning.


\setlength{\bibsep}{2pt}
\bibliographystyle{plainnat} 
\bibliography{bib}

\appendix

\section{Technical proofs and appendices}


\subsection{Proof of Theorem~\ref{thm:oracle}}
\label{sec:proof-oracle}

Our proof adapts arguments similar to those that \citet{SesiaCa19}
use in the regression setting, repurposing and modifying them
for classification.
We have that $\ltwopxs{\what{s} - s} \cp 0$ and
$\ltwopxs{\what{q}_\alpha^\sigma - q^\sigma_\alpha} \cp 0$.
We additionally have that
\begin{equation}
  \what{Q} \defeq
  Q_{1-\alpha}(E^\sigma, \mc{I}_3)
  \cp 0.
  \label{eqn:quantile-error-zeros}
\end{equation}
This is item~(ii) in the proof of Theorem 1 of
\citet{SesiaCa19} (see Appendix~A of their paper), which proves the
convergence~\eqref{eqn:quantile-error-zeros} of the marginal
quantile of the error precisely when $\what{q}_\alpha^\sigma$ and
$\what{s}$ are $L^2$-consistent in probability (when
the randomized quantity
$s(x, Y) + \sigma Z$ has a density), as in our
Assumption~\ref{assumption:consistency-scores}.

Recalling that $\what{s}, \what{q}$ tacitly depend on the sample
size $n$, 
let $\epsilon > 0$ be otherwise arbitrary, and define the
sets
\begin{equation*}
  B_n \defeq \left\{
  x \in \mc{X} \mid \linfs{\what{s}(x, \cdot) - s(x, \cdot)} > \epsilon^2
  ~ \mbox{or} ~
  |\what{q}^\sigma_\alpha(x) - q^\sigma_\alpha(x)| > \epsilon^2
  \right\}.
\end{equation*}
Then $B_n \subset \mc{X}$ is measurable, and by Markov's inequality,
\begin{equation*}
  P_X(B_n)
  \le \frac{\ltwopxs{\what{s} - s}}{\epsilon}
  + \frac{\ltwopxs{\what{q}^\sigma_\alpha - q^\sigma_\alpha}}{\epsilon},
\end{equation*}
so
\begin{equation}
  \label{eqn:measure-of-bad-sets-zeros}
  \P(P_X(B_n) \ge \epsilon)
  \le \P(\ltwopxs{\what{s} - s} > \epsilon^2) + \P(\ltwopxs{
    \what{q}^\sigma_\alpha - q^\sigma_\alpha} > \epsilon^2) \to 0.
\end{equation}
Thus, the measure of the sets $B_n$ tends to zero in probability, i.e.,
$P_X(B_n) \cp 0$.

Now recall the shorthand~\eqref{eqn:quantile-error-zeros} that $\what{Q} =
Q_{1-\alpha}(E^\sigma, \mc{I}_3)$.  Let us consider the event that
$\what{C}^\sigma_{1-\alpha}(x, z) \neq C^\sigma_{1-\alpha}(x, z)$.  If this
is the case, then we must have one of
\begin{equation}
  \label{eqn:failure-confidence-equality}
  \begin{split}
    A_1(x, k, z)
    & \defeq \left\{
    \what{s}(x, k) + \sigma z \ge \what{q}^\sigma_\alpha(x)
    - \what{Q}
    ~ \mbox{and} ~ s(x, k) + \sigma z < q^\sigma_\alpha(x)
    \right\} ~~ \mbox{or} \\
    A_2(x, k, z)
    & \defeq \left\{
    \what{s}(x, k) + \sigma z < \what{q}^\sigma_\alpha(x)
    - \what{Q}
    ~ \mbox{and} ~ s(x, k) + \sigma z \ge q^\sigma_\alpha(x)
    \right\}.
  \end{split}
\end{equation}
We show that the probability of the set $A_1$ is small; showing that
the probability of set $A_2$ is small is similar.
Using the convergence~\eqref{eqn:quantile-error-zeros}, let us assume
that $|\what{Q}| \le \epsilon$, and
suppose that $x \not\in B_n$. Then for $A_1$ to occur, we must have
both
$s(x, k) + \epsilon + \sigma z \ge q_\alpha^\sigma(x) - 2 \epsilon$
and
$s(x, k) + \sigma z < q_\alpha^\sigma(x)$,
or
\begin{equation*}
  q_\alpha^\sigma(x) -s(x, k) -3 \epsilon
  \le \sigma z < q_\alpha^\sigma(x) - s(x, k).
\end{equation*}
As $Z$ has a bounded density, 
we have
$\limsup_{\epsilon \to 0}
\sup_{a \in \R} P_Z(a \le \sigma Z \le a + 3 \epsilon) = 0$,
or (with some notational abuse)
$\limsup_{\epsilon \to 0} \sup_{x \not \in B_n} P_Z(A_1(x, k, Z)) = 0$.

Now, let $\mc{F}_n = \sigma(\{X_i\}_{i=1}^n, \{Y_i\}_{i=1}^n,
\{Z_i\}_{i=1}^n)$ be the $\sigma$-field of the observed sample.  Then by the
preceding derivation (\emph{mutatis mutandis} for the set $A_2$ in
definition~\eqref{eqn:failure-confidence-equality}) for any $\eta > 0$,
there is an $\epsilon > 0$ (in the definition of $B_n$) such that
\begin{align*}
  \lefteqn{\sup_{x \in \mc{X}}
    \P\left(\what{C}_{1-\alpha}^\sigma (x, Z_{n+1}) \neq
    C_{1-\alpha}^\sigma(x, Z_{n+1}) \mid \mc{F}_n\right)
    \indic{x \not \in B_n} \indic{|\what{Q}| \le \epsilon}} \\
  & \qquad \le
  \sup_{x \not \in B_n}
  \sum_{k = 1}^K \P\left(A_1(x, k, Z_{n+1}) ~ \mbox{or}~
  A_2(x, k, Z_{n+1}) \mid \mc{F}_n\right)
  \indic{|\what{Q}| \le \epsilon} \le \eta.
\end{align*}
In particular, by integrating
the preceding inequality, we have
\begin{equation*}
  \P\left(\what{C}_{1-\alpha}^\sigma(X_{n+1}, Z_{n+1})
  \neq \what{C}(X_{n+1}, Z_{n+1}), X_{n+1} \not \in B_n,
  |\what{Q}| \le \epsilon \right) \le \eta.
\end{equation*}
As $\P(|\what{Q}| \le \epsilon) \to 1$ and $\P(X_{n+1} \not\in B_n) \to 1$
by the convergence guarantees~\eqref{eqn:quantile-error-zeros} and
\eqref{eqn:measure-of-bad-sets-zeros}, we have the theorem.

\subsection{Proof of Proposition~\ref{proposition:super-oracle}}
\label{sec:proof-super-oracle}

We wish to show that $\lim_{\sigma \to 0}
\P\left( C^\sigma_{1-\alpha}(X,Z) \subset
C_{1-\alpha}(X) \right) = 1$, which,
by a union bound over $k \in [K]$, is equivalent to showing
\begin{equation*}
  \P \left(s(X,k) + \sigma Z \geq q^\sigma_\alpha(X), s(X,k) < q_\alpha(X) \right)
  \underset{\sigma \to 0}{\longrightarrow} 0
\end{equation*}
for all $k \in [K]$.
Fix $k \in [K]$, and define the events
\begin{equation*}
  A \defeq \left\{ s(X,k) < q_\alpha(X) \right\}
\end{equation*} 
and 
\begin{equation*}
  B^\sigma \defeq \{ s(X,k) + \sigma Z \geq q^\sigma_\alpha(X) \}.
\end{equation*}
Now, for any $\delta > 0$, consider $A_\delta \defeq \{ s(X,k) \leq q_\alpha(X) - \delta \}$.
On the event $A_\delta \cap B^\sigma$, it must hold that
\begin{equation*}
  \delta \leq q_\alpha(X) - q_\alpha^\sigma(X) + \sigma Z.
\end{equation*}
The following lemma---whose proof we defer to
Section~\ref{subsec:proofs-lemmas-quantiles}---shows that the latter can
only occur with small probability.
\begin{lemma}
  \label{convergence-quantile-function}
  With probability 1 over $X$, the quantile function satisfies
  \begin{equation*}
    \liminf_{\sigma \to 0} q_\alpha^\sigma(X) \geq q_\alpha(X),
  \end{equation*}
  and hence $\limsup_{\sigma \to 0} \left\{ q_\alpha(X) - q_\alpha^\sigma(X)
  + \sigma Z \right\} \leq 0$ almost surely.
\end{lemma}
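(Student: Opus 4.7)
\medskip
\noindent
\textbf{Proof proposal for Lemma~\ref{convergence-quantile-function}.}
The plan is to reduce the lemma to a statement about convergence of the cumulative distribution functions of the randomized scores at their continuity points, and then to conclude via the defining property of the quantile function. Fix $x \in \mc{X}$, and let $F(z) \defeq P(s(x, Y) \le z \mid X = x)$ and $F_\sigma(z) \defeq P(s(x, Y) + \sigma Z \le z \mid X = x)$. Crucially, since $Y \in \{0,1\}^K$ takes at most $2^K$ values, $F$ is a step function with only finitely many jumps. Meanwhile, since $Z$ has a continuous density independent of $Y$, conditioning on $Y$ and using independence gives
\begin{equation*}
  F_\sigma(z) = \E\!\left[ P\!\left(Z \le \tfrac{z - s(x,Y)}{\sigma}\right) \;\Big|\; X = x \right],
\end{equation*}
and dominated convergence (together with $Z$ being a.s.\ finite) yields $F_\sigma(z) \to F(z)$ as $\sigma \to 0$ at every $z$ for which $P(s(x, Y) = z \mid X = x) = 0$, i.e., at every continuity point of $F$.

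With this convergence in hand, I argue by contradiction for the first claim. Suppose $\liminf_{\sigma \to 0} q^\sigma_\alpha(x) < q_\alpha(x)$, so there exist $\eps > 0$ and a sequence $\sigma_n \downarrow 0$ with $q^{\sigma_n}_\alpha(x) \le q_\alpha(x) - \eps$. Because $q^\sigma_\alpha(x)$ is the infimum of $\{z : F_\sigma(z) \ge \alpha\}$ and each $F_\sigma$ is non-decreasing, monotonicity gives $F_{\sigma_n}(q_\alpha(x) - \eps/2) \ge \alpha$ for all $n$. Since $F$ has finitely many jumps, I can perturb $\eps$ slightly (within $(0, \eps)$) so that $z_0 \defeq q_\alpha(x) - \eps/2$ is a continuity point of $F$. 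Passing to the limit via the CDF convergence yields $F(z_0) \ge \alpha$, but $z_0 < q_\alpha(x) = \inf\{z : F(z) \ge \alpha\}$ forces $F(z_0) < \alpha$, a contradiction. This establishes $\liminf_{\sigma \to 0} q^\sigma_\alpha(x) \ge q_\alpha(x)$ for every $x$ (hence a.s.).

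For the second conclusion, the first part yields $\limsup_{\sigma \to 0}(q_\alpha(X) - q^\sigma_\alpha(X)) \le 0$ almost surely, and since $Z$ is a.s.\ finite, $\sigma Z \to 0$ almost surely as well. Adding the two and using $\limsup(a_\sigma + b_\sigma) \le \limsup a_\sigma + \lim b_\sigma$ when one limit exists gives $\limsup_{\sigma \to 0}\{q_\alpha(X) - q^\sigma_\alpha(X) + \sigma Z\} \le 0$ almost surely.

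The main obstacle is the bookkeeping around the definition of the quantile as a one-sided infimum: one must leverage monotonicity of $F_\sigma$ to pass from an inequality on $q^\sigma_\alpha(x)$ to one on $F_\sigma$ at a strictly interior point, and then perturb that point into the continuity set of $F$ (using the discrete nature of $Y$) so that CDF convergence can actually be applied. Once those two maneuvers are done, the contradiction with the definition of $q_\alpha(x)$ is immediate, and the remainder of the lemma is routine.
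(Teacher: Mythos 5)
Your proof is correct and follows essentially the same route as the paper's: establish weak convergence of the CDF of $s(x,Y)+\sigma Z$ to that of $s(x,Y)$, then show $\liminf_{\sigma\to 0} F_{\sigma}^{-1}(\alpha) \ge F^{-1}(\alpha)$ by evaluating at a continuity point strictly below $q_\alpha(x)$, where $F<\alpha$ while $F_{\sigma}\ge\alpha$. The paper factors the second step through a general lemma on inverse-CDF convergence (using density of continuity points rather than finiteness of the jump set) and asserts the weak convergence without proof, which you instead verify directly via dominated convergence; these are presentational rather than substantive differences.
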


Lemma~\ref{convergence-quantile-function} implies that
\begin{equation*}
\P\left( q_\alpha(X) - q_\alpha^\sigma(X) + \sigma Z \geq \delta \right) \underset{\sigma \to 0}{\longrightarrow} 0,
\end{equation*}
which, in turn, shows that, for every fixed $\delta > 0$,
\begin{equation*}
\P \left( A_\delta \cap B^\sigma \right) \underset{\sigma \to 0}{\longrightarrow} 0.
\end{equation*}

To conclude the proof, fix $\epsilon > 0$. 
The event $A_\delta$ increases to $A$ when $\delta \to 0$,  so there exists $\delta>0$ so that
\begin{equation*}
\P\left( A \setminus A_\delta \right) \leq \epsilon.
\end{equation*}
Finally,
\begin{align*}
\limsup_{\sigma \to 0} \, &\P\left( A \cap B^\sigma \right) 
\leq \P\left( A \setminus A_\delta \right) + \limsup_{\sigma \to 0}  \P\left( A_\delta \cap B^\sigma \right) \leq \epsilon,
\end{align*}
as $\limsup_{\sigma \to 0} \P\left( A_\delta \cap B^\sigma \right) = 0$. 
We conclude the proof by sending $\epsilon \to 0$.

\subsubsection{Proof of Lemma~\ref{convergence-quantile-function}}
\label{subsec:proofs-lemmas-quantiles}

Fix $x \in \mc{X}$. Let $F_{\sigma,x}$ and $F_x$ be the respective
cumulative distribution functions of $s^{\sigma}(x,Y,Z)$ and $s(x,Y)$
conditionally on $X=x$, and define the (left-continuous) inverse CDFs
\begin{align*}
F_{\sigma,x}^{-1}(u)=\inf\{t \in \R : u \leq F_{\sigma,x}(t)\}
~~~ \mbox{and} ~~~
F^{-1}_x(u)=\inf\{t \in \R : u \leq F_x(t)\}.
\end{align*}
We use a standard lemma about the convergence of inverse CDFs, though
for lack of a proper reference, we
include a proof in
Section~\ref{subsec:proof-lemma-inverse_df_convergence}.

\begin{lemma}\label{lem:inverse_df_convergence}
  Let $(F_n)_{n \geq 1}$ be a sequence of cumulative distribution functions
  converging weakly to $F$, with inverses
  $F_n^{-1}$ and $F^{-1}$.  Then for each $u \in
  (0,1)$,
  \begin{equation}\label{inverse_df}
    F^{-1}(u) \leq \liminf_{n \to \infty} F_{n}^{-1}(u) \leq \limsup_{n \to
      \infty} F_{n}^{-1}(u) \leq F^{-1}(u+).
  \end{equation}
\end{lemma}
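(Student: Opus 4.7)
The plan is to reduce both inequalities to the well-known quantile duality for right-continuous CDFs:
\begin{equation*}
  F^{-1}(u) \leq t \quad \Longleftrightarrow \quad F(t) \geq u,
\end{equation*}
which holds because the set $\{s : F(s) \geq u\}$ is closed on the left (by right-continuity of $F$) and hence equals $[F^{-1}(u), \infty)$. I will then exploit the fact that, under weak convergence $F_n \Rightarrow F$, pointwise convergence $F_n(t) \to F(t)$ holds at every continuity point of $F$, and that the set of continuity points of $F$ is dense in $\R$.

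For the lower bound $F^{-1}(u) \leq \liminf_n F_n^{-1}(u)$, I would pick an arbitrary continuity point $t$ of $F$ with $t < F^{-1}(u)$. By the duality, this forces $F(t) < u$. Since $t$ is a continuity point, $F_n(t) \to F(t) < u$, so $F_n(t) < u$ for all large $n$, and the duality applied to $F_n$ gives $F_n^{-1}(u) > t$. Hence $\liminf_n F_n^{-1}(u) \geq t$, and letting $t \uparrow F^{-1}(u)$ through continuity points (which are dense) yields the claimed bound.

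For the upper bound $\limsup_n F_n^{-1}(u) \leq F^{-1}(u+)$, I would fix $v > u$ and choose a continuity point $t$ of $F$ with $t > F^{-1}(v)$. Because $F^{-1}(v) = \inf\{s : F(s) \geq v\}$, there must exist $s \in (F^{-1}(v), t)$ with $F(s) \geq v$, so $F(t) \geq v > u$. Weak convergence then gives $F_n(t) \to F(t) > u$, so $F_n(t) \geq u$ for all large $n$, and the duality yields $F_n^{-1}(u) \leq t$. Taking $\limsup$, then letting $t \downarrow F^{-1}(v)$ through continuity points, gives $\limsup_n F_n^{-1}(u) \leq F^{-1}(v)$; sending $v \downarrow u$ produces $F^{-1}(u+)$ by definition.

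The main subtle point, and the step I would be most careful with, is the strict inequality $F(t) > u$ in the upper bound: weak convergence only gives $F_n(t) \to F(t)$, not $F_n(t) \geq u$ immediately, so one needs $F(t)$ to strictly exceed $u$ to conclude. Introducing the auxiliary threshold $v > u$ (rather than trying to argue directly at level $u$) is what creates this strict slack, and handles the case where $F$ is flat near $u$ so that $F^{-1}(u+) > F^{-1}(u)$. A minor edge case is when $F^{-1}(u) = -\infty$ (the lower bound is then vacuous) or $F^{-1}(u+) = +\infty$ (the upper bound is vacuous), both of which are handled by the same argument by interpreting continuity points of $F$ at $\pm \infty$ trivially.
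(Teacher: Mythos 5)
Your proof is correct and follows essentially the same route as the paper's: approximate $F^{-1}(u)$ from below (resp.\ $F^{-1}(u+)$ from above) by continuity points of $F$, which are dense, use weak convergence at those points, and conclude via the quantile duality. The only difference is that the paper proves just the first inequality and declares the last ``similar,'' whereas you spell out the upper-bound half---including the auxiliary level $v > u$ needed to obtain the strict inequality $F(t) > u$ before passing to the limit---and that detail is handled correctly.
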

\noindent
As $F_{\sigma,x}$ converges weakly to $F_x$ as $\sigma \to 0$,
Lemma~\ref{lem:inverse_df_convergence} implies that
\begin{equation*}
F^{-1}_x(\alpha) \leq \liminf_{\sigma \to 0} F_{\sigma,x}^{-1}(\alpha).
\end{equation*}
But observe that $q^{\sigma}_{\alpha}(x)=F_{\sigma,x}^{-1}(\alpha)$ and $q_{\alpha}(x)=F^{-1}_{x}(\alpha)$, so that we have the desired result
$q_{\alpha}(x) \leq \liminf_{\sigma \to 0} q^{\sigma}_{\alpha}(x)$.

\subsubsection{Proof of Lemma~\ref{lem:inverse_df_convergence}}
 \label{subsec:proof-lemma-inverse_df_convergence}
We prove only the first inequality, as the last inequality is similar.
Fix $u \in (0,1)$ and $\epsilon >0$. $F$ is right-continuous and
non-decreasing, so its set of continuity points is dense in $\R$; thus,
there exists a continuity point $w$ of $F$ such that $w < F^{-1}(u)
\leq w + \epsilon $.

Since $w < F^{-1}(u)$, it must hold that $F(w) < u$, by definition of
$F^{-1}$.  As $w$ is a continuity point of $F$, $\lim_{n \to \infty}
F_n(w) = F(w) < u$, which means that $F_n(w) < u$ for large enough $n$, or
equivalently, that $w < F_n^{-1}(u)$.
We can thus conclude that
\begin{equation*}
\liminf F_n^{-1}(u) \geq w \geq F^{-1}(u) - \epsilon.
\end{equation*}
Taking $\epsilon \to 0$ proves the first inequality.


\subsection{Efficient computation of maximal marginals
  for condition~\eqref{eqn:efficient-inner-outer-vecs}}
\label{sec:computation-of-min-max-tree}

\newcommand{\Tdown}{\mc{T}_{\textup{down}}}
\newcommand{\Tup}{\mc{T}_{\textup{up}}}

We describe a more or less standard dynamic programming approach to
efficiently compute the maximum marginals~\eqref{eqn:max-marginals}
(i.e.\ maximal values of a tree-structured
score $s : \mc{Y} = \{-1, 1\}^K \to \R$), referring to standard references on
max-product message passing~\cite{Dawid92a, WainwrightJo08, KollerFr09} for
more. Let $\mc{T}
= ([K], E)$ be a tree with nodes $[K]$ and undirected edges $E$, though we
also let $\textup{edges}(\mc{T})$ and $\textup{nodes}(\mc{T})$ denote the
edges and nodes of the tree $\mc{T}$. Assume
\begin{equation*}
  s(y) = \sum_{k = 1}^K \varphi_k(y_k) + \sum_{e = (k, l) \in E} \psi_e(y_k, y_l).
\end{equation*}
To compute the \emph{maximum marginals}
$\max_{y \in \mc{Y}} \{s(y) \mid y_k = \hat{y}_k\}$, we perform two message
passing steps on the tree: an upward and downward pass. Choose a node $r$
arbitrarily to be the root of $\mc{T}$, and let $\Tdown$ be the directed tree
whose edges are $E$ with $r$ as its root (which is evidently unique);
let $\Tup$ be the directed tree with all edges reversed from $\Tdown$.

A maximum marginal message passing algorithm then computes a single
downward and a single upward pass through each tree,
each in topological order of the tree. The downard messages
$m_{l \to k} : \{-1, 1\} \to \R$ are defined for $\hat{y} \in \{-1, 1\}$ by
\begin{equation*}
  m_{l \to k}(\hat{y}) = \max_{y_l \in \{-1, 1\}} \left\{
  \varphi_l(y_l) + \psi_{(l,k)}(y_l, \hat{y})
  + \sum_{i : (i \to l) \in \textup{edges}(\Tdown)} m_{i \to l}(y_l)
  \right\},
\end{equation*}
while the upward pass is defined similarly except that 
$\Tup$ replaces $\Tdown$.
After a single downward and upward pass through the tree,
which takes time $O(K)$,
the invariant of message passing on the tree~\cite[Ch.~13.3]{KollerFr09}
then guarantees that for each $k \in [K]$,
\begin{equation}
  \label{eqn:maxima-tree-scores}
  \max_{y \in \mc{Y}}
  \{s(y) \mid y_k = \hat{y}_k\}
  = \varphi_k(\hat{y}_k) + \sum_{e = (l, k) \in \mc{T}}
  m_{l \to k} (\hat{y}_k),
\end{equation}
where we note that there exists exactly one message to $k$ (whether from the
downward or upward pass) from each node $l$ neighboring $k$ in $\mc{T}$.

We can evidently then compute all of these maximal values (the sets
$\mc{S}_{\pm}$ in Eq.~\eqref{eqn:max-marginals}) simultaneously in time of
the order of the number of edges in the tree $\mc{T}$, or $O(K)$, as each
message $m_{l \to k}$ can appear in at most one of the
maxima~\eqref{eqn:maxima-tree-scores}, and there are $2K$ messages.

\subsection{Concentration of coverage quantities}
\label{sec:localization-vc-argument}

We sketch a derivation of
inequality~\eqref{eqn:conditional-probs-close}; see also \cite[Theorem
  5]{BarberCaRaTi19a} for related arguments.

We begin with a technical lemma that is the basis for our result.  In the
lemma, we abuse notation briefly, and let $\mc{F} \subset \mc{Z} \to \{0,
1\}$ be a collection of functions with VC-dimension $d$. We define $Pf =
\int f(z) dP(z)$ and $P_n f = \frac{1}{n} \sum_{i=1}^n f(Z_i)$, as is
standard.
\begin{lemma}[Relative concentration bounds,
    e.g.~\cite{BoucheronBoLu05}, Theorem 5.1]
  \label{lemma:relative-concentration}
  Let $\VC(\mc{F}) \le d$. There is a numerical constant $C$ such
  that for any $t > 0$, with probability at least $1 - e^{-t}$,
  \begin{equation*}
    \sup_{f \in \mc{F}}
    \left\{|Pf - P_n f| - C \sqrt{\min\{Pf, P_n f\} \frac{d \log n + t}{n}
      }\right\} \le C \frac{d \log n + t}{n}.
  \end{equation*}
\end{lemma}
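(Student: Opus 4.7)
The plan is to follow the classical route for relative VC bounds, combining symmetrization with a Bernstein-style bound and the Sauer-Shelah polynomial bound on the growth function.

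First I would apply the standard Giné-Zinn symmetrization trick: introduce an independent ghost sample $Z_1',\ldots,Z_n'$ with empirical measure $P_n'$, and reduce control of $\sup_{f \in \mc{F}}|Pf - P_n f|$ to controlling $\sup_{f \in \mc{F}}|P_n' f - P_n f|$ conditional on the combined sample $Z_{1:n} \cup Z_{1:n}'$. On the combined sample, only the restrictions of $f \in \mc{F}$ to the $2n$ points matter, and by the Sauer-Shelah lemma the number of distinct restrictions is at most $(2en/d)^d$, so after conditioning we have a union bound over at most $n^{O(d)}$ effective functions.

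Next I would apply Bernstein's inequality to each fixed $f$. Because $f$ is $\{0,1\}$-valued we have $\mathrm{Var}(f(Z)) \le P f (1 - P f) \le P f$, so Bernstein yields, with probability at least $1-e^{-s}$,
\begin{equation*}
  |Pf - P_n f| \le C_1\sqrt{\frac{P f \cdot s}{n}} + C_1 \frac{s}{n}.
\end{equation*}
Taking $s = d \log n + t$ and union-bounding over the $n^{O(d)}$ distinct dichotomies (after symmetrization) produces the one-sided bound with $Pf$ inside the square root. The trick to get $\min\{Pf, P_n f\}$ rather than just $Pf$ is the following localization step: once one establishes the relative bound with $Pf$ via Bernstein and the companion bound with $P_n f$ via the symmetrized side (both hold simultaneously up to the same constant), one can substitute to conclude that $Pf$ and $P_n f$ are comparable on the event in question, and then replace the argument of the square root by whichever is smaller, absorbing the difference into the $\frac{d\log n + t}{n}$ remainder.

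The main obstacle is the localization step that upgrades $\sqrt{Pf}$ to $\sqrt{\min\{Pf, P_nf\}}$; one needs to argue that the bound and its consequences are self-improving, which is the content of Vapnik-style relative deviation inequalities. This is exactly Theorem~5.1 of \citet{BoucheronBoLu05}, so at this point I would simply appeal to that reference rather than reproduce the detailed calculation. The remaining ingredients — symmetrization, Sauer-Shelah, Bernstein — are standard and combine routinely to give the stated bound.
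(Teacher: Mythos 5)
Your proposal is correct and follows essentially the same route as the paper: the paper likewise takes the two one-sided Vapnik-style relative deviation inequalities of Theorem~5.1 of \citet{BoucheronBoLu05} as its starting point and then carries out exactly the comparability/substitution step you describe (made explicit via an AM--GM choice of a parameter $\eta$) to trade $\sqrt{P_n f}$ for $\sqrt{Pf}$ and vice versa at the cost of an additive $(d\log n + t)/n$ remainder, yielding the $\min\{Pf, P_n f\}$ form. Your symmetrization/Sauer--Shelah/Bernstein sketch is extra machinery the paper does not reproduce, since it simply cites the reference for the normalized inequalities.
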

\begin{proof}
  By~\citet[Thm.~5.1]{BoucheronBoLu05} for $t > 0$, with probability at
  least $1 - e^{-t}$ we have
  \begin{equation*}
    \sup_{f \in \mc{F}} \frac{P f - P_n f}{\sqrt{Pf}}
    \le C \sqrt{\frac{d \log n + t}{n}}
    ~~ \mbox{and} ~~
    \sup_{f \in \mc{F}} \frac{P_n f - P f}{\sqrt{P_n f}}
    \le C \sqrt{\frac{d \log n + t}{n}}.
  \end{equation*}
  Let $\varepsilon_n = C \sqrt{(d \log n + t) / n}$ for shorthand. Then
  the second inequality is equivalent to the statement that
  for all $f \in \mc{F}$,
  \begin{equation*}
    P_n f - P f 
    \le \frac{1}{2 \eta} P_n f + \frac{\eta}{2} \varepsilon_n^2
    ~~ \mbox{for~all~} \eta > 0.
  \end{equation*}
  Rearranging the preceding display, we have
  \begin{equation*}
    \left(1 - \frac{1}{2 \eta}\right) (P_n f - P f)
    \le \frac{1}{2 \eta} P f + \frac{\eta}{2} \varepsilon_n^2.
  \end{equation*}
  If $\sqrt{Pf} \ge \varepsilon_n$, we set
  $\eta = \sqrt{Pf} / \varepsilon_n$ and obtain
  $\half (P_n f - Pf) \le \sqrt{P f} \varepsilon_n$, while if
  $\sqrt{Pf} < \varepsilon_n$, then setting $\eta = 1$ yields
  $\half (P_n f - P f) \le \half (Pf + \varepsilon_n^2)
  \le \half (\sqrt{Pf} \varepsilon_n + \varepsilon_n^2)$. In either case,
  \begin{equation*}
    P_n f - P f \le C \left[ \sqrt{P f \frac{d \log n + t}{n}}
      + \frac{d \log n + t}{n} \right].
  \end{equation*}
  A symmetric argument replacing each $P_n$ with $P$ (and vice
  versa) gives the lemma.
\end{proof}

We can now demonstrate
inequality~\eqref{eqn:conditional-probs-close}.  Let $V \subset \R^d$ and
$\mc{V} \defeq \{\{x \in \R^d \mid v^T x \le 0\}\}_{v \in V}$
the collection of halfspaces it induces.
The collection $\mc{S} =
\{S_{v,a,b}\}_{v \in V, a < b}$ of slabs has VC-dimension $\VC(\mc{S}) \le
O(1) \VC(\mc{V})$. Let $f : \mc{X} \times \mc{Y} \to \R$ and $c : \mc{S} \to \R$
be arbitrary functions. If for $S \subset \mc{X}$ we define $S^+
\defeq \{(x, y) \mid x \in S, f(x, y) \ge c(S)\}$
and the collection
$\mc{S}^+ \defeq \{S^+ \mid S \in \mc{S}\}$, then $\VC(\mc{S}^+) \le
\VC(\mc{S}) + 1$~\cite[Lemma~5]{BarberCaRaTi19a}.  As a consequence,
for the conformal sets inequality~\eqref{eqn:conditional-probs-close}
specifies, for any $t > 0$ we have with
probability at least $1 - e^{-t}$ that
\begin{align*}
  \lefteqn{
    \left|P_n(Y \in \what{C}(X), X \in S_{v,a,b}) - P(Y \in \what{C}(X),
    X \in S_{v, a, b})\right|}  \\
  & \le O(1) \left[\sqrt{
      \min\{P(Y \in \what{C}(X), X \in S_{v,a,b}),
      P_n(Y \in \what{C}(X), X \in S_{v, a, b})\}
      \frac{\VC(\mc{V}) \log n + t}{n}}\right]\\
      & \hspace{4in} +O(1)\left[ \frac{\VC(\mc{V}) \log n + t}{n}\right]
\end{align*}
simultaneously for all $v \in V, a < b \in \R$, and similarly
\begin{align*}
  \lefteqn{\left|P_n(X \in S_{v,a,b}) - P(X \in S_{v,a,b})\right|} \\
  & \le O(1) \left[\sqrt{\min\{P(X \in S_{v,a,b}),
      P_n(X \in S_{v,a,b})\} \frac{\VC(\mc{V}) \log n + t}{n}}
    + \frac{\VC(\mc{V}) \log n + t}{n} \right].
\end{align*}

Now, we use the following simple observation. For any $\varepsilon$ and
nonnegative $\alpha, \beta, \gamma$ with $\alpha \le \gamma$ and
$2|\varepsilon| \le \gamma$,
\begin{equation*}
  \left|\frac{\alpha}{\gamma + \varepsilon} - \frac{\beta}{\gamma}\right|
  \le \frac{|\alpha - \beta|}{\gamma}
  + \frac{|\alpha \epsilon|}{\gamma^2 + \gamma \epsilon}
  \le \frac{|\alpha - \beta|}{\gamma}
  + \frac{2 |\epsilon|}{\gamma}.
\end{equation*}
Thus, as soon as $\delta \ge \frac{\VC(\mc{V}) \log n + t}{n}$,
we have with probability at least $1 - e^{-t}$ that
\begin{align*}
  \lefteqn{|P_n(Y \in \what{C}(X) \mid X \in S_{v,a,b})
    - P(Y \in \what{C}(X) \mid X \in S_{v,a,b})|} \\
  & = \left|\frac{P_n(Y \in \what{C}(X), X \in S_{v, a, b})}{
    P_n(X \in S_{v,a,b})}
  - \frac{P(Y \in \what{C}(X), X \in S_{v,a,b})}{P(X \in S_{v,a,b})}\right| \\
  & \le O(1)
  \left[\sqrt{\frac{\VC(\mc{V}) \log n + t}{\delta n}}
    + \frac{\VC(\mc{V}) \log n + t}{\delta n}\right]
\end{align*}
simultaneously for all $v \in V, a < b \in \R$,
where we substitute $\gamma = P(X \in S_{v,a,b})$,
$\alpha = P(Y \in \what{C}(X), X \in S_{v,a,b})$,
$\beta = P_n(Y \in \what{C}(X), X \in S_{v,a,b})$, and
$\varepsilon = (P_n - P)(X \in S_{v,a,b})$.
Note that if $\frac{\VC(\mc{V}) \log n + t}{\delta n} \ge 1$, the bound
is vacuous in any case.


\section{Additional Figures}



\begin{figure}[h]
 \centering
  \begin{overpic}[scale=0.72]{
     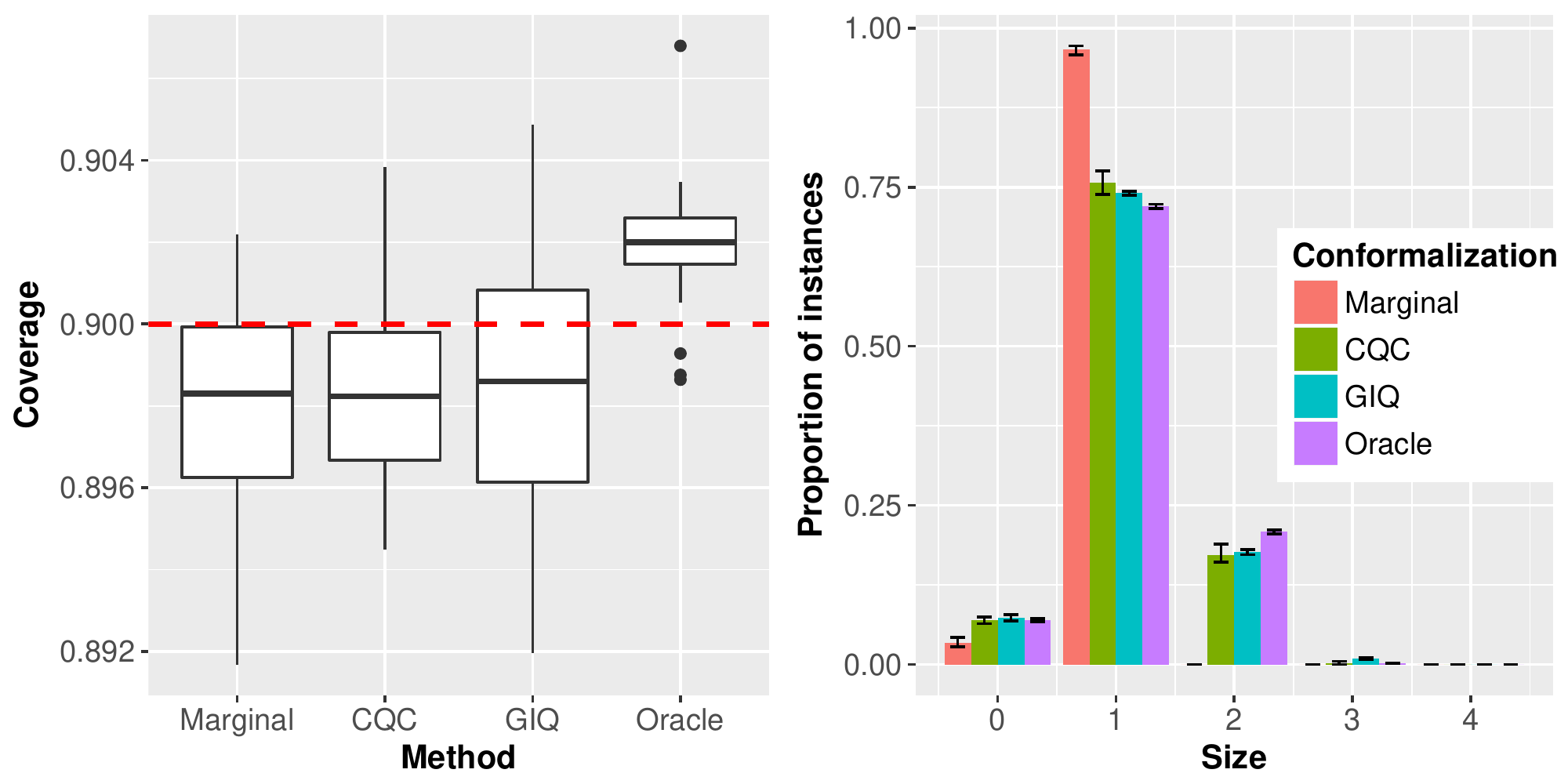}
    \put(0,10){
      \tikz{\path[draw=white, fill=white] (0, 0) rectangle (.4cm, 6cm)}
    }
    \put(0,20){\rotatebox{90}{
        \small $\P(Y \in \hat{C}(X))$}
    }
    \put(15, 0){
      \tikz{\path[draw=white, fill=white] (0, 0) rectangle (4cm, .4cm)}
    }
    \put(70, 0){
      \tikz{\path[draw=white, fill=white] (0, 0) rectangle (4cm, .5cm)}
    }
    \put(50, 13){
      \tikz{\path[draw=white, fill=white] (0, 0) rectangle (.4cm, 5cm)}
    }
	\put(50,20){\rotatebox{90}{
        \small $\P(|\hat{C}(X)| = t)$}
    }    
    \put(78, 1){
        \small $t$}  
  \end{overpic}
  \caption{Marginal coverage and distribution of the confidence set size in the multiclass simulation~\eqref{eqn:dataset-multiclass-simulated} over $M=20$ trials.}
  \label{fig:multiclass-averagesize-coverage}
\end{figure}

\begin{figure}[h]
 \centering
  \begin{overpic}[
  				scale=0.6]{%
     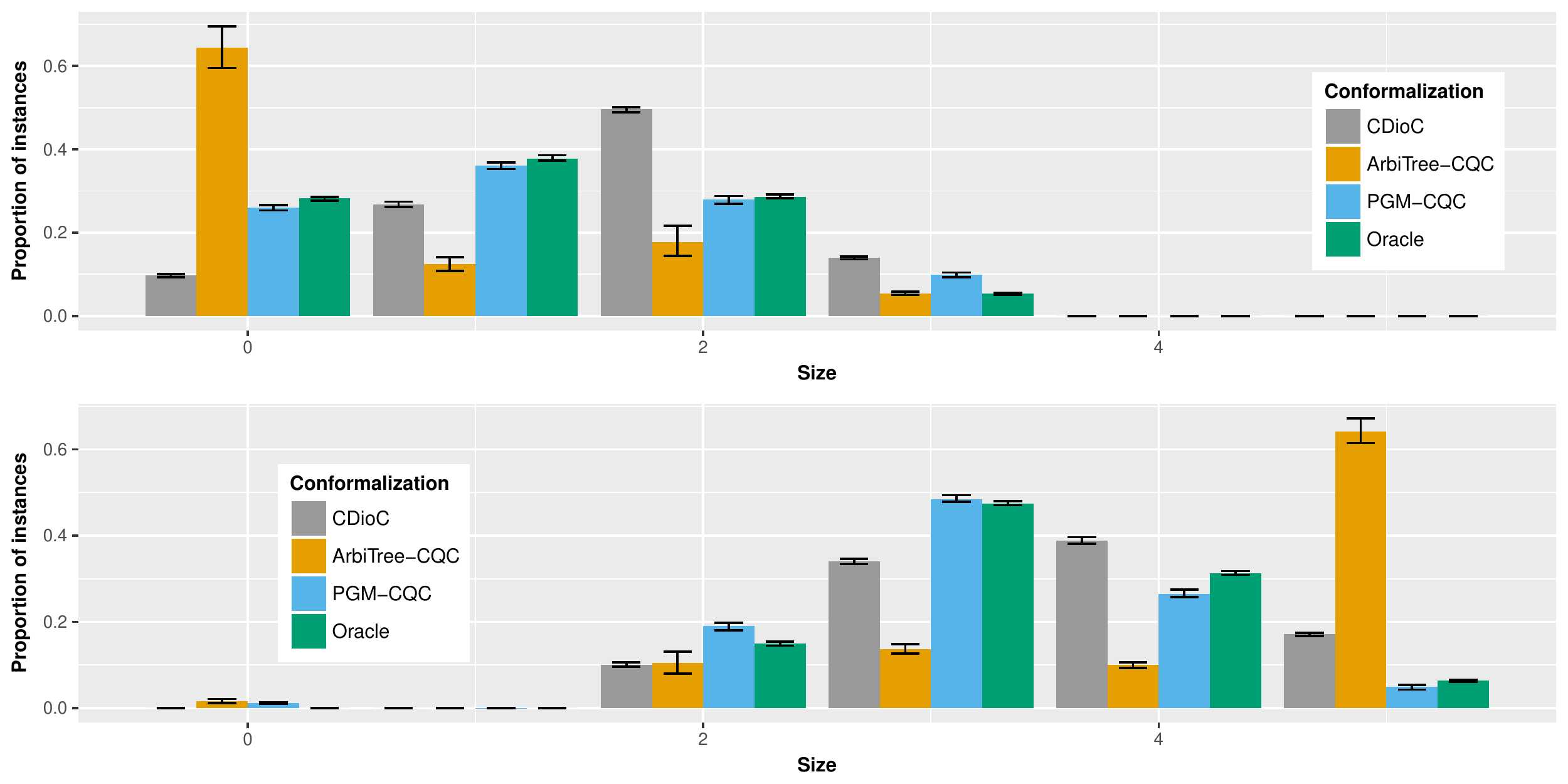}
    \put(-1,30){
      \tikz{\path[draw=white, fill=white] (0, 0) rectangle (.3cm, 6cm)}
    }
    \put(-1,5){
      \tikz{\path[draw=white, fill=white] (0, 0) rectangle (.3cm, 6cm)}
    }
    \put(-1,5){\rotatebox{90}{
        \small $\P( \left|\yout(X)\right| = t)$}
    }
    \put(-1,31){\rotatebox{90}{
        \small $\P( \left|\yin(X)\right| = t)$}
    }   
    \put(50, 0){
      \tikz{\path[draw=white, fill=white] (0, 0) rectangle (4cm, .3cm)}
    }        
    \put(50, 25){
      \tikz{\path[draw=white, fill=white] (0, 0) rectangle (4cm, .3cm)}
    }  
    \put(43,0){
        \small Outer set size $t$
    }    
        \put(43,25){
        \small Inner set size $t$
    }    

  \end{overpic}
  \caption{Simulated multilabel experiment with label
    distribution~\eqref{eqn:multilabel-logreg}.  Methods are the true oracle
    confidence set; the conformalized direct inner/outer method (CDioC),
    Alg.~\ref{alg:direct-inner-outer}; and tree-based methods with implicit
    confidence sets $\Cimplicit$ or explicit inner/outer sets $\Cinout$,
    labeled \scorelesstreeshort and \pgmtreeshort~(see description in
    Sec.~\ref{subsec:experiments-multilabel-simulation}).
    Top: distribution of the inner sets $\yin$ sizes.
    Bottom: distribution of the outer sets $\yout$ sizes.
  }
  \label{fig:multilabel-simulation-size-distribution}
\end{figure}


\begin{figure}
 \centering
  \begin{overpic}[
  				scale=0.60]{%
     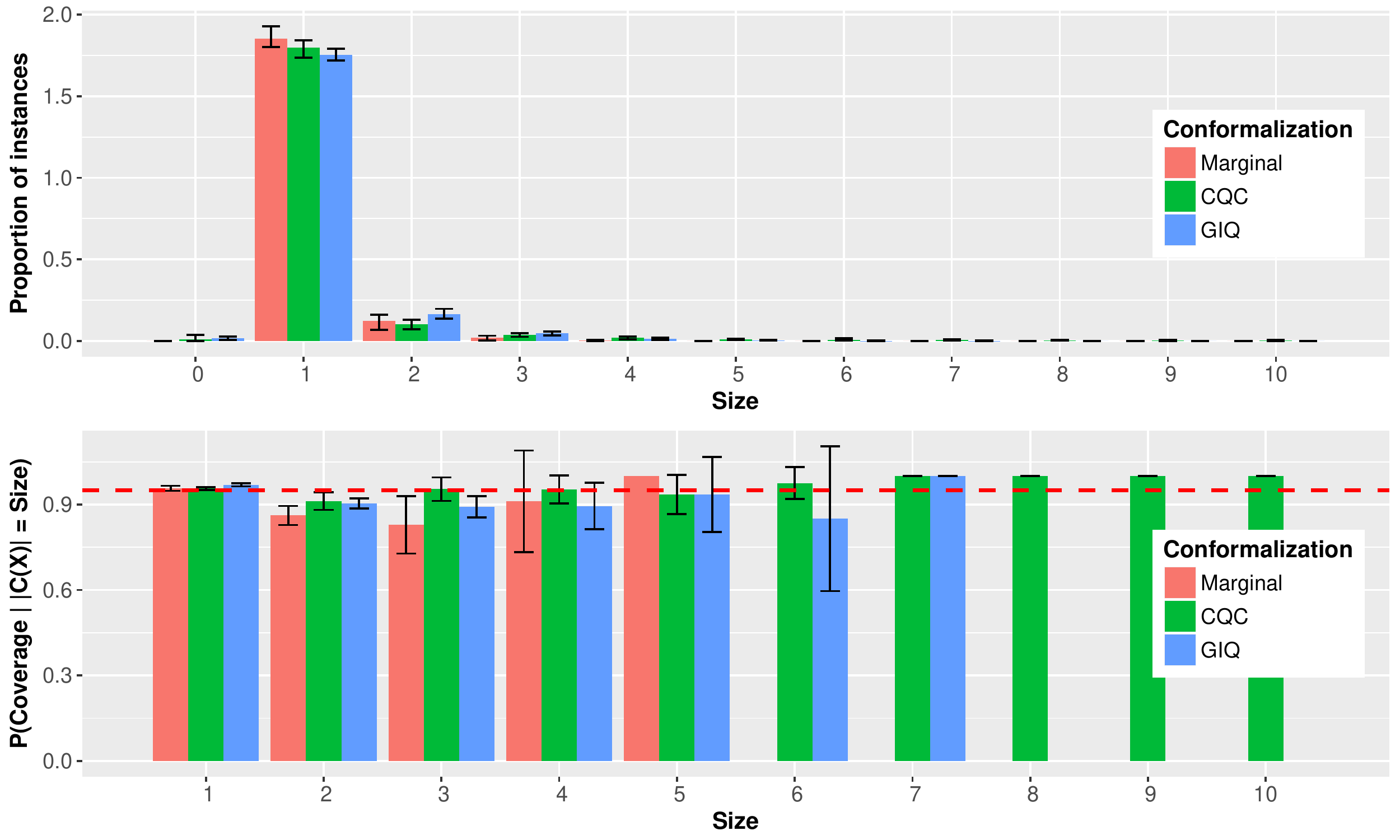}
    \put(-1,30){
      \tikz{\path[draw=white, fill=white] (0, 0) rectangle (.4cm, 6cm)}
    }
    \put(-1,0){
      \tikz{\path[draw=white, fill=white] (0, 0) rectangle (.4cm, 6cm)}
    }
    \put(-1,3){\rotatebox{90}{
        \small $\P( Y \in \what{C}(X) \mid |\what{C}(X)|=t)$}
    }
    \put(-1,40){\rotatebox{90}{
        \small $\P( |\what{C}(X)|=t)$}
    }   
    \put(50, 0){
      \tikz{\path[draw=white, fill=white] (0, 0) rectangle (4cm, .3cm)}
    }
    \put(50, 30){
      \tikz{\path[draw=white, fill=white] (0, 0) rectangle (4cm, .3cm)}
    }
    \put(40,0){
        \small Confidence set size $t$
    } 
        \put(40,30){
        \small Confidence set size $t$
    }
  \end{overpic}
  \caption{Results for CIFAR-10 dataset over $M=20$ trials. Methods are the marginal method (Marginal, procedure~\ref{eqn:marginal-method}), the CQC method (Alg.~\ref{alg:cqc}), and the GIQ method (Alg. 1,~\cite{RomanoSeCa20}).
    Top: Distribution of the confidence set size $|\what{C}(X)|$.
    Bottom: Probability of coverage conditioned on the size of the confidence set.
  }
  \label{fig:cifar-sizecdf}
\end{figure}

\begin{figure}[h]
 \centering
  \begin{overpic}[
  				scale=0.72]{%
     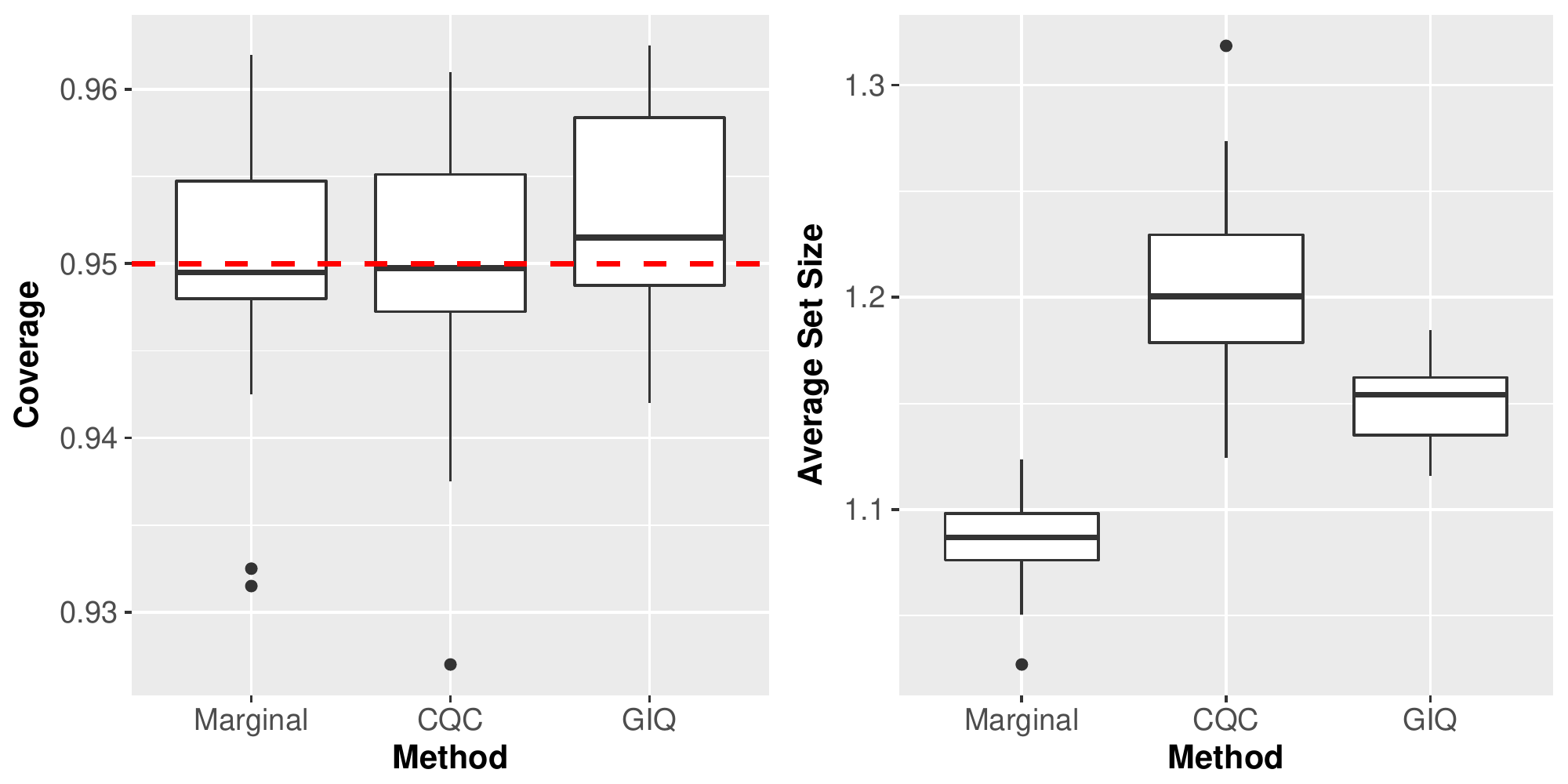}
    \put(0,10){
      \tikz{\path[draw=white, fill=white] (0, 0) rectangle (.4cm, 6cm)}
    }
    \put(0,20){\rotatebox{90}{
        \small $\P(Y \in \hat{C}(X))$}
    }
    \put(15, 0){
      \tikz{\path[draw=white, fill=white] (0, 0) rectangle (4cm, .4cm)}
    }
    \put(70, 0){
      \tikz{\path[draw=white, fill=white] (0, 0) rectangle (4cm, .4cm)}
    }
    \put(48, 13){
      \tikz{\path[draw=white, fill=white] (0, 0) rectangle (.6cm, 5cm)}
    }
    \put(49, 22){\rotatebox{90}{
        \small $\E\left|\hat{C}(X)\right|$}}
  \end{overpic}
  \caption{Marginal coverage and average confidence set size on
    CIFAR-10 over $M=20$ trials.}
  \label{fig:cifar10-averagesize-coverage}
\end{figure}

\begin{figure}
 \centering
  \begin{overpic}[
  				scale=0.7]{%
     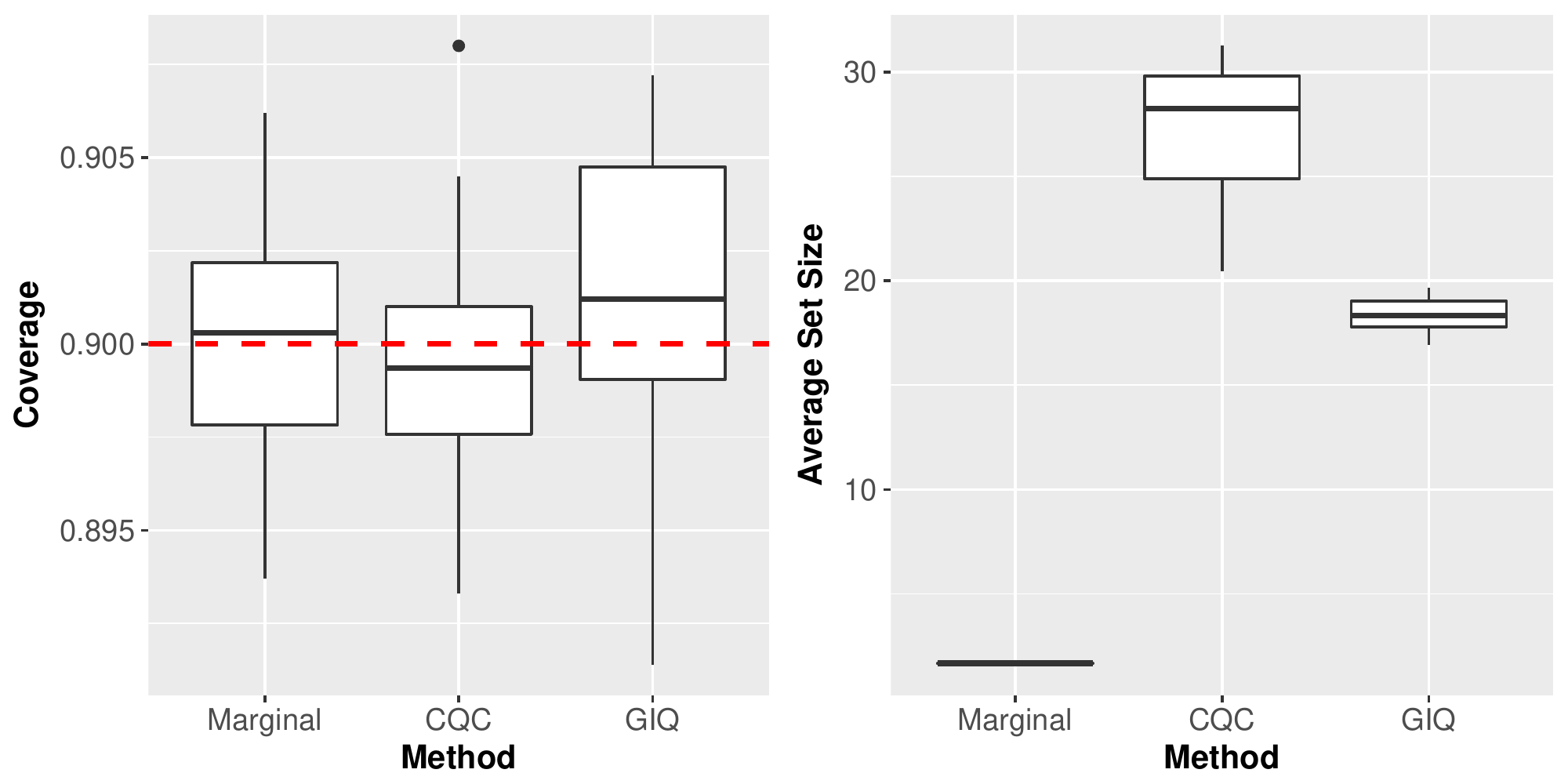}
    \put(0,10){
      \tikz{\path[draw=white, fill=white] (0, 0) rectangle (.4cm, 6cm)}
    }
    \put(0,20){\rotatebox{90}{
        \small $\P(Y \in \hat{C}(X))$}
    }
    \put(15, 0){
      \tikz{\path[draw=white, fill=white] (0, 0) rectangle (4cm, .4cm)}
    }
    \put(70, 0){
      \tikz{\path[draw=white, fill=white] (0, 0) rectangle (4cm, .4cm)}
    }
    \put(48, 13){
      \tikz{\path[draw=white, fill=white] (0, 0) rectangle (.6cm, 5cm)}
    }
    \put(49, 22){\rotatebox{90}{
        \small $\E\left|\hat{C}(X)\right|$}}
  \end{overpic}
  \caption{Marginal coverage and average confidence set size on
    ImageNet over $M=20$ trials.}
  \label{fig:imagenet-averagesize-coverage}
\end{figure}

\begin{figure}
    \begin{overpic}[
  				scale=0.6]{%
     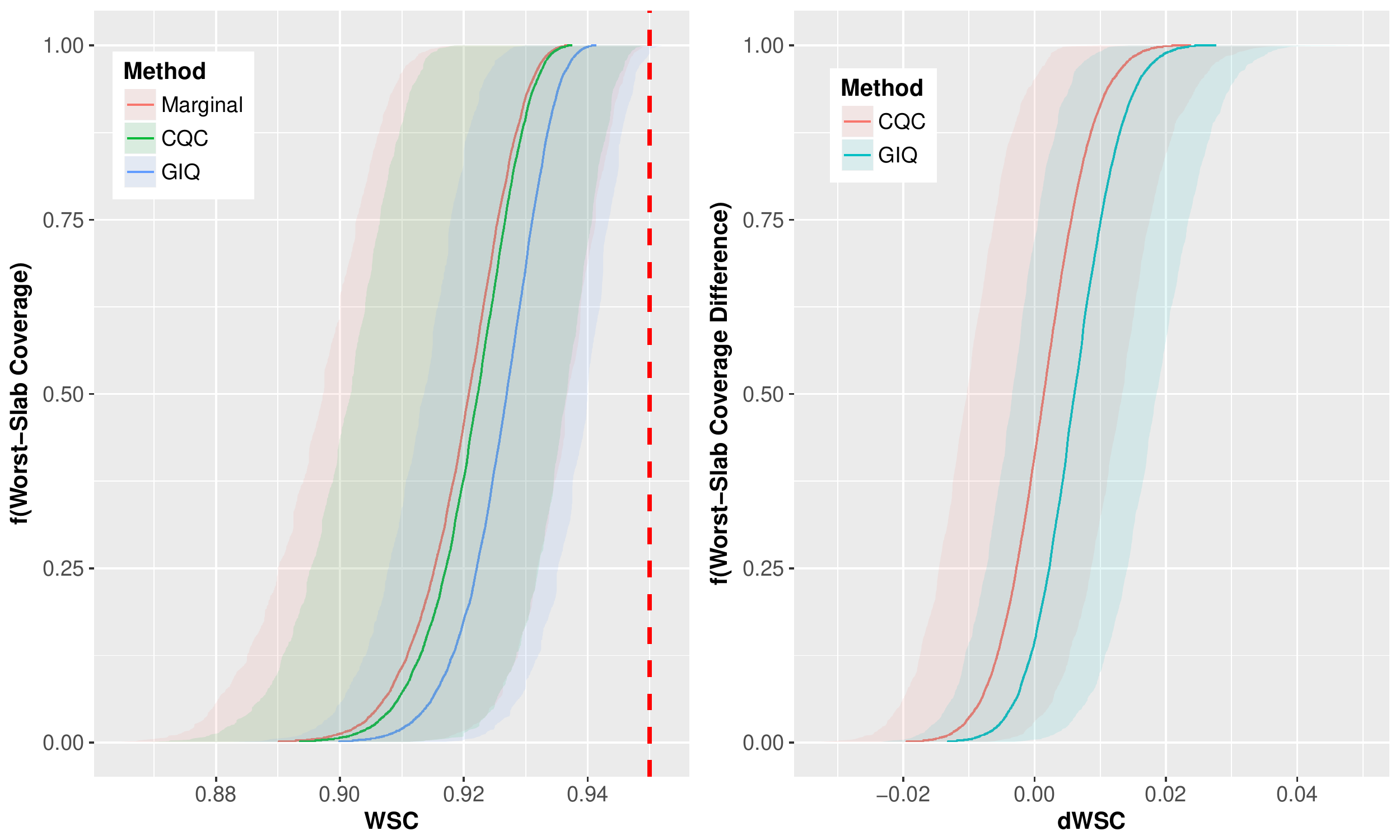}
    \put(-0.5,15){
      \tikz{\path[draw=white, fill=white] (0, 0) rectangle (.3cm, 6cm)}
    }
    \put(-1,12){\rotatebox{90}{
        \small $\P_v( \text{WSC}_n(\what{C}_{\text{Method}},v) \le t)$}
    }
    \put(15, 0){
      \tikz{\path[draw=white, fill=white] (0, 0) rectangle (4cm, .3cm)}
    }
        
    \put(60, 0){
      \tikz{\path[draw=white, fill=white] (0, 0) rectangle (4cm, .3cm)}
    }
    \put(49, 6){
      \tikz{\path[draw=white, fill=white] (0, 0) rectangle (.4cm, 7cm)}
    }
    \put(50, 6){\rotatebox{90}{
        \small $\P_v(\text{WSC}_n(\what{C}_{\textrm{Method}},v) -
    \text{WSC}_n(\what{C}_{\textrm{CDioC}},v) \le t) $}}
    
    \put(12, 0){
        \small Worst Coverage probability  $t$}
        
     \put(65, 0){
        \small Difference of coverage $t$}
    
  \end{overpic}
  
  \caption{Worst-slab coverage for CIFAR-10
    with $\delta=.2$ over $M = 1000$ draws
    $v \simiid \uniform(\sphere^{d-1})$.
    The dotted line is the desired (marginal) coverage.
    Left: distribution of worst-slab coverage.
    Right: distribution of the
    coverage difference $\text{WSC}_n(\what{C}_{\textrm{CQC}},v) -
    \text{WSC}_n(\what{C}_{\textrm{Marginal}},v)$.}
\label{fig:cifar10-worst-slab}
\end{figure}

  
\begin{figure}
 \centering
  \begin{overpic}[
  				scale=0.6]{%
     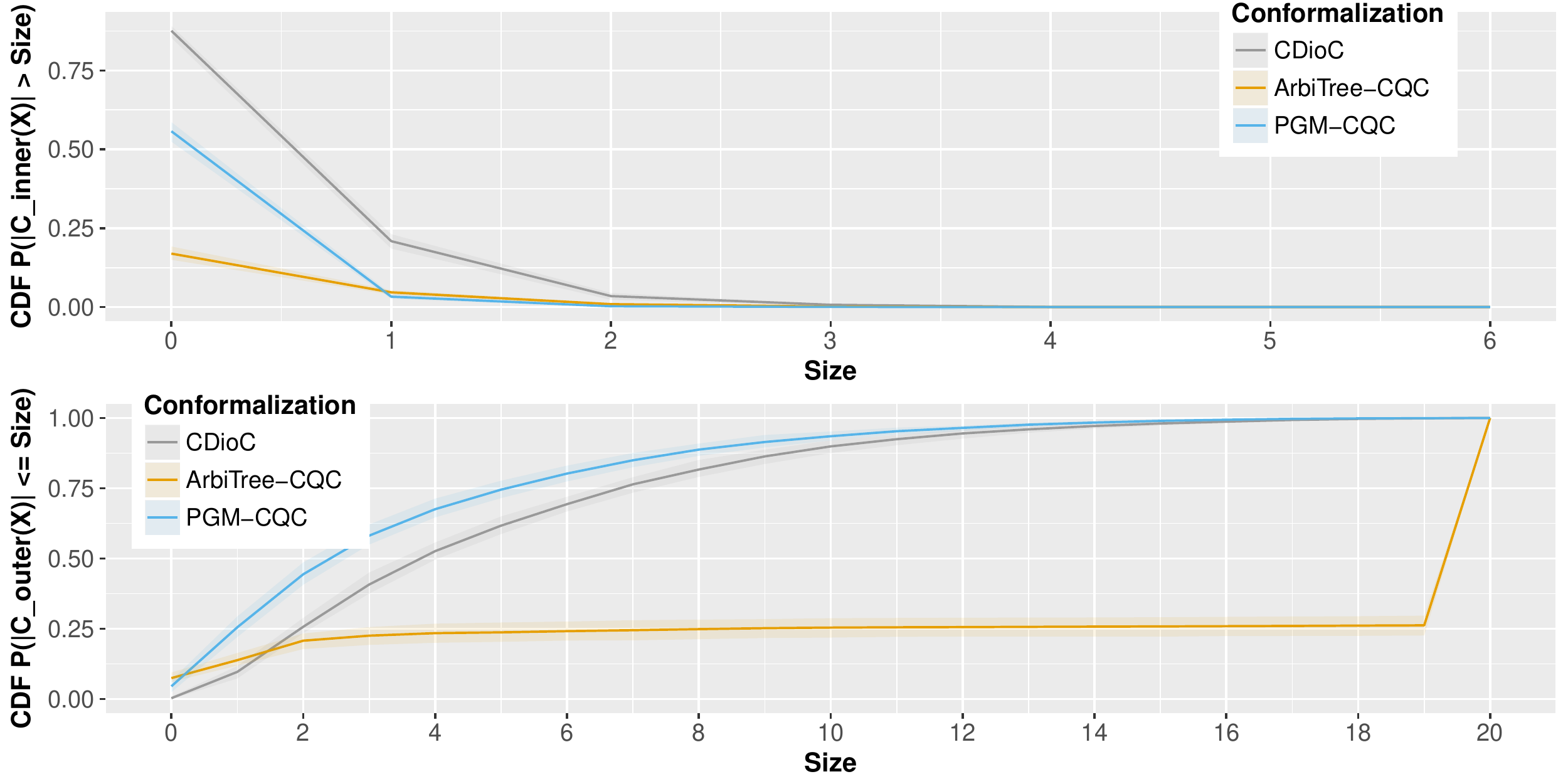}
    \put(-1,30){
      \tikz{\path[draw=white, fill=white] (0, 0) rectangle (.4cm, 6cm)}
    }
    \put(-1,0){
      \tikz{\path[draw=white, fill=white] (0, 0) rectangle (.4cm, 6cm)}
    }
    \put(0,5){\rotatebox{90}{
        \small $\P( \left|\yout(X)\right| \le t)$}
    }
    \put(0,31){\rotatebox{90}{
        \small $\P( \left|\yin(X)\right| > t)$}
    }   
    \put(50, 0){
      \tikz{\path[draw=white, fill=white] (0, 0) rectangle (4cm, .3cm)}
    }
    \put(50, 25){
      \tikz{\path[draw=white, fill=white] (0, 0) rectangle (4cm, .3cm)}
    }
    \put(43,0){
        \small Outer set size $t$
    } 
        \put(43,25){
        \small Inner set size $t$
    }

  \end{overpic}
  \caption{Results for PASCAL VOC dataset~\cite{EveringhamVaWiWiZi12} over $M=20$ trials. Methods are the conformalized direct inner/outer method (CDioC),
    Alg.~\ref{alg:direct-inner-outer}; and tree-based methods with implicit
    confidence sets $\Cimplicit$ or explicit inner/outer sets $\Cinout$,
    labeled \scorelesstreeshort and \pgmtreeshort (see description in
    Sec.~\ref{subsec:experiments-multilabel-simulation}).
    Top: $1-$ empirical c.d.f of sizes for the inner sets $\yin$ .
    Bottom: empirical c.d.f of sizes for the outer sets $\yout$.
  }
  \label{fig:multilabel-voc-size-distribution}
\end{figure}

\end{document}